\documentclass[11pt]{article}

\usepackage{arxiv}

\usepackage[utf8]{inputenc}
\usepackage[T1]{fontenc}
\usepackage{hyperref}
\usepackage{url}
\usepackage{microtype}

\usepackage{graphicx}
\usepackage{subcaption}
\usepackage{diagbox}
\usepackage{color}

\usepackage{amsmath,mathrsfs,amsfonts,amssymb,mathtools}
\usepackage{amsthm}
\usepackage{bm}
\usepackage{dsfont}

\newtheorem{assumption}{Assumption}
\newtheorem{proposition}{Proposition}
\newtheorem{lemma}{Lemma}
\newtheorem{remark}{Remark}
\newtheorem{theorem}{Theorem}
\newtheorem{corollary}{Corollary}

\usepackage{algorithm}
\usepackage[noend]{algpseudocode}

\usepackage{paralist}
\usepackage{todonotes}
\usepackage[normalem]{ulem}

\usepackage{natbib}

\newcommand{\argmax}{\operatorname*{arg\,max}}
\newcommand{\argmin}{\operatorname*{arg\,min}}
\DeclareMathOperator{\IF}{IF}

\title{Concave Statistical Utility Maximization Bandits via Influence-Function Gradients}

\author{
	Matías Carrasco \\
	Universidad ORT Uruguay \\
	Montevideo, Uruguay \\
	\texttt{carrasco\_m@ort.edu.uy} \\
	\And
	Alejandro Cholaquidis \\
	Facultad de Ciencias \\
	Universidad de la Rep\'ublica \\
	Montevideo, Uruguay \\
	\texttt{acholaquidis@cmat.edu.uy}
}

\date{}

\begin{document}
	
	\maketitle

\begin{abstract}%
\footnotesize
We study stochastic multi-armed bandits in which the objective is a statistical functional of the long-run reward distribution, rather than expected reward alone. Under mild continuity assumptions, we show that the infinite-horizon problem reduces to optimizing over stationary mixed policies: each weight vector \(w\) on the simplex induces a mixture law \(P^w\), and performance is measured by the concave utility \(U(w)=\mathfrak U(P^w)\).

For differentiable statistical utilities, we use influence-function calculus to derive stochastic gradient estimators from bandit feedback. This leads to an entropic mirror-ascent algorithm on a truncated simplex, implemented through multiplicative-weights updates and plug-in estimates of the influence function. We establish regret bounds that separate the mirror-ascent optimization error from the bias caused by estimating the influence function. The framework is developed for general concave distributional utilities and illustrated through variance and Wasserstein objectives, with numerical experiments comparing exact and plug-in influence-function implementations.
\end{abstract}

\textbf{Keywords:} 
mixture bandits; concave utility; influence function; stochastic mirror ascent

\section{Introduction}

Sequential decision-making under uncertainty is a central topic in machine learning. In the classical stochastic multi-armed bandit problem, a learner repeatedly selects one of finitely many arms and seeks to maximize cumulative expected reward while balancing exploration and exploitation; see \cite{BubeckCesaBianchi2012,lattimore_bandit_2020}. In many applications, however, the quality of a sampling strategy depends on the distribution of the collected rewards, through a statistical functional $\mathfrak U$, and not only on their mean. This occurs, for example, when one seeks to promote variability, control sensitivity to tails, or match a prescribed target distribution. Such objectives fall outside the standard mean-reward bandit framework; see, e.g., \cite{cassel2023general}.

In particular, for broad classes of nonstandard objectives, an optimal policy need not concentrate on a single best arm, but may instead involve a nontrivial mixture of arms with specific asymptotic sampling proportions. This observation suggests a natural distributional viewpoint: under a stationary randomized policy with weights $w$, the induced reward law is the mixture of the arm distributions associated with $w$, and the learning problem becomes the optimization of a statistical utility $\mathfrak U(P^w)$ over the simplex. However, for genuinely distribution-dependent utilities, classical mean-based bandit tools no longer apply directly; this has motivated a substantial literature on risk-aware and non-mean bandit criteria
\citep{sani_risk-aversion_2012,galichet2013exploration,kagrecha2019distribution,cassel2023general}.

In this work, we study a general class of bandit problems in which the objective is a statistical utility of the reward distribution induced by the learner's sampling strategy. We focus on the concave setting, which covers a broad family of distribution-sensitive criteria arising in machine learning and statistics, including the two main examples developed in the paper: variance-based objectives and Wasserstein-type objectives that seek to match a target distribution. From this perspective, the goal is to learn the mixture of arms whose induced reward law is most desirable according to the chosen criterion.

This distributional viewpoint is motivated by several types of applications. In educational assessment and clinical adaptive testing, heterogeneity across individuals is central: different response patterns can reveal different abilities, learning needs, or levels of impairment
\cite{schochet2014understanding,gibbons2016computerized}. In medicine, related motivations arise from disease heterogeneity \cite{zanin2018understanding,wallstrom2013biomarker}. Related ideas also arise within machine learning itself: Rezaei et al.\ show that, for diversity-oriented evaluation scores of generative models, the optimal decision may be a mixture of models rather than a single one \cite{rezaei2025be}. Wasserstein-type objectives provide a complementary example, since they naturally encode the goal of matching an induced distribution to a target one. This principle is central to applications of optimal transport in machine learning, see e.g.\ \cite{peyre2019computational}, and appears explicitly in Wasserstein losses, see e.g.\ \cite{frogner2015learning,arjovsky2017wasserstein}.

The main technical difficulty is that the usual bandit estimators exploit the fact that the mean objective decomposes into arm-wise contributions. This decomposition is lost for nonlinear objectives of the form $\mathfrak U(P^w)$: the first-order variation depends on the full mixture law $P^w$ and on how each arm distribution $P^k$ perturbs it. To handle this dependence, we rely on influence-function calculus; see, e.g., \cite{hampel1974,huber1981robust}. Under suitable differentiability assumptions, the local variation of the utility can be represented through an influence function, which in turn leads to a principled stochastic mirror-ascent procedure on the simplex \cite{cesa2021online}. We then develop a fully implementable plug-in version of this method, in which the unknown oracle score is replaced by an empirical estimate built from the observed rewards. Moreover, the influence-function representation naturally identifies a centered representative of the simplex gradient: subtracting the mixture average of the influence-function score acts as a baseline, analogously to baseline choices in policy-gradient estimators, see e.g.\ \cite{mei2022role}. This connection between influence functions and sequential learning is one of the main conceptual contributions of the paper.

We also provide a regret analysis that separates the optimization mechanism from the statistical error induced by score estimation, and we show empirically, on variance and Wasserstein examples, that the plug-in method remains competitive with its oracle counterpart across a range of synthetic instances.

Related work on bandits with nonstandard objectives includes risk-sensitive and beyond-cumulative formulations, where the performance criterion depends on more than the mean reward \cite{cassel2023general}. More recently, it has been shown that for broad families of risk- or preference-based objectives, optimal policies may require nontrivial mixtures of arms rather than repeated play of a single arm \cite{tatli2025risk,tatli2025preference}. Our contribution is complementary: rather than focusing on a specific objective, we develop a general first-order optimization framework for differentiable concave statistical utilities of the induced reward law. From the algorithmic viewpoint, our method is also related in spirit to mirror-descent and exponentiated-gradient methods on the simplex \cite{JuditskyNazinTsybakovVayatis2008ifac,ShalevShwartz2012,BeckTeboulle2003,cesa2021online}, but the gradient objects, the statistical estimators, and the analysis required here are specific to nonlinear utilities of mixture distributions.

The remainder of the paper is organized as follows. Section~\ref{sec:stat_utility_bandits} introduces the statistical-utility bandit model, proves the reduction to stationary mixed policies, and formulates the induced optimization problem over the simplex. Section~\ref{sec:simplex_gradients} sets up the differential calculus used to represent gradients on the simplex and relates these gradients to the softmax parametrization. Section~\ref{sec:if_gradients} derives the influence-function representation of the simplex gradient and introduces the corresponding unbiased and plug-in estimators under bandit feedback. Section~\ref{sec:algorithm} presents the resulting entropic mirror-ascent algorithm on the truncated simplex. Section~\ref{sec:regret_analysis} establishes regret bounds and separates the optimization error from the bias induced by estimating the influence function. Section~\ref{sec:applications} verifies the assumptions for two representative utilities, namely the variance utility in Section~\ref{sec:variance_utility} and the Wasserstein utility in Section~\ref{sec:wasserstein_utility}. Section~\ref{sec:experiments} reports numerical experiments comparing exact and plug-in influence-function implementations, and Section~\ref{sec:conclusion} concludes.

\subsection{Notation}

We use the following notation throughout the paper. Deterministic functions are denoted in standard math font (e.g.\ $U$, $s$, $J$), and deterministic statistical functionals in math gothic font (e.g.\ $\mathfrak U$). Deterministic points in Euclidean space are denoted by lower-case letters (e.g.\ $w,h,u$), and deterministic vectors in bold font (e.g.\ $\mathbf v$, $\mathbf g,\mathbf 1$). Random variables are denoted in sans serif font (e.g.\ $\mathsf A_t,\mathsf R_t,\mathsf U_\infty^\pi$), random points in bold sans serif lower-case font (e.g.\ $\mathsf w_t$, $\mathsf h_t$), and random vectors in bold sans serif font (e.g.\ $\boldsymbol{\mathsf G}_t,\boldsymbol{\mathsf H}_t,\boldsymbol{\mathsf B}_t$). 
With a slight abuse of notation, we use the same symbol for a probability measure and for its cumulative distribution function. Thus \(P(B)\) denotes the mass assigned to a Borel set \(B\subset \mathbb{R}\), while \(P(x)\) denotes the value of the corresponding cdf at \(x\in\mathbb{R}\).

\section{Statistical utility bandits}\label{sec:stat_utility_bandits}

In this section we introduce the statistical and bandit theoretic framework underlying our notion of utility maximizing bandits. Our focus is on policies whose goal is not to maximize expected reward, but rather to optimize a statistical functional of the long-run mixture distribution of observed rewards. Classical multi-armed bandit theory is recovered as the special case where the utility function is the mean. Our exposition follows the standard bandit notation (see for instance \cite{lattimore_bandit_2020}) combined with general statistical functionals ideas.

\subsection{Bandit sampling model}

We work in a fixed probability space \((\Omega,\mathcal{F},\mathbb{P})\). Arms are indexed by the finite set \(\mathcal{A}=\{1,\dots,K\}\). In each round \(t\geqslant 1\) an action \(\mathsf{A}_t\in \mathcal{A}\) is selected. Conditioned on \(\mathsf{A}_t=k\) the reward \(\mathsf{R}_t\in\mathbb{R}\) is drawn from a fixed distribution \(P^{k}\) with finite second moment. 
Formally
\(
\mathbb{P}(\mathsf{R}_t\in B \mid \mathcal{F}_t^{\,\text{act}})
=\mathbb{P}(\mathsf{R}_t\in B \mid \mathsf{A}_t)
=P^{\mathsf{A}_t}(B)
\) for every Borel set \(B\subset\mathbb{R}\), where
\[
\mathcal{F}_t^{\,\text{act}}=\sigma(\mathsf{A}_1,\mathsf{R}_1,\dots,\mathsf{A}_t),\qquad \text{ and }\qquad 
\mathcal{F}_t^{\,\text{rew}}=\sigma(\mathsf{A}_1,\mathsf{R}_1,\dots,\mathsf{A}_t,\mathsf{R}_t).
\]
are the history filtrations up to action $\mathsf{A}_t$ and reward $\mathsf{R}_t$ respectively. In particular, conditionally on the action sequence, the rewards are independent and \(\mathsf{R}_t \mid \{\mathsf{A}_t=k\} \sim P^k\).

A policy is a sequence of conditional action distributions
\[
\pi_t (k)=\mathbb{P}\left(\mathsf{A}_t=k\mid \mathcal{F}_{t-1}^{\,\text{rew}}\right),\quad k=1,\dots,K,\quad t\geqslant 1,
\]
where $\mathcal{F}_{0}^{\,\text{rew}}$ is the trivial $\sigma$-algebra. Given a policy \(\pi\), the empirical distribution of rewards up to time $t$ is
\begin{equation*}
	\mathsf{P}_t^\pi(x)=\frac{1}{t}\sum_{s=1}^t \mathds{1}_{\{\mathsf{R}_s\le x\}}.
\end{equation*}
For each individual arm \(k\), write
\(
\mathsf{N}_{t,k}=\sum_{s=1}^t \mathds{1}_{\{\mathsf{A}_s=k\}}
\)
for the number of pulls up to time \(t\). Define the empirical arm-specific distribution (up to time $t$) as follows: if $\mathsf N_{t,k}=0$ we select any probability distribution, otherwise 
\begin{equation}\label{eq:arm_empirical}
	\mathsf{P}_t^{k}(x)=\frac{1}{\mathsf{N}_{t,k}}\sum_{s=1}^t \mathds{1}_{\{\mathsf{A}_s=k\}}\mathds{1}_{\{\mathsf{R}_s\leqslant x\}},  \qquad\mathsf{N}_{t,k}>0.
\end{equation}
Then
\begin{equation}\label{eq:decomp_empirical}
	\mathsf{P}_t^\pi=\sum_{k=1}^K \frac{\mathsf{N}_{t,k}}{t}\mathsf{P}_t^{k}.
\end{equation}

\subsection{Distances and mixture distributions}

In what follows \(\mathcal{P}_2(\mathbb R)\) denotes the set of all Borel probability distributions on the real line with finite second moment. We endow \(\mathcal{P}_2(\mathbb R)\) with a distance \(d\) that satisfies the following two continuity conditions.
\begin{enumerate}
	\item Empirical distributions converge: If $\mathsf{P}_n$ is the empirical distribution of $n$ iid random variables with distribution $P$, then
	\begin{equation}\label{eq:dist_continuity_1}
		d(\mathsf{P}_n,P)\to 0 \quad\text{a.s.}
	\end{equation}
	when $n\to\infty$.
	\item The mixtures are continuous in weights and components: If $|\alpha_n - \alpha_n'| \to 0$ in $[0,1]$, $d(P_n, P_n') \to 0$, $d(Q_n, Q_n') \to 0$, and \(
	\sup_n d(P_n',Q_n')<\infty
	\), then
	\begin{equation}\label{eq:dist_continuity_2}
		d\Big(\alpha_n P_n + (1-\alpha_n)Q_n, \alpha_n'P_n'+(1-\alpha_n')Q_n'\Big)\to 0.
	\end{equation}
\end{enumerate}

Condition \eqref{eq:dist_continuity_2} is an abstract continuity requirement on mixtures. The next lemma gives a convenient sufficient criterion in terms of joint convexity of \(d^p\) and continuity with respect to the mixing coefficient. These conditions are straightforward to check in important examples, such as Wasserstein distances; see Remark~\ref{rem:wasserstein_mixtures}.

\begin{proposition}\label{prop:mixture_continuity}
	Assume that $d(\cdot, \cdot)^p$ is jointly convex for some $p\in [1,\infty)$, and that there exists a modulus of continuity
	\(
	\eta(x,\delta)
	\)
	such that
	\begin{enumerate}
		\item 
		\(
		d\left(\alpha P+(1-\alpha)Q,\alpha'P+(1-\alpha')Q\right)
		\leqslant
		\eta\left(d(P,Q),|\alpha-\alpha'|\right)
		\);
		\item for every $M<\infty$:
		\(
		\sup_{0\le x\le M}\eta(x,\delta)\to 0
		\)
		as $\delta\to 0$.
	\end{enumerate}
	Then $d$ satisfies condition \eqref{eq:dist_continuity_2}.
\end{proposition}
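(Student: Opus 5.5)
We split the distance into a term where only the mixing weight changes and a term where only the components change. By the triangle inequality,
\[
d\big(\alpha_n P_n+(1-\alpha_n)Q_n,\ \alpha_n'P_n'+(1-\alpha_n')Q_n'\big)\leqslant T_1+T_2,
\]
where
\[
T_1=d\big(\alpha_n P_n+(1-\alpha_n)Q_n,\ \alpha_n P_n'+(1-\alpha_n)Q_n'\big),\qquad
T_2=d\big(\alpha_n P_n'+(1-\alpha_n)Q_n',\ \alpha_n' P_n'+(1-\alpha_n')Q_n'\big).
\]
It then suffices to show that each term tends to zero.

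For $T_1$ we use joint convexity of $d^p$. We write the two mixtures as convex combinations of the pairs $(P_n,P_n')$ and $(Q_n,Q_n')$ with the same weights $\alpha_n,1-\alpha_n$. Joint convexity gives
\[
T_1^p\leqslant \alpha_n\, d(P_n,P_n')^p+(1-\alpha_n)\, d(Q_n,Q_n')^p\leqslant \max\{d(P_n,P_n')^p,\ d(Q_n,Q_n')^p\}.
\]
The right-hand side tends to zero by hypothesis, so $T_1\to0$. No uniformity in $\alpha_n$ is needed, because the bound is uniform in $\alpha\in[0,1]$.

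For $T_2$ we apply assumption 1 with $P=P_n'$, $Q=Q_n'$, $\alpha=\alpha_n$, $\alpha'=\alpha_n'$. This gives
\[
T_2\leqslant \eta\big(d(P_n',Q_n'),\ |\alpha_n-\alpha_n'|\big).
\]
Set $M=\sup_n d(P_n',Q_n')<\infty$ and $\delta_n=|\alpha_n-\alpha_n'|\to 0$. Then
\[
T_2\leqslant \sup_{0\le x\le M}\eta(x,\delta_n),
\]
and this tends to zero by assumption 2. Note that the bound must be taken on the primed components: the hypothesis of \eqref{eq:dist_continuity_2} controls $d(P_n',Q_n')$, not $d(P_n,Q_n)$. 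This is why the triangle inequality is arranged so that the weight-changing term $T_2$ involves $P_n'$ and $Q_n'$.

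Combining the two estimates yields \eqref{eq:dist_continuity_2}. The only delicate point is the bookkeeping just mentioned, namely routing the intermediate mixture through the primed measures so that the uniform bound on $\eta$ applies. Everything else follows directly from the hypotheses.
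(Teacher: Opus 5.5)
Your proposal is correct and follows essentially the same route as the paper's proof. You use the same triangle-inequality split into a component-changing term controlled by joint convexity of $d^p$ and a weight-changing term on the primed measures controlled by $\sup_{0\le x\le M}\eta(x,|\alpha_n-\alpha_n'|)$.
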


\begin{proof}
	Let $|\alpha_n - \alpha_n'| \to 0$ in $[0,1]$, $d(P_n, P_n') \to 0$, $d(Q_n, Q_n') \to 0$, and 
	\(
	M=\sup_n d(P_n',Q_n')<\infty
	\).
	We have
	\begin{align*}
		d\left(\alpha_n P_n+(1-\alpha_n)Q_n, \alpha_n'P_n'+(1-\alpha_n')Q_n'\right)
		&\leqslant
		d\left(\alpha_n P_n+(1-\alpha_n)Q_n,\alpha_n P_n'+(1-\alpha_n)Q_n'\right)\\
		&+
		d\left(\alpha_n P_n'+(1-\alpha_n)Q_n',\alpha_n'P_n'+(1-\alpha_n')Q_n'\right).
	\end{align*}
	We treat the two terms of the RHS separately. For the first term
	\begin{align*}
		d\left(\alpha_n P_n+(1-\alpha_n)Q_n, \alpha_n P_n'+(1-\alpha_n)Q_n'\right)^p
		&\leqslant
		\alpha_n d(P_n,P_n')^p+(1-\alpha_n)d(Q_n,Q_n')^p,
	\end{align*}
	and therefore tends to $0$. For the second term
	\[
	d\left(\alpha_n P_n'+(1-\alpha_n)Q_n', \alpha_n'P_n'+(1-\alpha_n')Q_n'\right)
	\leqslant
	\eta\left(d(P_n',Q_n'),|\alpha_n-\alpha_n'|\right).
	\]
	Since $M<\infty$ and $|\alpha_n-\alpha_n'|\to0$, this term tends to $0$ as well.
\end{proof}

\begin{remark}\label{rem:wasserstein_mixtures}
	For $d=W_p$ the Wasserstein $p$-distance on $\mathcal P_p(\mathbb R)$ (the set of probability distributions with finite $p$-moment), joint convexity of $d^p$ holds, and the continuity in the mixing coefficient is controlled by
	\[
	W_p\left(\alpha P+(1-\alpha)Q,\alpha'P+(1-\alpha')Q\right)
	\leqslant
	|\alpha-\alpha'|^{1/p}W_p(P,Q),
	\]
	see for instance \cite{Villani2009}.
\end{remark}

\subsection{Statistical utilities and stationary polices}

A statistical utility is a functional
\(
\mathfrak U:\mathcal P\to\mathbb R
\)
that is continuous with respect to \(d\).
Given a policy \(\pi\) define its asymptotic utility (or infinite horizon utility)
\[
\mathsf{U}^{\pi}_{\infty}=
\liminf_{t\to\infty} \mathfrak{U}\left(\mathsf{P}_t^\pi\right).
\]
We are interested in policies  $\pi^\star$ that maximize the expected asymptotic utility, that is,
\[
\pi^\star\in\argmax_{\pi} \mathbb E[\mathsf{U}^{\pi}_{\infty} ].
\]

For any $w$ in the probability simplex
\(
\Delta=\left\{w\in\mathbb{R}^K: w_k\geqslant 0, \sum_k w_k=1\right\}
\)
define the mixture reward distribution
\begin{equation}\label{eq:mixture}
	P^w=\sum_{k=1}^K w_k P^k.
\end{equation}

The following structural result shows that we can restrict ourselves to stationary policies without loss of optimality for the infinite-horizon utility maximization problem. The proof follows the same ideas used in \cite{cassel2023general}.

\begin{proposition} 
	\label{prop:optimal_stationary}
	There exists an optimal stationary policy for the infinite-horizon problem.
	More precisely
	\[
	\sup_{\pi}\mathbb E[\mathsf U^{\pi}_{\infty}]
	=
	\sup_{w\in\Delta}\mathfrak U(P^w).
	\]
	Moreover, for any maximizer $w^\star\in\argmax_{w\in\Delta}\mathfrak U(P^w)$ the stationary policy $\pi_t\equiv w^\star$ is optimal.
\end{proposition}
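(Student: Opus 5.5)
We prove the two inequalities separately. Stationary policies attain the supremum, and no policy can beat it.

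\emph{Lower bound: stationary policies achieve $\mathfrak U(P^w)$.} Fix $w\in\Delta$ and the stationary policy $\pi_t\equiv w$. Then the pairs $(\mathsf A_t,\mathsf R_t)$ are i.i.d., and the rewards $\mathsf R_t$ are i.i.d.\ with law $P^w$. By \eqref{eq:dist_continuity_1}, $d(\mathsf P_t^\pi,P^w)\to0$ almost surely. Continuity of $\mathfrak U$ with respect to $d$ then gives $\mathfrak U(\mathsf P_t^\pi)\to\mathfrak U(P^w)$ almost surely, so $\mathsf U_\infty^\pi=\mathfrak U(P^w)$ almost surely. Hence $\sup_\pi\mathbb E[\mathsf U_\infty^\pi]\ge\sup_{w}\mathfrak U(P^w)$. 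Once the reverse inequality is proved, the same computation applied to a maximizer $w^\star$ shows the stationary policy $\pi_t\equiv w^\star$ is optimal. A maximizer exists because $w\mapsto P^w$ is $d$-continuous by \eqref{eq:dist_continuity_2} (taking $P_n=P_n'$, $Q_n=Q_n'$ fixed and inducting over arms), so $w\mapsto\mathfrak U(P^w)$ is continuous on the compact set $\Delta$.

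\emph{Upper bound: any policy satisfies $\mathsf U^\pi_\infty\le\sup_w\mathfrak U(P^w)$ almost surely.} Fix an arbitrary policy $\pi$ and set $\mathsf w_t=(\mathsf N_{t,k}/t)_k\in\Delta$. By \eqref{eq:decomp_empirical}, $\mathsf P_t^\pi=\sum_k \mathsf w_{t,k}\mathsf P_t^k$. The plan is to show that almost surely
\[
d\bigl(\mathsf P^\pi_t,P^{\mathsf w_t}\bigr)\to0 .
\]
Granting this, choose a subsequence $t_j$ realizing the liminf such that $\mathsf w_{t_j}\to\bar w$, which is possible by compactness of $\Delta$. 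Continuity of $w\mapsto P^w$ and of $\mathfrak U$ then gives $\mathsf U_\infty^\pi=\lim_j\mathfrak U(\mathsf P^\pi_{t_j})=\mathfrak U(P^{\bar w})\le\sup_w\mathfrak U(P^w)$. Taking expectations finishes the proof. For this we need $\mathsf U_\infty^\pi$ to be bounded above, or at least integrable from above, and the a.s.\ bound already gives exactly this.

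\emph{Main obstacle: the convergence $d(\mathsf P^\pi_t,P^{\mathsf w_t})\to0$ under an adaptive policy.} Handle it arm by arm, using the stack-of-rewards construction. Realize the rewards as $\mathsf R_t=\mathsf X^{\mathsf A_t}_{\mathsf N_{t,\mathsf A_t}}$, where $(\mathsf X^k_n)_{n\ge1}$ are independent i.i.d.\ sequences with law $P^k$, independent of the policy's internal randomization. This does not change the joint law of actions and rewards, and the statement only concerns distributions. Then $\mathsf P^k_t$ is the empirical measure of the first $\mathsf N_{t,k}$ entries of the $k$-th stack. By \eqref{eq:dist_continuity_1}, applied on a single probability-one event for all $K$ stacks, $d(\mathsf P_t^k,P^k)\to0$ along every path on which $\mathsf N_{t,k}\to\infty$.

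Arms with $\mathsf N_{t,k}$ bounded have $\mathsf w_{t,k}\to0$, and their contributions must be absorbed. Here apply \eqref{eq:dist_continuity_2} inductively: write the $K$-component mixture as $\alpha P+(1-\alpha)Q$, peeling off one arm at a time. Take $\alpha_n=\alpha_n'$ equal to the empirical weights. For the components, use the empirical and true arm laws for frequently pulled arms. For rarely pulled arms, use any fixed distribution when $\mathsf N_{t,k}=0$; their vanishing weight must be controlled via the mixing-coefficient continuity, for instance by rewriting the arm's term with weight $\alpha_n\to0$ against weight $\alpha'_n=0$.

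The delicate points are the following.
\begin{itemize}
\item Verifying the boundedness hypothesis $\sup_n d(P_n',Q_n')<\infty$. It holds because the $P_n',Q_n'$ range over finitely many mixtures of the fixed $P^k$ with weights in a compact set, which is bounded by the continuity established above.
\item Arranging the induction so that each step's hypotheses hold simultaneously along the whole sequence. If needed, split along subsequences according to which arms have $\mathsf N_{t,k}\to\infty$. A contradiction argument along an arbitrary subsequence reduces to finitely many such cases.
\end{itemize}
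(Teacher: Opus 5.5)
Your proposal is correct and follows essentially the same route as the paper. The lower bound uses i.i.d.\ rewards under $\pi_t\equiv w$ together with \eqref{eq:dist_continuity_1}. The upper bound combines i.i.d.\ pull-indexed rewards for infinitely-pulled arms, vanishing weights for finitely-pulled arms, compactness of $\Delta$, the mixture continuity \eqref{eq:dist_continuity_2}, and continuity of $\mathfrak U$.

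The differences are only in bookkeeping. The paper first extracts a subsequence with $\mathsf w_{t_n}\to\mathsf w$ and compares $\mathsf P^\pi_{t_n}$ directly with the fixed limit $P^{\mathsf w}$, where the frozen empirical laws of finitely-pulled arms carry limit weight $0$. This makes the boundedness hypothesis in \eqref{eq:dist_continuity_2} trivial and makes your full-sequence claim $d(\mathsf P^\pi_t,P^{\mathsf w_t})\to0$ unnecessary. In return, your reward-stack coupling gives a cleaner justification of the i.i.d.\ pull-indexed rewards, which the paper simply asserts.
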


\begin{proof}
	Fix an arbitrary policy $\pi$.
	For each arm $k\in\{1,\dots,K\}$ define the total number of pulls
	\[
	\mathsf N_{\infty,k}=\sup_{t\geqslant 1}\mathsf N_{t,k}\in\mathbb N\cup\{\infty\}.
	\]
	For $m\geqslant 1$, let $\mathsf T_{k,m}$ be the (possibly infinite) random time of the $m$-th pull of arm $k$
	\[
	\mathsf T_{k,m}=\inf\{t\geqslant 1: \mathsf N_{t,k}=m\}\in\mathbb N\cup\{\infty\}.
	\]
	On the event $\{\mathsf N_{\infty,k}=\infty\}$ we have $\mathsf T_{k,m}<\infty$ for all $m\geqslant 1$.
	On this event define the pull-indexed rewards
	$\mathsf X_{k,m}=\mathsf R_{\mathsf T_{k,m}}$, $m\geqslant 1.$
	Then, on $\{\mathsf N_{\infty,k}=\infty\}$ the sequence $(\mathsf X_{k,m})_{m\geqslant 1}$ is iid with law $P^k$.
	Let $\mathsf Q_{k,m}$ denote the empirical distribution of $\mathsf X_{k,1},\dots,\mathsf X_{k,m}$.
	Under \eqref{eq:dist_continuity_1} we have
	\(
	d\left(\mathsf Q_{k,m},P^k\right)\mathds{1}_{\{\mathsf{N}_{\infty,k}=\infty\}}\to 0 
	\)
	a.s..
	Since $\mathsf P_t^k=\mathsf Q_{k,\mathsf N_{t,k}}$ for every $t$ with $\mathsf N_{t,k}>0$, then
	\begin{equation*}\label{eq:empirical_converges}
		d\left(\mathsf P_{t}^k,P^k\right)\mathds{1}_{\{\mathsf{N}_{\infty,k}=\infty\}}\to 0
		\quad \text{a.s.}
	\end{equation*}
	For each $t\geqslant 1$ define the (random) sampling proportions
	\[
	\mathsf w_t
	=
	\left(\frac{\mathsf N_{t,1}}{t},\dots,\frac{\mathsf N_{t,K}}{t}\right)\in\Delta.
	\]
	Since $\Delta$ is compact, there exists a subsequence $\mathsf t_n\uparrow\infty$
	and random weights $\mathsf{w}\in\Delta$ such that
	$\mathsf w_{\mathsf t_n}\to \mathsf{w}\in \Delta$ a.s.. 
	On the event $\left\{\mathsf N_{\infty,k}<\infty\right\}$ we have $\mathsf N_{\mathsf t_n,k}/\mathsf t_n\to0$ and $\mathsf P_{\mathsf t_n}^k = \mathsf P_{\mathsf N_{\infty,k},k}$ for all $n$ such that $\mathsf t_n\geqslant \mathsf N_{\infty,k}$, that is it becomes constant after the last pull.
	
	Putting it all together we have
	\begin{equation}\label{eq:empirical_both_cases}
		\begin{aligned}
			d\left(\mathsf P_{\mathsf t_n}^k,P^k\right)\mathds{1}_{\{\mathsf{N}_{\infty,k}=\infty\}}+d\left(\mathsf P_{\mathsf t_n}^k,\mathsf P_{\mathsf N_{\infty,k},k}\right)\mathds{1}_{\{\mathsf{N}_{\infty,k}<\infty\}}\to 0\text{ a.s.}\\
			\text{and}\\
			\frac{\mathsf N_{\mathsf t_n,k}}{\mathsf t_n}\to
			\begin{cases}
				\mathsf w_k\geqslant 0 & \text{on }\left\{\mathsf N_{\infty,k}=\infty\right\}\\
				\mathsf w_k=0 & \text{on }\left\{\mathsf N_{\infty,k}<\infty\right\}
			\end{cases}
		\end{aligned}
	\end{equation}
	From decomposition \eqref{eq:decomp_empirical},
	\eqref{eq:empirical_both_cases} and the mixture continuity assumption \eqref{eq:dist_continuity_2}, we get
	\(
	d\left(\mathsf P_{\mathsf t_n}^\pi,P^{\mathsf w}\right)\to 0
	\) a.s..
	Since $\mathfrak U$ is continuous w.r.t.\ $d$, this implies that
	$\mathfrak U\left(\mathsf P_{\mathsf t_n}^\pi\right)\to \mathfrak U\left(P^{\mathsf w}\right)$ a.s.. By the definition  of $\liminf$
	\[
	\mathsf U^{\pi}_{\infty}
	=\liminf_{t\to\infty}\mathfrak U\left(\mathsf P_t^\pi\right)
	\leqslant \lim_{n\to\infty}\mathfrak U\left(\mathsf P_{\mathsf t_n}^\pi\right)
	=\mathfrak U\left(P^{\mathsf w}\right)
	\leqslant \sup_{u\in\Delta}\mathfrak U\left(P^u\right) \quad\text{a.s.}
	\]
	Taking expectations yields $\mathbb E\left[\mathsf U^{\pi}_{\infty}\right]
	\leqslant \sup_{u\in\Delta}\mathfrak U\left(P^u\right).$ 
	Since $\pi$ was arbitrary $\sup_{\pi}\mathbb E\left[\mathsf U^{\pi}_{\infty}\right]\leqslant \sup_{u\in\Delta}\mathfrak U\left(P^u\right)$. Notice that since $\Delta$ is compact and $w\mapsto \mathfrak U(P^w)$ is continuous there is a maximizer $w^\star\in\Delta$.
	
	Finally, fix $w\in\Delta$ and consider the stationary policy $\pi_t\equiv w$.
	Under this policy rewards $(\mathsf R_t)_{t\geqslant 1}$ are iid with common
	distribution $P^w$. By the empirical convergence assumption
	\eqref{eq:dist_continuity_1}, we have
	$d\left(\mathsf P_t^\pi,P^w\right)\to 0$ a.s.\ as $t\to\infty.$ 
	Since $\mathfrak U$ is continuous with respect to $d$ it follows that $\mathfrak U\left(\mathsf P_t^\pi\right)\to \mathfrak U(P^w)$ a.s..
	Consequently $\mathsf U_\infty^\pi
	=\liminf_{t\to\infty}\mathfrak U\left(\mathsf P_t^\pi\right)
	=\mathfrak U(P^w)$ a.s. Therefore any maximizer
	$w^\star\in\argmax_{w\in\Delta}\mathfrak U(P^w)$ induces an optimal stationary policy.
\end{proof}

Proposition~\ref{prop:optimal_stationary} shows that it is sufficient to design algorithms that converge to an optimal sampling distribution,
\[
w^\star\in\argmax_{w\in\Delta}\mathfrak U(P^w).
\]
Thus, the learning problem can be viewed as the adaptive optimization of the induced utility \(U(w)=\mathfrak U(P^w)\) over the simplex. Before turning to the algorithmic analysis, we introduce the two main classes of statistical utilities considered in this paper.

\subsection{Concave statistical utilities over mixtures}

A stationary policy that selects arm $k$ with probability $w_k$ induces the reward mixture $P^w$. Such policies are evaluated through the induced utility
\begin{equation}\label{eq:utility_simplex}
	U(w) = \mathfrak U(P^w), \qquad w\in\Delta .
\end{equation}
We focus on the case where the induced utility $U:\Delta\to\mathbb R$ is concave on the simplex.
Concavity is a standard requirement in online first-order optimization. Similar hypotheses appear throughout the mirror-descent and online convex optimization literature; see, for instance, \cite{ShalevShwartz2012,BeckTeboulle2003,lattimore_bandit_2020}.
Many statistical functionals that quantify dispersion or distributional proximity lead to concave utilities over mixtures of arms. We now present two examples and specify appropriate choices of the metric $d$.

\subsubsection{Example 1: Variance utility}\label{sec:ex_variance_utility}

Let $\mathfrak U(P)=\mathbb{V}(\mathsf R)$ where $\mathsf R\sim P$. If $P^k$ has mean $\mu_k$ and variance $\sigma_k^2$ the variance of the mixture is given by
\[
U(w)
=\mathbb{V}(P^w)
=\sum_{k=1}^K w_k \sigma_k^2
+\sum_{k=1}^K w_k(\mu_k-\mu_w)^2,
\qquad
\mu_w=\sum_{k=1}^K w_k\mu_k .
\]
The first term is linear in $w$, while the second is the negative of a convex quadratic. Hence $U$ is concave on $\Delta$. Under bounded rewards the Wasserstein-1 distance is an adequate choice and continuity of the variance functional follows immediately. See Section \ref{sec:variance_utility} for more details.

\subsubsection{Example 2: Wasserstein proximity to a reference distribution}\label{sec:ex_wasserstein_utility}

Fix a reference distribution $Q$ and define
\(
\mathfrak U(P)=-W_2^2(P,Q)
\)
where $W_2$ denotes the Wasserstein-$2$ distance. In this setting we take $d=W_2$. The mapping $P\mapsto W_2^2(P,Q)$ is convex and continuous with respect to $d$. Therefore, the induced utility $U(w)=-W_2^2(P^w,Q)$ is concave on the simplex. See Section \ref{sec:wasserstein_utility} for more details.

In the remainder of the paper, we construct adaptive procedures for this type of objectives using gradient-based methods, influence-function estimators, and mirror-ascent theory; see \cite{BeckTeboulle2003,NemirovskiJuditskyLanShapiro2009,cesa2021online}.

\section{Gradients on the simplex}\label{sec:simplex_gradients}

We write $\mathring{\Delta}$ for the interior of $\Delta$. For an interior point $w\in\mathring{\Delta}$ the tangent space of $\Delta$ at $w$ is
\[
T_w\Delta = \left\{\mathbf v\in\mathbb{R}^K:\mathbf{1}^\top \mathbf v = 0\right\},
\]
where $\mathbf{1} = (1,\ldots,1)^\top$. Let $U:\Delta\to\mathbb{R}$ be a $C^1$ function and $w\in\mathring{\Delta}$. For any tangent direction $\mathbf v\in T_w\Delta$ the directional derivative of $U$ at $w$ in direction $\mathbf v$ is
\[
d_w U[\mathbf v] = \left.\frac{d}{d\epsilon}\,U(w+\epsilon \mathbf v)\right|_{\epsilon=0}.
\]
We call a vector $\mathbf g(w)\in\mathbb{R}^K$ a gradient on the simplex of $U$ at $w$ if
\begin{equation}\label{eq:gradient-simplex-def}
	d_wU[\mathbf v] = \langle \mathbf g(w), \mathbf v\rangle
	\quad\text{for every } \mathbf v\in T_w\Delta,
\end{equation}
where $\langle \cdot,\cdot\rangle$ is the standard Euclidean inner product on $\mathbb{R}^K$. The gradient in $\Delta$ according to this definition is only defined up to the addition of multiples of $\mathbf{1}$, as made more precise in the following.

\begin{lemma}
	Let $U:\mathring{\Delta}\to\mathbb{R}$ be a $C^1$ function and $w\in\mathring{\Delta}$. Then $\mathbf g(w)$ and $\mathbf g'(w)$ satisfy both \eqref{eq:gradient-simplex-def} if and only if
	$\mathbf g(w) - \mathbf g'(w) = \lambda(w)\mathbf{1}$ for some $\lambda:\mathring{\Delta}\to \mathbb{R}$.
\end{lemma}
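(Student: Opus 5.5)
The plan is to reduce the statement to a fact of linear algebra in $\mathbb{R}^K$ at a fixed point $w\in\mathring{\Delta}$. The key observation is that the tangent space $T_w\Delta=\{\mathbf v:\mathbf 1^\top\mathbf v=0\}$ is exactly the orthogonal complement $\mathbf 1^\perp$ of the line spanned by $\mathbf 1$. Since this subspace does not depend on $w$, the pointwise argument can afterwards be assembled into a function $\lambda:\mathring{\Delta}\to\mathbb{R}$ simply by setting $\lambda(w)$ to be the scalar found at each $w$. No regularity of $\lambda$ is claimed, so nothing further is needed beyond the pointwise statement. The $C^1$ assumption on $U$ only guarantees that $d_wU[\mathbf v]$ exists and is linear in $\mathbf v$; it plays no further role.

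For the ``if'' direction, assume $\mathbf g(w)$ satisfies \eqref{eq:gradient-simplex-def} and $\mathbf g'(w)=\mathbf g(w)-\lambda(w)\mathbf 1$. For any $\mathbf v\in T_w\Delta$ we have
\[
\langle \mathbf g'(w),\mathbf v\rangle=\langle \mathbf g(w),\mathbf v\rangle-\lambda(w)\mathbf 1^\top\mathbf v=d_wU[\mathbf v],
\]
so $\mathbf g'(w)$ also satisfies \eqref{eq:gradient-simplex-def}.

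For the ``only if'' direction, suppose both vectors satisfy \eqref{eq:gradient-simplex-def}, and set $\mathbf d=\mathbf g(w)-\mathbf g'(w)$. Subtracting the two identities gives $\langle\mathbf d,\mathbf v\rangle=0$ for all $\mathbf v\in\mathbf 1^\perp$, so $\mathbf d\in(\mathbf 1^\perp)^\perp=\operatorname{span}\{\mathbf 1\}$. To avoid invoking the double-complement identity abstractly, I would test with the explicit tangent vectors $\mathbf v=e_i-e_j$. These lie in $T_w\Delta$ and give $d_i=d_j$ for all $i,j$. Hence $\mathbf d=\lambda(w)\mathbf 1$ with $\lambda(w)=d_1$.

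There is no real obstacle here. The only point needing care is that the vectors $e_i-e_j$ are admissible directions, which holds because $w$ is interior: $w+\epsilon(e_i-e_j)\in\Delta$ for all small $|\epsilon|$, so the directional derivatives are well defined. The other is to define $\lambda$ pointwise so that it is a genuine function on $\mathring{\Delta}$.
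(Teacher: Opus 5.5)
Your proof is correct and follows the paper's argument essentially verbatim. The ``if'' direction uses $\langle \mathbf 1,\mathbf v\rangle=0$ for $\mathbf v\in T_w\Delta$, and the ``only if'' direction subtracts the two identities and tests with $\mathbf v=\mathbf e_i-\mathbf e_j$ to force all coordinates of $\mathbf g(w)-\mathbf g'(w)$ to coincide.
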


\begin{proof}
	If $\mathbf v\in T_w\Delta$ then $\langle \mathbf{1}, \mathbf v\rangle = 0$, so
	$\langle \mathbf g(w)+\lambda(w)\mathbf{1}, \mathbf v\rangle = \langle \mathbf g(w), \mathbf v\rangle
	= d_wU[\mathbf v].$ 
	Conversely, suppose that both \(\mathbf g(w)\) and \(\mathbf g'(w)\) satisfy \eqref{eq:gradient-simplex-def}.
	Then for all $\mathbf v\in T_w\Delta$
	$0 = d_w U[\mathbf v] - d_w U[\mathbf v] = \langle \mathbf{g}(w)-\mathbf{g}'(w), \mathbf v\rangle.$
	Taking $\mathbf v=\mathbf e_i- \mathbf e_j\in T_w\Delta$ for any $i,j$ gives
	\(
	(\mathbf g(w)- \mathbf g'(w))_i = (\mathbf g(w)- \mathbf g'(w))_j
	\),
	so all coordinates of $\mathbf g(w)-\mathbf g'(w)$ are equal and thus
	$\mathbf g(w)-\mathbf g'(w) = \lambda(w)\mathbf{1}$.
\end{proof}

A natural representative of the gradient on the simplex is the intrinsic Riemannian gradient of $U$ thought as a $C^1$ function on the manifold $\mathring{\Delta}$. More precisely, the Riemannian gradient $\mathbf g_T(w)$ at $w\in\mathring{\Delta}$ is the unique vector in the tangent space $T_w\Delta$ such that
\(
d_wU[\mathbf v] = \langle \mathbf g_T(w),\mathbf v\rangle 
\)
for all $\mathbf v\in T_w\Delta$. It always exists for a $C^1$ function on the simplex. For background on Riemannian gradients see for instance \cite[Section 2.1]{petersen2006riemannian}.

We will instead work with the centered representative of the gradient on the simplex. Among all gradients on the simplex at $w$, it is the unique that satisfies the centering condition
\begin{equation}\label{eq:centered}
	\langle w, \mathbf{g}_C(w)\rangle = 0.
\end{equation}
From Lemma~\ref{lem:centered-gradient} we know that any gradient on the simplex can be written as $\mathbf{g}(w) = \mathbf g_T(w) + \lambda(w)\mathbf{1},$ for some $\lambda:\mathring{\Delta}\to\mathbb{R}$. Imposing the centering condition $\langle w,\mathbf{g}_C(w)\rangle=0$ gives $0 = \langle w,\mathbf g_T(w) + \lambda(w)\mathbf{1}\rangle
= \langle w,\mathbf g_T(w)\rangle + \lambda(w)\langle w,\mathbf{1}\rangle.$
Because $\langle w,\mathbf{1}\rangle=\sum_k w_k = 1$ we obtain
\(
\lambda(w) = -\langle w,\mathbf g_T (w)\rangle
\),
and hence $\mathbf{g}_C(w)=\mathbf g_T(w) - \langle w,\mathbf g_T(w)\rangle\mathbf{1}.$

\subsection{Chain rule for the softmax parameterization}

Let \(U:\Delta\to\mathbb R\) be a function of class $C^1$ on $\mathring{\Delta}$ and let \(w=s(h)\) with $h\in\mathbb R^K$, where we denote $s:\mathbb R^K\to \mathbb R^K$ the softmax function. Let \(\mathbf g(w)\in\mathbb R^K\) be any representative of the gradient of \(U\) on the simplex and \(J(h) = \mathrm{diag}(w)-w w^\top\) the Jacobian of the softmax. Notice that $J(h)$ is symmetric. The vector $J(h) \mathbf g(w)$ is independent of the choice of $\mathbf g(w)$. Indeed, it is easy to see that the constant vector is in the kernel of $J(h)$, i.e.\ \(J(h)\mathbf 1=0\). Therefore, if $\mathbf g'(w)=\mathbf g(w)+\lambda(w)\mathbf 1$ is any other representative of the gradient in the simplex we have $J(h) \mathbf g'(w) = J(h) \mathbf g(w) + \lambda(w) J(h) \mathbf 1= J(h) \mathbf g(w)$.

The following lemma states that $J(h) \mathbf g(w)$ is equal to the standard Euclidean gradient $\nabla(U\circ s)$ of the composition $U\circ s$.

\begin{lemma}\label{lem:chain_rule}
	The gradient of the composition \((U\circ s):\mathbb R^K\to\mathbb R\) in logit coordinates at $h$ is the vector \(\nabla(U\circ s)(h)=J(h) \mathbf g(w)\), \(w=s(h)\), where $\mathbf g(w)$ is any gradient on the simplex of $U$.
\end{lemma}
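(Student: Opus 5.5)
The argument is the ordinary chain rule for the composition $U\circ s$, combined with two facts: the differential of $s$ takes values in the tangent space $T_w\Delta$, and $J(h)$ is symmetric. Fix $h\in\mathbb R^K$ and set $w=s(h)$. Every coordinate of a softmax is strictly positive, so $w\in\mathring{\Delta}$, and $U$ is $C^1$ in a neighbourhood of $w$ in the simplex. For a direction $\mathbf u\in\mathbb R^K$, consider the curve $\epsilon\mapsto s(h+\epsilon\mathbf u)$. It lies in $\mathring{\Delta}$, and its velocity at $\epsilon=0$ is $\mathbf v:=J(h)\mathbf u$. I would first verify the Jacobian formula by differentiating $s_i(h)=e^{h_i}/\sum_j e^{h_j}$, which gives $\partial s_i/\partial h_j=w_i\delta_{ij}-w_iw_j$. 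Next, $\mathbf 1^\top J(h)\mathbf u=\sum_i w_iu_i-\big(\sum_i w_i\big)\langle w,\mathbf u\rangle=0$, so $\mathbf v\in T_w\Delta$.

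The second step computes the directional derivative of the composition:
\[
\frac{d}{d\epsilon}\,U\big(s(h+\epsilon\mathbf u)\big)\Big|_{\epsilon=0}=d_wU[J(h)\mathbf u].
\]
This is the chain rule for a $C^1$ function on the affine manifold $\mathring{\Delta}$ composed with a smooth curve in it. To be careful here, note that $s(h+\epsilon\mathbf u)=w+\epsilon\mathbf v+o(\epsilon)$, where the remainder is also tangent. Since $U$ is $C^1$ on $\mathring{\Delta}$ (equivalently, $U$ composed with an affine chart of the simplex is $C^1$ on an open subset of $\mathbb R^{K-1}$), the difference $U(w+\epsilon\mathbf v+o(\epsilon))-U(w)$ equals $\epsilon\, d_wU[\mathbf v]+o(\epsilon)$. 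This is the only step with real content, and I expect it to be the main, though mild, obstacle: $d_wU$ is defined only on $T_w\Delta$, so differentiability must be understood intrinsically. The point to check is that all displacements stay inside the simplex and hence are tangent.

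Finally, because $\mathbf v\in T_w\Delta$, definition \eqref{eq:gradient-simplex-def} applies to any gradient representative and gives
\[
d_wU[J(h)\mathbf u]=\langle\mathbf g(w),J(h)\mathbf u\rangle=\langle J(h)\mathbf g(w),\mathbf u\rangle,
\]
where the last equality uses the symmetry of $J(h)$. This holds for every $\mathbf u\in\mathbb R^K$, and the left-hand side is linear in $\mathbf u$ and continuous in $h$. Hence $U\circ s$ is differentiable at $h$ with Euclidean gradient $\nabla(U\circ s)(h)=J(h)\mathbf g(w)$. The choice of representative does not matter, because $J(h)\mathbf 1=0$, as noted before the statement.
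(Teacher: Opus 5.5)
Your proposal is correct and follows essentially the same route as the paper: apply the chain rule $d_h(U\circ s)[\mathbf u]=d_wU[J(h)\mathbf u]$, use the defining property of a gradient on the simplex, and then use the symmetry of $J(h)$. You also spell out two points the paper leaves implicit, namely that $J(h)\mathbf u\in T_w\Delta$ (which is what licenses applying the simplex-gradient definition) and why the chain rule holds intrinsically on $\mathring{\Delta}$.
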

\begin{proof}
	For any direction \(\mathbf u\in\mathbb R^K\) we have
	\(
	d_h(U\circ s)[\mathbf u] = d_wU(w)[J(h)\mathbf u]
	\)
	with $w=s(h)$. By the definition of a gradient on the simplex \(\mathbf g(w)\) we then have
	\(
	d_h(U\circ s)[\mathbf u]= \langle \mathbf g(w),J(h)\mathbf u\rangle
	= \langle J(h) \mathbf g(w),\mathbf u\rangle.
	\)
	Then $\langle \nabla(U\circ s)(h),\mathbf u\rangle =\langle J(h) \mathbf g(w),\mathbf u\rangle$ for all $\mathbf u \in \mathbb{R}^K$ which concludes the proof.
\end{proof}

\section{Influence functions and gradient estimation}\label{sec:if_gradients}

We now consider functionals of the form $U(w) = \mathfrak{U}(P^w)$ as in \eqref{eq:utility_simplex}, where  $P^w$ is a mixture of $K$ fixed distributions $(P^k)_{k=1}^K$ with weights $w\in\Delta$ as in \eqref{eq:mixture}. In this section we derive stochastic gradient estimators for the utility
\(
U(w)
\)
based on a single bandit draw. The key idea is that when the statistical functional $\mathfrak U$ is differentiable in an appropriate sense, small perturbations of the mixture distribution $P$ admit a first-order expansion with respect to the influence function of $\mathfrak U$. Bandit sampling then provides an estimate of the logit-space gradient $\nabla(U\circ s)(h)$.

\subsection{Differentiability and influence functions}

A perturbation of the weights in a tangent direction corresponds to a signed perturbation of the mixture law. Under the following assumption, the associated influence function gives a centered gradient representative on the simplex.

\begin{assumption}
	\label{ass:A1}
	Fix $w\in\mathring\Delta$ and let $P^w=\sum_{i=1}^K w_i P^i$. There exists a measurable function $\IF\left[P^w\right]$ with $\mathbb E_{P^w}[|\IF[P^w](\mathsf R)|]<\infty$  and $\mathbb E_{P^w}[\IF[P^w](\mathsf R)]=0,$ such that for every $\mathbf v\in T_w\Delta$  the directional derivative exists and satisfies
	\[
	\lim_{\epsilon\to 0}\frac{\mathfrak U\left(P^w+\epsilon H\right)-\mathfrak U\left(P^w\right)}{\epsilon}
	=
	\int_{\mathbb{R}} \IF[P^w](r)dH(r),
	\]
	where $H=\sum_{i=1}^K v_i P^i$
\end{assumption}

\begin{proposition}\label{prop:directional-derivative}
	Under Assumption~\ref{ass:A1}, for any \(w\in\mathring{\Delta}\) and any tangent direction \(\mathbf v\in T_w\Delta\), the directional derivative of \(U(w)\) at \(w\) in the direction \(\mathbf v\) exists and satisfies
	\begin{equation}\label{eq:directional-U}
		\left.\frac{d}{d\epsilon}\,U(w+\epsilon \mathbf v)\right|_{\epsilon=0}
		= \sum_{i=1}^K v_i\,\mathbb{E}_{P^i}[\IF[P^w](\mathsf{R})].
	\end{equation}
\end{proposition}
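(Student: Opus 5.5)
The plan is to reduce the directional derivative of $U$ to the directional derivative of $\mathfrak U$ along the signed measure $H$ of Assumption~\ref{ass:A1}. The only real step is to identify the curve $\epsilon\mapsto P^{w+\epsilon\mathbf v}$ with the line $\epsilon\mapsto P^w+\epsilon H$.

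\textbf{Linearity of the mixture map.} Fix $w\in\mathring\Delta$ and $\mathbf v\in T_w\Delta$. Since $w$ is interior, there is $\epsilon_0>0$ such that $w+\epsilon\mathbf v\in\mathring\Delta$ for $|\epsilon|<\epsilon_0$. Indeed, $\mathbf 1^\top\mathbf v=0$ preserves the sum of the coordinates, and small $|\epsilon|$ preserves their positivity. For such $\epsilon$, linearity of \eqref{eq:mixture} in the weights gives
\[
P^{w+\epsilon\mathbf v}=\sum_{i=1}^K (w_i+\epsilon v_i)P^i = P^w+\epsilon H,\qquad H=\sum_{i=1}^K v_iP^i .
\]
Here $H$ is a signed measure of total mass $\sum_i v_i=0$. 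Therefore $U(w+\epsilon\mathbf v)=\mathfrak U(P^w+\epsilon H)$ for $|\epsilon|<\epsilon_0$, and $P^w+\epsilon H$ is a genuine probability measure in $\mathcal P_2(\mathbb R)$, so $\mathfrak U$ is evaluated on its domain.

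\textbf{Apply Assumption~\ref{ass:A1}.} The difference quotient $\big(U(w+\epsilon\mathbf v)-U(w)\big)/\epsilon$ coincides with the quotient in Assumption~\ref{ass:A1}. Hence the limit as $\epsilon\to0$ exists and equals $\int_{\mathbb R}\IF[P^w](r)\,dH(r)$.

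\textbf{Integral against the signed measure.} It remains to split this integral as $\sum_i v_i\int \IF[P^w]\,dP^i=\sum_i v_i\,\mathbb E_{P^i}[\IF[P^w](\mathsf R)]$. This is the one point needing care: we must know that $\IF[P^w]$ is integrable under each $P^i$, not just under $P^w$. It follows from $w_i>0$, since $P^i\le w_i^{-1}P^w$ as measures. Therefore $\mathbb E_{P^i}|\IF[P^w]|\le w_i^{-1}\mathbb E_{P^w}|\IF[P^w]|<\infty$. With integrability secured, linearity of the integral in the measure justifies the split and yields \eqref{eq:directional-U}.

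As a side remark, the centering $\mathbb E_{P^w}[\IF[P^w]]=0$ is not needed for this identity. It only shows that the vector $\big(\mathbb E_{P^i}[\IF[P^w]]\big)_i$ satisfies $\langle w,\cdot\rangle=0$, i.e.\ that it is the centered gradient $\mathbf g_C(w)$.
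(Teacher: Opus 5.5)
Your proof is correct and takes essentially the same route as the paper: identify $P^{w+\epsilon\mathbf v}=P^w+\epsilon H$, invoke Assumption~\ref{ass:A1}, and split $\int \IF[P^w]\,dH$ by linearity in the measure. Your domination bound $P^i\le w_i^{-1}P^w$ spells out the integrability step that the paper only asserts from $\mathbb E_{P^w}[|\IF[P^w](\mathsf R)|]<\infty$.
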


\begin{proof}
	Fix \(w\in\mathring{\Delta}\) and \(\mathbf v\in T_w\Delta\). Define the signed measure $H \coloneqq \sum_{i=1}^K v_i P^i,$ so that for any \(\epsilon\) small enough such that \(w+\epsilon \mathbf v\in\mathring\Delta\),
	\[
	P^{w+\epsilon \mathbf v}=\sum_{i=1}^K (w_i+\epsilon v_i)P^i
	= P^w+\epsilon \sum_{i=1}^K v_iP^i
	= P^w+\epsilon H.
	\]
	Then
	\[
	\frac{U(w+\epsilon \mathbf v)-U(w)}{\epsilon}
	=
	\frac{\mathfrak U(P^{w+\epsilon \mathbf v})-\mathfrak U(P^w)}{\epsilon}
	=
	\frac{\mathfrak U(P^w+\epsilon H)-\mathfrak U(P^w)}{\epsilon}.
	\]
	By Assumption~\ref{ass:A1}, the right-hand side converges as \(\epsilon\to 0\) to
	\(
	\int \IF[P^w](r)dH(r)
	\). Therefore, 
	\[
	\int \IF[P^w](r)dH(r)
	=
	\sum_{i=1}^K v_i \int \IF[P^w](r)dP^i(r)
	=
	\sum_{i=1}^K v_i\mathbb{E}_{P^i}[\IF[P^w](\mathsf R)],
	\]
	where the expectations are well-defined since \(\mathbb E_{P^w}[|\IF[P^w](\mathsf R)|]<\infty\). This proves \eqref{eq:directional-U}.
\end{proof}

Under Assumption~\ref{ass:A1}, for any $w\in\mathring{\Delta}$ we can define the vector $\mathbf g(w)\in\mathbb{R}^K$ component-wise by
\begin{equation}\label{eq:ic_gradient_def}
	(\mathbf g(w))_k = \mathbb{E}_{P^k}[\IF[P^w](\mathsf{R})], \qquad k=1,\dots,K.
\end{equation}

\begin{lemma}\label{lem:centered-gradient}
	Under Assumption \ref{ass:A1}:
	\begin{enumerate}
		\item $\mathbf g(w)$ is a valid representative of the gradient of $U$ on the simplex, i.e.,  satisfies \eqref{eq:gradient-simplex-def}.
		\item $\mathbf g(w)$ satisfies the centering condition \eqref{eq:centered}, namely $\langle w,\mathbf g(w)\rangle=0$.
	\end{enumerate}
	Therefore $\mathbf{g}(w)=\mathbf{g}_C(w)$.
\end{lemma}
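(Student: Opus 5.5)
The proof is a direct verification of the two defining properties, followed by an appeal to uniqueness of the centered representative.

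For item 1, fix $w\in\mathring\Delta$ and $\mathbf v\in T_w\Delta$. Proposition~\ref{prop:directional-derivative} gives
\[
d_wU[\mathbf v]=\sum_{i=1}^K v_i\,\mathbb E_{P^i}[\IF[P^w](\mathsf R)].
\]
By the definition \eqref{eq:ic_gradient_def}, the right-hand side is exactly $\langle \mathbf g(w),\mathbf v\rangle$. This is precisely condition \eqref{eq:gradient-simplex-def}, so $\mathbf g(w)$ is a gradient on the simplex.

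The only point needing care is that each $\mathbb E_{P^k}[\IF[P^w](\mathsf R)]$ is finite. Since $w_k>0$ in the interior, we have $P^k\le w_k^{-1}P^w$ as measures. Hence
\[
\mathbb E_{P^k}|\IF[P^w]|\le w_k^{-1}\,\mathbb E_{P^w}|\IF[P^w]|<\infty,
\]
using the integrability in Assumption~\ref{ass:A1}. So $\mathbf g(w)$ is a well-defined vector in $\mathbb R^K$.

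For item 2, compute directly, using linearity of the integral in the mixing measure (justified by the same integrability bound):
\[
\langle w,\mathbf g(w)\rangle=\sum_k w_k\int \IF[P^w]\,dP^k=\int \IF[P^w]\,d\Big(\sum_k w_kP^k\Big)=\mathbb E_{P^w}[\IF[P^w](\mathsf R)]=0.
\]
The last equality is the centering hypothesis of Assumption~\ref{ass:A1}.

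Finally, for the identification $\mathbf g(w)=\mathbf g_C(w)$, I will not lean on the earlier paragraph that cites this lemma for the decomposition $\mathbf g=\mathbf g_T+\lambda\mathbf 1$, to avoid circularity. Instead I argue uniqueness directly from the preceding (unlabeled) lemma, which characterizes all gradient representatives up to multiples of $\mathbf 1$. By item 1, any other centered gradient representative $\mathbf g'$ satisfies $\mathbf g'=\mathbf g(w)+\lambda\mathbf 1$ for some $\lambda$. Then
\[
0=\langle w,\mathbf g'\rangle=\langle w,\mathbf g(w)\rangle+\lambda\langle w,\mathbf 1\rangle=\lambda,
\]
so $\mathbf g'=\mathbf g(w)$. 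The centered representative is therefore unique, and it equals $\mathbf g(w)$.

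The only mild obstacle is the integrability check above; everything else is bookkeeping.
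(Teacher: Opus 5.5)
Your proof is correct and follows the paper's route. Item 1 comes from Proposition~\ref{prop:directional-derivative} together with the definition \eqref{eq:ic_gradient_def}, item 2 from linearity in the mixture and the centering in Assumption~\ref{ass:A1}, and the identification from uniqueness of the centered representative. Two of your additions tighten points the paper leaves implicit. The domination bound $P^k\le w_k^{-1}P^w$ makes the finiteness of each $\mathbb E_{P^k}[\IF[P^w](\mathsf R)]$ explicit. The direct uniqueness argument via the unlabeled lemma avoids the paper's self-referential citation of this very lemma when it introduces $\mathbf g_C$.
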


\begin{proof}
	\begin{enumerate}
		\item By Proposition~\ref{prop:directional-derivative} for every $\mathbf v\in T_w\Delta$ 
		\(
		d_w U[\mathbf v]
		=\sum_{k=1}^K v_k\,\mathbb{E}_{P^k}[\IF[P^w](\mathsf{R})]
		=\left\langle \mathbf g(w),\mathbf v\right\rangle
		\).
		\item Using $P^w=\sum_{k=1}^K w_k P^k$ and $\mathbb E_{P^w}[\IF[P^w](\mathsf R)]=0$,
		\[
		\langle w,\mathbf g(w)\rangle
		=\sum_{k=1}^K w_k\mathbb{E}_{P^k}[\IF[P^w](\mathsf{R})]
		=\mathbb{E}_{P^w}[\IF[P^w](\mathsf{R})]
		=0.
		\]
	\end{enumerate}
	Since $\mathbf g_C$ is the unique gradient in the simplex satisfying \eqref{eq:centered} we have $\mathbf{g}(w)=\mathbf{g}_C(w)$.
\end{proof}

\subsection{An unbiased influence-function based estimator of the gradient}

Assume $w\in\mathring{\Delta}$ is fixed. Draw an action $\mathsf{A}\sim w$ and then a reward $\mathsf{R}\mid\{\mathsf{A}=k\}\sim P^{k}$. Define the per-arm estimator
\begin{equation*}\label{eq:unbiassed_estimator}
	\mathsf{H}_k(w)
	=
	\left(\mathds{1}_{\{\mathsf{A}=k\}}-w_k\right)\IF[P^w](\mathsf{R}),
	\qquad k=1,\dots,K,
\end{equation*}
and let $\boldsymbol{\mathsf{H}}(w)=(\mathsf{H}_1(w),\dots,\mathsf{H}_K(w))^\top$. A direct computation shows that this random vector is an unbiased estimator of the logit-space gradient.

\begin{proposition}\label{prop:IF-unbiased}
	If $\mathsf{A}\sim w$ and $\mathsf{R}\mid \mathsf{A}\sim P^{\mathsf{A}}$, then
	\begin{equation}\label{eq:unbias_gradient}
		\mathbb E\left[\boldsymbol{\mathsf{H}}(w)\right]
		=
		\mathrm{diag}(w)\mathbf g_C(w)
		=
		\nabla (U\circ s)(h),
		\qquad w=s(h),
	\end{equation}
	where $\mathbf g_C(w)$ is defined in \eqref{eq:ic_gradient_def} and $s$ is the softmax map.
\end{proposition}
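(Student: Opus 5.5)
We prove the two equalities in \eqref{eq:unbias_gradient} separately. The first is a direct computation of the expectation of each coordinate, obtained by conditioning on the action. The second follows from the chain rule of Lemma~\ref{lem:chain_rule} together with the centering property of Lemma~\ref{lem:centered-gradient}.

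For the first equality, fix $k$ and use the tower property, conditioning on $\mathsf A$. Write $\mathrm{IF}=\IF[P^w]$ and set $m_j=\mathbb E_{P^j}[\mathrm{IF}(\mathsf R)]=(\mathbf g(w))_j$. Given $\mathsf A=j$, the factor $\mathds 1_{\{\mathsf A=k\}}-w_k$ is the deterministic number $\delta_{jk}-w_k$, and $\mathsf R\sim P^j$. Hence
\[
\mathbb E[\mathsf H_k(w)]=\sum_{j=1}^K w_j(\delta_{jk}-w_k)\,m_j = w_k m_k - w_k\sum_j w_j m_j .
\]
The sum $\sum_j w_j m_j$ equals $\mathbb E_{P^w}[\mathrm{IF}(\mathsf R)]$, which vanishes by Assumption~\ref{ass:A1}; equivalently, it equals $\langle w,\mathbf g(w)\rangle=0$ by Lemma~\ref{lem:centered-gradient}. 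Therefore $\mathbb E[\mathsf H_k(w)]=w_k(\mathbf g_C(w))_k$, which in vector form is $\mathrm{diag}(w)\mathbf g_C(w)$.

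Integrability must be checked so that the expectation exists and Fubini and the tower property apply. We have $|\mathsf H_k(w)|\le |\mathrm{IF}(\mathsf R)|$, and the marginal law of $\mathsf R$ is $P^w$. So $\mathbb E|\mathsf H_k(w)|\le \mathbb E_{P^w}|\mathrm{IF}|<\infty$. Since $w\in\mathring\Delta$ gives $w_j>0$, each $m_j$ is finite, because $w_j\,\mathbb E_{P^j}|\mathrm{IF}|\le \mathbb E_{P^w}|\mathrm{IF}|$.

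For the second equality, Lemma~\ref{lem:chain_rule} gives $\nabla(U\circ s)(h)=J(h)\mathbf g(w)$ for any gradient representative on the simplex. Lemma~\ref{lem:centered-gradient} shows that $\mathbf g_C(w)$ is such a representative, so we may take $\mathbf g(w)=\mathbf g_C(w)$. Then
\[
J(h)\mathbf g_C(w)=\mathrm{diag}(w)\mathbf g_C(w)-w\,\langle w,\mathbf g_C(w)\rangle=\mathrm{diag}(w)\mathbf g_C(w),
\]
because of the centering condition \eqref{eq:centered}.

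There is no real obstacle here. The only points needing care are the integrability justification and recognising that centering is what kills both the baseline term $w_k\sum_j w_jm_j$ and the rank-one part $ww^\top$ of the Jacobian.
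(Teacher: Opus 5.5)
Your proposal is correct and follows the paper's route. You condition on the action to get $w_k\,\mathbb E_{P^k}[\IF[P^w](\mathsf R)]$ minus a baseline term that vanishes because $\mathbb E_{P^w}[\IF[P^w](\mathsf R)]=0$, then use the centering condition to identify $\mathrm{diag}(w)\mathbf g_C(w)$ with $J(h)\mathbf g_C(w)$ and conclude with Lemma~\ref{lem:chain_rule}. The only difference is your explicit integrability check, via $|\mathsf H_k(w)|\le|\IF[P^w](\mathsf R)|$ and $w_j\,\mathbb E_{P^j}[|\IF[P^w](\mathsf R)|]\le\mathbb E_{P^w}[|\IF[P^w](\mathsf R)|]$, which the paper leaves implicit.
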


\begin{proof}
	Fix $k\in\{1,\dots,K\}$. \(\mathbb E\left[\mathsf{H}_k(w)\right]=\mathbb E\left[\mathds{1}_{\{\mathsf{A}=k\}}\IF[P^w](\mathsf{R})\right]-w_k\mathbb E\left[\IF[P^w](\mathsf{R})\right].\) For the first term, condition on $\mathsf{A}$:
	\begin{align*}
		\mathbb E\left[\mathds{1}_{\{\mathsf{A}=k\}}\IF[P^w](\mathsf{R})\right]
		& =
		\mathbb E\left[\mathds{1}_{\{\mathsf{A}=k\}}
		\mathbb E\left[\IF[P^w](\mathsf{R})\mid \mathsf{A}\right]\right]\\
		& =
		\mathbb E\left[\mathds{1}_{\{\mathsf{A}=k\}}
		\sum_{j=1}^K\mathbb E\left[\IF[P^w](\mathsf{R})\mid \mathsf{A}=j\right]\mathds{1}_{\{\mathsf A = j\}}\right]\\
		& =
		\mathbb E\left[\mathds{1}_{\{\mathsf{A}=k\}}
		\mathbb E\left[\IF[P^w](\mathsf{R})\mid \mathsf{A}=k\right]\right]\\
		& =
		w_k\mathbb E_{P^{k}}\left[\IF[P^w](\mathsf{R})\right]=w_k\left(\mathbf g_C(w)\right)_k.
	\end{align*}
	The second term is exactly $\mathbb E_{P^w}\left[\IF[P^w](\mathsf{R})\right]=0$. Therefore $\mathbb E[\mathsf H_k(w)]=w_k(\mathbf g_C(w))_k$ for every $k$, which yields $\mathbb E\left[\boldsymbol{\mathsf H}(w)\right]=\mathrm{diag}(w)\mathbf g_C(w).$ By the centering condition $\mathrm{diag}(w)\mathbf g_C(w)=J(h) \mathbf g_C(w)$ with $w=s(h)$. Then, by Lemma~\ref{lem:chain_rule} we obtain \eqref{eq:unbias_gradient}.
\end{proof}

\subsection{Influence function estimators}\label{sec:if-estimators}

In practice the true influence function $\IF[P^w]$ is unknown and must be replaced by a data-dependent estimator $\widehat{\mathsf{IF}}_t$. Recall that $\mathcal F^{\,\mathrm{rew}}_{t-1}$ denotes the $\sigma$-algebra generated by all past actions and rewards up to time $t-1$. Let $\mathsf w_t$ be a $\mathcal F^{\,\mathrm{rew}}_{t-1}$-measurable sampling distribution at round $t$. Define the plug-in logit estimator
\begin{equation}\label{eq:def_H}
	\widehat{\mathsf{H}}_{t,k}
	=
	\left(\mathds{1}_{\{\mathsf{A}_t=k\}}-\mathsf{w}_{t,k}\right)\widehat{\mathsf{IF}}_{t-1}(\mathsf{R}_t),
	\quad k=1,\dots,K,
\end{equation}
where $\mathsf A_t\sim \mathsf w_t$ and $\mathsf R_t\mid\{\mathsf A_t=k\}\sim P^k$. We assume that the map
\(
(\omega,r)\mapsto \widehat{\mathsf{IF}}_{t-1}(\omega,r)
\)
is $\mathcal F^{\,\mathrm{rew}}_{t-1}\otimes \mathcal B(\mathbb R)$-measurable  and integrable under each $P^k$.

Define the bias vector
\begin{equation}\label{eq:bias}
	\mathsf B_{t,k}
	=
	\Big(\mathbb E_{P^{k}}\left[\widehat{\mathsf{IF}}_{t-1}(\mathsf R)\right]-\mathbb E_{P^{\mathsf{w}_t}}\left[\widehat{\mathsf{IF}}_{t-1}(\mathsf R)\right]\Big)-(\mathbf g_C(\mathsf w_t))_k,
	\qquad
	k=1,\ldots,K,
	\quad
	t\geqslant 1.
\end{equation}
Both $\mathbb E_{P^{k}}\left[\widehat{\mathsf{IF}}_{t-1}(\mathsf R)\right]$ and $\mathbb{E}_{P^{\mathsf{w}_t}}\left[\widehat{\mathsf{IF}}_{t-1}(\mathsf R)\right]$ are random variables. More precisely
\[
\mathbb{E}_{P^k}\left[\widehat{\mathsf{IF}}_{t-1}(\mathsf R)\right]=\mathbb{E}\left[\widehat{\mathsf{IF}}_{t-1}(\mathsf R_t)\mid \mathcal F_{t-1}^{\,\text{rew}}, \mathsf A_t=k\right]:\omega \mapsto \int \widehat{\mathsf{IF}}_{t-1}(\omega,r)\,P^k(dr),
\]
and the same holds for  $\mathbb{E}_{P^{\mathsf{w}_t}}\left[\widehat{\mathsf{IF}}_{t-1}(\mathsf R)\right]$ . Note that $\boldsymbol{\mathsf B}_t$ is $\mathcal F^{\,\mathrm{rew}}_{t-1}$-measurable.

Since $\mathcal F^{\,\mathrm{act}}_{t}=\sigma(\mathcal F^{\,\mathrm{rew}}_{t-1},\mathsf A_t)$, using the tower property we obtain the conditional expectation of $\mathds{1}_{\{\mathsf A_t=k\}}\widehat{\mathsf{IF}}_{t-1}(\mathsf{R}_t)$ for each $k$:
\[
\begin{aligned}
	\mathbb E\left[\mathds{1}_{\{\mathsf A_t=k\}}\widehat{\mathsf{IF}}_{t-1}(\mathsf{R}_t)\mid\mathcal F^{\,\mathrm{rew}}_{t-1}\right]
	&=
	\mathbb E\left[\mathbb E\left[\mathds{1}_{\{\mathsf A_t=k\}}\widehat{\mathsf{IF}}_{t-1}(\mathsf{R}_t)\mid \mathcal F^{\,\mathrm{act}}_{t}\right]\mid\mathcal F^{\,\mathrm{rew}}_{t-1}\right]\\
	&=\mathbb E\left[\sum_{j=1}^K\mathds{1}_{\{\mathsf A_t=j\}}\mathbb E\left[\mathds{1}_{\{\mathsf A_t=k\}}\widehat{\mathsf{IF}}_{t-1}(\mathsf{R}_t)\mid \mathcal F^{\,\mathrm{rew}}_{t-1}, \mathsf A_t=j\right]\mid\mathcal F^{\,\mathrm{rew}}_{t-1}\right]\\
	&=\mathbb E\left[\mathds{1}_{\{\mathsf A_t=k\}}\mathbb E\left[\widehat{\mathsf{IF}}_{t-1}(\mathsf{R}_t)\mid \mathcal F^{\,\mathrm{rew}}_{t-1}, \mathsf A_t=k\right]\mid\mathcal F^{\,\mathrm{rew}}_{t-1}\right]\\
	&=\mathbb E\left[\mathds{1}_{\{\mathsf A_t=k\}}\mathbb E_{P^{k}}\left[\widehat{\mathsf{IF}}_{t-1}(\mathsf R)\right]\mid\mathcal F^{\,\mathrm{rew}}_{t-1}\right]\\
	&=\mathbb E\left[\mathds{1}_{\{\mathsf A_t=k\}}\mid\mathcal F^{\,\mathrm{rew}}_{t-1}\right]\mathbb E_{P^{k}}\left[\widehat{\mathsf{IF}}_{t-1}(\mathsf R)\right]\\
	&=\mathsf w_{t,k}\mathbb E_{P^{k}}\left[\widehat{\mathsf{IF}}_{t-1}(\mathsf R)\right].
\end{aligned}
\]
Therefore
\[
\mathbb E\left[\widehat{\mathsf{H}}_{t,k}\mid\mathcal F^{\,\mathrm{rew}}_{t-1}\right]=\mathsf w_{t,k}\left(\mathbb E_{P^{k}}\left[\widehat{\mathsf{IF}}_{t-1}(\mathsf R)\right]-\mathbb E_{P^{\mathsf{w}_t}}\left[\widehat{\mathsf{IF}}_{t-1}(\mathsf R)\right]\right),
\]
and hence in vector form
\begin{equation}\label{eq:bias_decomp}
	\mathbb E\left[\widehat{\boldsymbol{\mathsf{H}}}_{t}\mid\mathcal F^{\,\mathrm{rew}}_{t-1}\right]
	=
	\text{\rm diag}(\mathsf w_t)\left(\mathbf g_C(\mathsf w_t)+\boldsymbol{\mathsf{B}}_t\right)
	=
	\nabla(U\circ s)(\mathsf h_t)+\text{\rm diag}(\mathsf w_t)\boldsymbol{\mathsf{B}}_t,
\end{equation}
with $\mathsf{w}_t=s(\mathsf h_t)$, where the last equality holds by \eqref{eq:unbias_gradient}.

Define the martingale-difference noise
\[
\boldsymbol{\mathsf{X}}_t
=
\widehat{\boldsymbol{\mathsf{H}}}_{t}
-
\mathbb E\left[\widehat{\boldsymbol{\mathsf{H}}}_{t}\mid\mathcal F^{\,\mathrm{rew}}_{t-1}\right],
\qquad
\mathbb E\left[\boldsymbol{\mathsf{X}}_t\mid\mathcal F^{\,\mathrm{rew}}_{t-1}\right]=0.
\]
Observe that $\widehat{\boldsymbol{\mathsf H}}_{t}$ and $\boldsymbol{\mathsf X}_t$ encode information in logit-space coordinates. We transform to simplex coordinates via
$\widehat{\boldsymbol{\mathsf{G}}}_{t}
=\text{\rm diag}(\mathsf w_t)^{-1}\widehat{\boldsymbol{\mathsf{H}}}_{t},$ $\boldsymbol{\mathsf{Z}}_t
=\text{\rm diag}(\mathsf w_t)^{-1}\boldsymbol{\mathsf{X}}_t.$ 

The following lemma is a standard bias-noise decomposition in stochastic approximation, where the update is expressed as a mean term, a martingale-difference fluctuation, and a (usually small) residual bias; see, e.g., \cite{BenvenisteMetivierPriouret1990,KushnerYin2003,Borkar2008}.

\begin{lemma}\label{lem:IF-decomposition}
	For every $t\geqslant 1$ we have the following noise-bias decomposition
	\[
	\widehat{\boldsymbol{\mathsf{G}}}_{t}
	=
	\mathbf g_C(\mathsf w_t)
	+ \boldsymbol{\mathsf{Z}}_t
	+ \boldsymbol{\mathsf{B}}_t,
	\qquad \text{ and } \qquad 
	\mathbb E\left[\boldsymbol{\mathsf{Z}}_t\mid\mathcal F^{\,\mathrm{rew}}_{t-1}\right]=0.
	\]
\end{lemma}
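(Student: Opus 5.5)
The proof is essentially algebraic and rests on the conditional-mean identity \eqref{eq:bias_decomp} established just above. We rewrite $\widehat{\boldsymbol{\mathsf H}}_t$ as its conditional mean plus the martingale-difference term, $\widehat{\boldsymbol{\mathsf H}}_t=\mathbb E[\widehat{\boldsymbol{\mathsf H}}_t\mid\mathcal F^{\,\mathrm{rew}}_{t-1}]+\boldsymbol{\mathsf X}_t$. This holds by the definition of $\boldsymbol{\mathsf X}_t$. Substituting the first form of \eqref{eq:bias_decomp} gives
\[
\widehat{\boldsymbol{\mathsf H}}_t=\mathrm{diag}(\mathsf w_t)\bigl(\mathbf g_C(\mathsf w_t)+\boldsymbol{\mathsf B}_t\bigr)+\boldsymbol{\mathsf X}_t .
\]
Next we multiply on the left by $\mathrm{diag}(\mathsf w_t)^{-1}$. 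This inverse exists because $\mathsf w_t$ has strictly positive coordinates: it lies in $\mathring\Delta$, and in the algorithm it will lie in a truncated simplex. By the definitions of $\widehat{\boldsymbol{\mathsf G}}_t$ and $\boldsymbol{\mathsf Z}_t$, the multiplication yields $\widehat{\boldsymbol{\mathsf G}}_t=\mathbf g_C(\mathsf w_t)+\boldsymbol{\mathsf B}_t+\boldsymbol{\mathsf Z}_t$.

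For the martingale property, note that $\mathsf w_t$ is $\mathcal F^{\,\mathrm{rew}}_{t-1}$-measurable, so $\mathrm{diag}(\mathsf w_t)^{-1}$ can be taken out of the conditional expectation:
\[
\mathbb E[\boldsymbol{\mathsf Z}_t\mid\mathcal F^{\,\mathrm{rew}}_{t-1}]=\mathrm{diag}(\mathsf w_t)^{-1}\,\mathbb E[\boldsymbol{\mathsf X}_t\mid\mathcal F^{\,\mathrm{rew}}_{t-1}]=0 .
\]
The last equality is the defining property of $\boldsymbol{\mathsf X}_t$.

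The only point that needs care is integrability, which is needed for the conditional expectations to be well defined and for the pull-out step. This step is a technicality rather than a real obstacle. We would check coordinatewise that $\widehat{\mathsf H}_{t,k}$ is integrable, or at least that it has a well-defined conditional expectation. This follows because $|\mathds 1_{\{\mathsf A_t=k\}}-\mathsf w_{t,k}|\le 1$ and $\widehat{\mathsf{IF}}_{t-1}$ is assumed integrable under each $P^k$, so the conditional computation preceding \eqref{eq:bias_decomp} is legitimate, possibly in the generalized conditional-expectation sense. Since $\mathrm{diag}(\mathsf w_t)^{-1}$ is $\mathcal F^{\,\mathrm{rew}}_{t-1}$-measurable, the pull-out property applies to it as well. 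We would also observe that $\boldsymbol{\mathsf B}_t$ is $\mathcal F^{\,\mathrm{rew}}_{t-1}$-measurable, as noted after \eqref{eq:bias}. This confirms that the decomposition splits $\widehat{\boldsymbol{\mathsf G}}_t$ into a predictable part $\mathbf g_C(\mathsf w_t)+\boldsymbol{\mathsf B}_t$ and a conditionally centered part $\boldsymbol{\mathsf Z}_t$.
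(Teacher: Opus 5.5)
Your proposal is correct and follows the paper's proof essentially line by line. Both write $\widehat{\boldsymbol{\mathsf H}}_t$ as its conditional mean plus $\boldsymbol{\mathsf X}_t$, substitute \eqref{eq:bias_decomp}, multiply by $\mathrm{diag}(\mathsf w_t)^{-1}$, and pull this $\mathcal F^{\,\mathrm{rew}}_{t-1}$-measurable factor out of the conditional expectation. Your remarks on invertibility and on conditional integrability are a reasonable tidy-up of points the paper leaves implicit.
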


\begin{proof}
	From \eqref{eq:bias_decomp} 
	\begin{equation}\label{eq:conditional_identity_1}
		\mathrm{diag}(\mathsf w_t)^{-1}
		\mathbb E\left[\widehat{\boldsymbol{\mathsf{H}}}_{t}\mid\mathcal F^{\,\mathrm{rew}}_{t-1}\right]
		=
		\mathbf g_C(\mathsf w_t)+\boldsymbol{\mathsf B}_t.
	\end{equation}
	Since $\boldsymbol{\mathsf Z}_t=\mathrm{diag}(\mathsf w_t)^{-1}\boldsymbol{\mathsf X}_t$, using \eqref{eq:conditional_identity_1} we obtain
	\begin{align*}
		\widehat{\boldsymbol{\mathsf{G}}}_{t}
		&= \mathrm{diag}(\mathsf w_t)^{-1}\widehat{\boldsymbol{\mathsf{H}}}_{t}= \mathrm{diag}(\mathsf w_t)^{-1}\left(\boldsymbol{\mathsf X}_t + \mathbb E\left[\widehat{\boldsymbol{\mathsf{H}}}_{t}\mid\mathcal F^{\,\mathrm{rew}}_{t-1}\right]\right)\\
		&=
		\mathrm{diag}(\mathsf w_t)^{-1}
		\mathbb E\left[\widehat{\boldsymbol{\mathsf{H}}}_{t}\mid\mathcal F^{\,\mathrm{rew}}_{t-1}\right] + \boldsymbol{\mathsf Z}_t=\mathbf g_C(\mathsf w_t)+\boldsymbol{\mathsf Z}_t+\boldsymbol{\mathsf B}_t.
	\end{align*}
	Finally $\mathbb E\left[\boldsymbol{\mathsf Z}_t\mid\mathcal F^{\,\mathrm{rew}}_{t-1}\right]
	=\mathrm{diag}(\mathsf w_t)^{-1}\mathbb E\left[\boldsymbol{\mathsf X}_t\mid\mathcal F^{\,\mathrm{rew}}_{t-1}\right]
	=
	0$ since $\mathrm{diag}(\mathsf w_t)^{-1}$ is $\mathcal F^{\,\mathrm{rew}}_{t-1}$-measurable.
\end{proof}

\section{The influence-function ascent algorithm}
\label{sec:algorithm}

In this section, we introduce an influence-function ascent procedure described in Algorithm~\ref{alg:if-ascent-truncated}. The algorithm maintains unconstrained dual parameters (logits) $\mathsf h_t\in\mathbb R^K$ which induce a sampling distribution $\mathsf w_t=s(\mathsf h_t)$ on the simplex. At each round an action is drawn according to $\mathsf w_t$, a reward is observed, and a single-sample ascent direction is constructed using an estimator of the influence function associated with the statistical utility.

The resulting update admits two equivalent representations. In simplex coordinates it corresponds to an entropic mirror-ascent step with respect to the negative-entropy mirror map, whose Bregman divergence (see \cite{Bregman1967}) is the Kullback-Leibler (KL) divergence. In dual coordinates it reduces to an additive update in logits, yielding a closed-form multiplicative-weights update in the simplex after applying the softmax map.

To ensure bounded variance, we project the iterate onto the truncated simplex
\[
\Delta_\gamma
=
\left\{w\in\mathbb{R}^K: w_k\geqslant \gamma\, \forall\, k, \sum_{k=1}^K w_k=1\right\},
\]
for a fixed $\gamma\in(0,1/K)$ using the KL projection. Here, $\gamma$ acts as an approximation tolerance that trades off accuracy against the noise variance of the stochastic procedure. This type of constraint can also be viewed as a forced-exploration condition on the simplex. Similar lower bounds on action probabilities are standard in the adversarial bandit and online mirror-descent literature; see e.g., \cite{BubeckCesaBianchi2012,ShalevShwartz2012}.

Recall that the negative-entropy mirror map on $\mathring\Delta$ is given by
\begin{equation}\label{eq:entropy_mirror_map}
	\psi(w)=\sum_{k=1}^K w_k\log w_k,\qquad w\in\mathring{\Delta}.
\end{equation}
Its Bregman divergence is the Kullback-Leibler divergence
\[
D_\psi(u\|w)
=
\psi(u)-\psi(w)-\langle \nabla\psi(w),u-w\rangle
=
\sum_{k=1}^K u_k\log\frac{u_k}{w_k}=D_{\mathrm{KL}}(u\|w).
\]
Moreover,
\(
(\nabla\psi(w))_k=\log w_k+1
\)
so the corresponding dual coordinates are $h = \nabla\psi(w)=\log w+\mathbf 1,$ and hence $w=s(h)$. As usual, $h$ is defined only up to addition of a multiple of $\mathbf 1$ which does not change $s(h)$.

For $\gamma\in(0,1/K)$ consider the associated KL projection operator
\[
\Pi_{\Delta_\gamma}^{\mathrm{KL}}(w)
=
\argmin_{u\in\Delta_\gamma} D_{\mathrm{KL}}(u\|w).
\]
We use the standard entropic mirror-ascent geometry on the simplex, see e.g., \cite{BeckTeboulle2003}. For efficient computation of Bregman projections onto the simplex, see \cite{KricheneKricheneBayen2015}.

\begin{algorithm}
	\caption{Influence-function Mirror Ascent on $\Delta_\gamma$}
	\label{alg:if-ascent-truncated}
	\begin{algorithmic}[1]
		\Require horizon $T$, step sizes $(\eta_t)_{t=1}^N$, floor $\gamma\in(0,1/K)$, initial logits $\mathsf h_1\in\mathbb R^K$
		\State $\mathsf w_1 \gets \Pi_{\Delta_\gamma}^{\mathrm{KL}}(s(\mathsf h_1))\in\Delta_\gamma$
		\For{$t=1,\dots,T$}
		\State Draw action $\mathsf A_t\sim \mathsf w_t$
		\State Observe reward $\mathsf R_t\mid\{\mathsf A_t=k\}\sim P^k$
		\State Construct $\widehat{\mathsf{IF}}_{t-1}$ from $\mathcal F^{\,\mathrm{rew}}_{t-1}$
		\For{$k=1,\dots,K$}
		\State $\widehat{\mathsf H}_{t,k}\gets(\mathds{1}_{\{\mathsf A_t=k\}}-\mathsf w_{t,k})\,\widehat{\mathsf{IF}}_{t-1}(\mathsf R_t)$
		\State $\widehat{\mathsf G}_{t,k}\gets \widehat{\mathsf H}_{t,k}/\mathsf w_{t,k}$ \Comment{$\widehat{\boldsymbol{\mathsf G}}_t=\mathrm{diag}(\mathsf w_t)^{-1}\widehat{\boldsymbol{\mathsf H}}_t$}
		\State $\tilde{\mathsf w}_{t+1,k}\gets \mathsf w_{t,k}\exp(\eta_t\,\widehat{\mathsf G}_{t,k})$
		\EndFor
		\State $\bar{\mathsf w}_{t+1}\gets \tilde{\mathsf w}_{t+1}\Big/\sum_{j=1}^K \tilde{\mathsf w}_{t+1,j}$ \Comment{normalize to $\Delta$}
		\State $\mathsf w_{t+1}\gets \Pi_{\Delta_\gamma}^{\mathrm{KL}}(\bar{\mathsf w}_{t+1})\in\Delta_\gamma$
		\EndFor
	\end{algorithmic}
\end{algorithm}

At round $t\geqslant 1$, given an influence function estimator $\widehat{\mathsf{IF}}_{t-1}$, define the associated simplex-coordinate gradient estimator
\begin{equation}\label{eq:simplex_estimator}
	\widehat{\mathsf G}_{t,k}
	=
	\frac{\widehat{\mathsf H}_{t,k}}{\mathsf w_{t,k}}
	=
	\left(\frac{\mathds{1}_{\{\mathsf A_t=k\}}}{\mathsf w_{t,k}}-1\right)\,
	\widehat{\mathsf{IF}}_{t-1}(\mathsf R_t),
	\qquad k=1,\dots,K,
\end{equation}
where $\widehat{\mathsf H}_{t,k}$ is as in \eqref{eq:def_H}. In vector form 
\(
\widehat{\boldsymbol{\mathsf H}}_t = \mathrm{diag}(\mathsf w_t)\widehat{\boldsymbol{\mathsf G}}_t
\).
By Lemma~\ref{lem:IF-decomposition}, $\widehat{\boldsymbol{\mathsf G}}_t$ is a biased estimator of the centered simplex gradient $\mathbf g_C(\mathsf w_t)$. The estimator has the usual score times signal structure of likelihood-ratio policy-gradient methods, as in \cite{Williams1992}: the centered importance-weighted factor $\frac{\mathds{1}_{\{\mathsf A_t=k\}}}{\mathsf w_{t,k}}-1$ is the simplex-coordinate analogue of a score term, while the scalar signal is given here by the influence-function evaluation rather than by the raw reward; see also \cite{SuttonMcAllesterSinghMansour2000}.

The entropic mirror-ascent subproblem on the simplex is
\begin{equation}\label{eq:mirror_ascent_subproblem}
	\bar{\mathsf w}_{t+1}
	=
	\argmax_{w\in\Delta}
	\left\{
	\eta_t\langle \widehat{\boldsymbol{\mathsf G}}_{t},w\rangle
	-
	D_{\mathrm{KL}}(w\|\mathsf w_t)
	\right\}.
\end{equation}
A standard Lagrange multiplier calculation yields the multiplicative weights update
\begin{equation}\label{eq:mw_update}
	\bar{\mathsf w}_{t+1,k}
	=
	\frac{\mathsf w_{t,k}\exp\left(\eta_t\,\widehat{\mathsf G}_{t,k}\right)}
	{\sum_{j=1}^K \mathsf w_{t,j}\exp\left(\eta_t\,\widehat{\mathsf G}_{t,j}\right)},
	\qquad k=1,\dots,K.
\end{equation}
With the exploration floor $\Delta_\gamma$, we apply the KL projection
\(
\mathsf w_{t+1}=\Pi_{\Delta_\gamma}^{\mathrm{KL}}(\bar{\mathsf w}_{t+1})
\).
Finally, the multiplicative update \eqref{eq:mw_update} admits an equivalent additive form in dual coordinates: for any representative $\mathsf h_t$ such that $\mathsf w_t=s(\mathsf h_t)$, $\mathsf h_{t+1}=\mathsf h_t+\eta_t\,\widehat{\boldsymbol{\mathsf G}}_{t},$ $\bar{\mathsf w}_{t+1}=s(\mathsf h_{t+1}),$ $\mathsf w_{t+1}=\Pi_{\Delta_\gamma}^{\mathrm{KL}}(\bar{\mathsf w}_{t+1}).$ This is the dual coordinate mirror-ascent update associated with the negative-entropy mirror map that motivates Algorithm~\ref{alg:if-ascent-truncated}. 

The proof of the following Lemma is included for the sake of completeness. Closely related calculations can be found in \citet[Lemma 6.10]{Orabona2019} and in the entropic mirror-descent analysis of \citet[Theorem 28.4 and Proposition 28.7]{lattimore_bandit_2020}.

\begin{lemma}
	\label{lem:one_step_mirror}
	Suppose Assumption \ref{ass:A1} holds. Fix $t\geqslant 1$ and a step size $\eta_t>0$.
	Let $\mathsf w_t\in\Delta$ and $\bar{\mathsf w}_{t+1}$ as in \eqref{eq:mirror_ascent_subproblem}. Then, for every $u\in\Delta$
	\begin{equation}
		\label{eq:one_step_mirror_general}
		\langle \widehat{\boldsymbol{\mathsf G}}_t,\,u-\mathsf w_t\rangle
		\leqslant
		\frac{1}{\eta_t}\Big(
		D_{\mathrm{KL}}(u\|\mathsf w_t)-D_{\mathrm{KL}}(u\|\bar{\mathsf w}_{t+1})
		\Big)
		+\frac{\eta_t}{2}\,\|\widehat{\boldsymbol{\mathsf G}}_t\|_\infty^2.
	\end{equation}
\end{lemma}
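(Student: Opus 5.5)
The plan is to compute the Bregman difference exactly through the closed form \eqref{eq:mw_update} of the subproblem \eqref{eq:mirror_ascent_subproblem}, which reduces everything to a log-partition function, and then to bound that log-partition function with Hoeffding's lemma. Assumption~\ref{ass:A1} only enters through the definition of $\widehat{\boldsymbol{\mathsf G}}_t$. We need $\mathsf w_t$ to have positive coordinates so that $\widehat{\boldsymbol{\mathsf G}}_t$ and $D_{\mathrm{KL}}(u\|\mathsf w_t)$ are finite; in the algorithm this holds because $\mathsf w_t\in\Delta_\gamma$. Everything is done pathwise, for a fixed realization of $\widehat{\boldsymbol{\mathsf G}}_t$.

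\textbf{Step 1: exact identity.} Write $Z_t=\sum_{j}\mathsf w_{t,j}\exp(\eta_t\widehat{\mathsf G}_{t,j})$, so that $\bar{\mathsf w}_{t+1,k}=\mathsf w_{t,k}e^{\eta_t\widehat{\mathsf G}_{t,k}}/Z_t$. For any $u\in\Delta$ (with the convention $0\log 0=0$, so zero coordinates of $u$ cause no trouble), expand both divergences. The terms $\sum_k u_k\log u_k$ cancel, leaving
\[
D_{\mathrm{KL}}(u\|\mathsf w_t)-D_{\mathrm{KL}}(u\|\bar{\mathsf w}_{t+1})=\sum_k u_k\log\frac{\bar{\mathsf w}_{t+1,k}}{\mathsf w_{t,k}}=\eta_t\langle\widehat{\boldsymbol{\mathsf G}}_t,u\rangle-\log Z_t ,
\]
where the last equality uses $\sum_k u_k=1$.

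\textbf{Step 2: rearrange.} Dividing by $\eta_t$ and subtracting $\langle\widehat{\boldsymbol{\mathsf G}}_t,\mathsf w_t\rangle$ from both sides, the claim \eqref{eq:one_step_mirror_general} becomes equivalent to
\[
\log Z_t\leqslant \eta_t\langle\widehat{\boldsymbol{\mathsf G}}_t,\mathsf w_t\rangle+\tfrac{\eta_t^2}{2}\|\widehat{\boldsymbol{\mathsf G}}_t\|_\infty^2 .
\]

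\textbf{Step 3: Hoeffding's lemma.} Let $\mathsf I$ be an auxiliary index drawn from $\mathsf w_t$, independent of everything else, and set $Y=\widehat{\mathsf G}_{t,\mathsf I}$. Then $Z_t=\mathbb E[e^{\eta_t Y}]$ and $\mathbb E[Y]=\langle \widehat{\boldsymbol{\mathsf G}}_t,\mathsf w_t\rangle$. Moreover $Y$ takes values in an interval of length at most $2\|\widehat{\boldsymbol{\mathsf G}}_t\|_\infty$. Hoeffding's lemma then gives
\[
\log Z_t\leqslant \eta_t\mathbb E[Y]+\frac{\eta_t^2(2\|\widehat{\boldsymbol{\mathsf G}}_t\|_\infty)^2}{8},
\]
and the last term equals $\tfrac{\eta_t^2}{2}\|\widehat{\boldsymbol{\mathsf G}}_t\|_\infty^2$, which is exactly the inequality required in Step 2.

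The main point to get right is Step 3. The usual exponential-weights inequality $e^x\le 1+x+x^2$ requires a one-sided bound such as $\eta_t\widehat{\mathsf G}_{t,k}\le 1$, which we do not have, since the estimator \eqref{eq:simplex_estimator} can take either sign and be large. Hoeffding's lemma needs no sign or size restriction, only boundedness, and it yields exactly the constant $1/2$ in front of the $\|\cdot\|_\infty^2$ term. I would therefore use it rather than the quadratic Taylor bound.

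Note that the lemma concerns the unprojected iterate $\bar{\mathsf w}_{t+1}$. Passing to the projected $\mathsf w_{t+1}$ via the generalized Pythagorean inequality is left to the subsequent regret analysis.
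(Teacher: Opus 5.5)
Your proof is correct, but it takes a different route from the paper's. The paper uses the generic mirror-descent template. It starts from the first-order optimality condition of the concave subproblem \eqref{eq:mirror_ascent_subproblem} and applies the three-point Bregman identity. It then uses Pinsker's inequality ($1$-strong convexity of $\psi$ in $\|\cdot\|_1$) together with H\"older--Young to absorb $-\frac{1}{\eta_t}D_{\mathrm{KL}}(\bar{\mathsf w}_{t+1}\|\mathsf w_t)+\langle\widehat{\boldsymbol{\mathsf G}}_t,\bar{\mathsf w}_{t+1}-\mathsf w_t\rangle$ into $\frac{\eta_t}{2}\|\widehat{\boldsymbol{\mathsf G}}_t\|_\infty^2$. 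You instead substitute the closed form \eqref{eq:mw_update} and obtain an exact identity.

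The two arguments in fact control the same quantity. Since $\nabla\psi(\bar{\mathsf w}_{t+1})-\nabla\psi(\mathsf w_t)=\eta_t\widehat{\boldsymbol{\mathsf G}}_t-(\log Z_t)\mathbf 1$, the paper's optimality inequality holds with equality. Moreover $D_{\mathrm{KL}}(\bar{\mathsf w}_{t+1}\|\mathsf w_t)=\eta_t\langle\widehat{\boldsymbol{\mathsf G}}_t,\bar{\mathsf w}_{t+1}\rangle-\log Z_t$, so the paper's residual term is exactly your $\frac{1}{\eta_t}\log Z_t-\langle\widehat{\boldsymbol{\mathsf G}}_t,\mathsf w_t\rangle$. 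Pinsker-plus-Young and Hoeffding's lemma are then two faces of the same fact, linked by the Donsker--Varadhan (Gibbs) variational formula $\log\mathbb E_{w}[e^{\eta Y}]=\sup_{q}\{\eta\,\mathbb E_{q}[Y]-D_{\mathrm{KL}}(q\|w)\}$.

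The paper's route buys generality. It never uses the explicit form of $\bar{\mathsf w}_{t+1}$, and it works verbatim for any mirror map that is strongly convex with respect to $\|\cdot\|_1$, or for a subproblem posed directly on a convex subset such as $\Delta_\gamma$. Yours is specific to the entropic geometry but is fully explicit and elementary. Applying Hoeffding with the true range also gives the shift-invariant bound $\frac{\eta_t}{8}(\max_k\widehat{\mathsf G}_{t,k}-\min_k\widehat{\mathsf G}_{t,k})^2\leqslant\frac{\eta_t}{2}\|\widehat{\boldsymbol{\mathsf G}}_t\|_\infty^2$. The paper's route would reach the same sharper constant only after recentering $\widehat{\boldsymbol{\mathsf G}}_t$ by a multiple of $\mathbf 1$ before applying H\"older.
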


\begin{proof}
	Define  
	\(
	F(w)=\eta_t\langle \widehat{\boldsymbol{\mathsf G}}_t,w\rangle - D_{\mathrm{KL}}(w\|\mathsf w_t),
	\)
	with $w\in\Delta$.
	By definition $\bar{\mathsf w}_{t+1}$ maximizes $F$ over the convex set $\Delta$ and since $F$ is concave
	\begin{equation}
		\label{eq:opt_condition}
		\langle \nabla F(\bar{\mathsf w}_{t+1}),\,u-\bar{\mathsf w}_{t+1}\rangle \leqslant 0,
	\end{equation}
	for every $u\in\Delta$. Plugging $\nabla F(w)=\eta_t\widehat{\boldsymbol{\mathsf G}}_t-(\nabla\psi(w)-\nabla\psi(\mathsf w_t))$ into \eqref{eq:opt_condition} yields
	\begin{equation}
		\label{eq:key_ineq1}
		\eta_t\langle \widehat{\boldsymbol{\mathsf G}}_t,\,u-\bar{\mathsf w}_{t+1}\rangle
		\leqslant
		\langle \nabla\psi(\bar{\mathsf w}_{t+1})-\nabla\psi(\mathsf w_t),\,u-\bar{\mathsf w}_{t+1}\rangle,
	\end{equation}
	where $\psi$ is the mirror map defined in \eqref{eq:entropy_mirror_map}. 
	
	Apply the three-point Bregman identity (\cite{Bregman1967}) to rewrite the RHS of
	\eqref{eq:key_ineq1}:
	\[
	\langle \nabla\psi(\bar{\mathsf w}_{t+1})-\nabla\psi(\mathsf w_t),\,u-\bar{\mathsf w}_{t+1}\rangle
	=
	D_{\mathrm{KL}}(u\|\mathsf w_t)-D_{\mathrm{KL}}(u\|\bar{\mathsf w}_{t+1})
	-D_{\mathrm{KL}}(\bar{\mathsf w}_{t+1}\|\mathsf w_t).
	\]
	Hence
	\begin{equation}
		\label{eq:key_ineq2}
		\langle \widehat{\boldsymbol{\mathsf G}}_t,\,u-\bar{\mathsf w}_{t+1}\rangle
		\leqslant
		\frac{1}{\eta_t}\Big(
		D_{\mathrm{KL}}(u\|\mathsf w_t)-D_{\mathrm{KL}}(u\|\bar{\mathsf w}_{t+1})
		-D_{\mathrm{KL}}(\bar{\mathsf w}_{t+1}\|\mathsf w_t)
		\Big).
	\end{equation}
	Sum $\langle \widehat{\boldsymbol{\mathsf G}}_t,\bar{\mathsf w}_{t+1}-\mathsf w_t\rangle$ to both sides of \eqref{eq:key_ineq2} to get
	\begin{equation}
		\langle \widehat{\boldsymbol{\mathsf G}}_t,\,u-\mathsf w_t\rangle
		\leqslant
		\frac{1}{\eta_t}\Big(
		D_{\mathrm{KL}}(u\|\mathsf w_t)-D_{\mathrm{KL}}(u\|\bar{\mathsf w}_{t+1})
		\Big)
		-\frac{1}{\eta_t}D_{\mathrm{KL}}(\bar{\mathsf w}_{t+1}\|\mathsf w_t)
		+\langle \widehat{\boldsymbol{\mathsf G}}_t,\bar{\mathsf w}_{t+1}-\mathsf w_t\rangle .
		\label{eq:key_ineq3}
	\end{equation}
	
	Using the fact that $\psi$ is $1$-strongly convex w.r.t.\ $\|\cdot\|_1$ on the simplex (Pinsker's inequality, see for instance \cite{CoverThomas2006,CsiszarKorner2011}) and H\"older-Young's inequality, we get
	\[
	-\frac{1}{\eta_t}D_{\mathrm{KL}}(\bar{\mathsf w}_{t+1}\|\mathsf w_t)
	+\langle \widehat{\boldsymbol{\mathsf G}}_t,\bar{\mathsf w}_{t+1}-\mathsf w_t\rangle
	\leqslant
	\frac{\eta_t}{2}\|\widehat{\boldsymbol{\mathsf G}}_t\|_\infty^2.
	\]
	Plugging this into \eqref{eq:key_ineq3} yields \eqref{eq:one_step_mirror_general}.
\end{proof}

\section{Regret analysis}\label{sec:regret_analysis}

Throughout this section we study Algorithm~\ref{alg:if-ascent-truncated} on the truncated simplex $\Delta_\gamma$. We define the cumulative regret against the optimal utility on $\Delta_\gamma$:
\[
\mathsf{Reg}_\gamma(T)=\sum_{t=1}^T(U_\gamma^\star-U(\mathsf w_t)),
\qquad
U_\gamma^\star\in\max_{w\in\Delta_\gamma}U(w).
\]
In the sequel we denote $w^\star_\gamma\in\Delta_\gamma$ any point such that $U(w_\gamma^\star)=U^\star_\gamma$.

In addition to Assumption \ref{ass:A1}, for the next proposition we will assume the following:
\begin{assumption}\label{ass:A2}
	The function $U:\Delta\to\mathbb R$ is concave. In particular, for all
	$u,w\in\mathring{\Delta}$,
	\[
	U(u)\ \leqslant\ U(w)+\langle \mathbf g_C(w),u-w\rangle .
	\]
\end{assumption}
The decomposition below is the entropic mirror-ascent analogue of the standard one-step regret bound, with an additional term accounting for the bias of the stochastic gradient estimator. See for instance \cite{duchi2010composite}.

\begin{proposition}\label{prop:regret_decomp_concave}
	Suppose Assumptions~\ref{ass:A1} and \ref{ass:A2} hold. Let $\{\mathsf w_t\}_{t\geqslant 1}$ be the resulting sequence of mixture weights obtained by running Algorithm~\ref{alg:if-ascent-truncated}. Let $\mathsf D_t=D_{\mathrm{KL}}(w_\gamma^\star\|\mathsf w_t)$.
	Then, for all $T\geqslant 1$
	\begin{align}
		\mathbb E\left[\mathsf{Reg}_\gamma(T)\right]
		&\leqslant
		\sum_{t=1}^T \frac{1}{\eta_t}\mathbb E\left[\mathsf D_t-\mathsf D_{t+1}\right]
		+\frac{1}{2}\sum_{t=1}^T \eta_t\mathbb E\left[\|\widehat{\boldsymbol{\mathsf G}}_t\|_\infty^2\right]
		+\sum_{t=1}^T \mathbb E\left[\langle \boldsymbol{\mathsf B}_t,\mathsf w_t-w_\gamma^\star\rangle\right].
		\label{eq:regret_decomp_finiteT}
	\end{align}
	
\end{proposition}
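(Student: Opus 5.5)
The plan is to combine concavity (Assumption~\ref{ass:A2}) with the one-step mirror inequality (Lemma~\ref{lem:one_step_mirror}) and the bias--noise decomposition (Lemma~\ref{lem:IF-decomposition}), and then take expectations so that the martingale noise term vanishes.

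\textbf{Step 1 (linearize the regret).} Since $\mathsf w_t\in\Delta_\gamma\subset\mathring\Delta$ and $w_\gamma^\star\in\Delta_\gamma$, Assumption~\ref{ass:A2} applied with $u=w_\gamma^\star$ gives
\[
U_\gamma^\star-U(\mathsf w_t)\leqslant \langle \mathbf g_C(\mathsf w_t),w_\gamma^\star-\mathsf w_t\rangle .
\]

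\textbf{Step 2 (substitute the estimator).} By Lemma~\ref{lem:IF-decomposition}, $\mathbf g_C(\mathsf w_t)=\widehat{\boldsymbol{\mathsf G}}_t-\boldsymbol{\mathsf Z}_t-\boldsymbol{\mathsf B}_t$. Substituting, the per-round regret is bounded by
\[
\langle \widehat{\boldsymbol{\mathsf G}}_t,w_\gamma^\star-\mathsf w_t\rangle-\langle \boldsymbol{\mathsf Z}_t,w_\gamma^\star-\mathsf w_t\rangle+\langle\boldsymbol{\mathsf B}_t,\mathsf w_t-w_\gamma^\star\rangle .
\]
\begin{itemize}
\item The last term is already the bias term appearing in \eqref{eq:regret_decomp_finiteT}.
\item For the noise term, $w_\gamma^\star-\mathsf w_t$ is $\mathcal F^{\,\mathrm{rew}}_{t-1}$-measurable and $\mathbb E[\boldsymbol{\mathsf Z}_t\mid\mathcal F^{\,\mathrm{rew}}_{t-1}]=0$, so the tower property gives $\mathbb E\langle \boldsymbol{\mathsf Z}_t,w_\gamma^\star-\mathsf w_t\rangle=0$. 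This needs integrability, which holds because $\mathsf w_{t,k}\geqslant\gamma$ makes $\widehat{\boldsymbol{\mathsf G}}_t$ a bounded multiple of $\widehat{\mathsf{IF}}_{t-1}(\mathsf R_t)$, and that is integrable by the assumptions in Section~\ref{sec:if-estimators}.
\end{itemize}
If integrability of the square is an issue, the bound is trivial whenever the right-hand side of \eqref{eq:regret_decomp_finiteT} is infinite, so nothing is lost.

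\textbf{Step 3 (mirror inequality).} Apply Lemma~\ref{lem:one_step_mirror} with $u=w_\gamma^\star$:
\[
\langle \widehat{\boldsymbol{\mathsf G}}_t,w_\gamma^\star-\mathsf w_t\rangle\leqslant \frac1{\eta_t}\left(\mathsf D_t-D_{\mathrm{KL}}(w_\gamma^\star\|\bar{\mathsf w}_{t+1})\right)+\frac{\eta_t}{2}\|\widehat{\boldsymbol{\mathsf G}}_t\|_\infty^2 .
\]
The step I expect to be the main obstacle is replacing $D_{\mathrm{KL}}(w_\gamma^\star\|\bar{\mathsf w}_{t+1})$ by $\mathsf D_{t+1}=D_{\mathrm{KL}}(w_\gamma^\star\|\mathsf w_{t+1})$, where $\mathsf w_{t+1}=\Pi^{\mathrm{KL}}_{\Delta_\gamma}(\bar{\mathsf w}_{t+1})$. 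For this I would invoke the generalized Pythagorean inequality for Bregman projections onto the closed convex set $\Delta_\gamma$: for $u\in\Delta_\gamma$,
\[
D_{\mathrm{KL}}(u\|\bar{\mathsf w}_{t+1})\geqslant D_{\mathrm{KL}}(u\|\mathsf w_{t+1})+D_{\mathrm{KL}}(\mathsf w_{t+1}\|\bar{\mathsf w}_{t+1})\geqslant \mathsf D_{t+1}.
\]
It follows from the first-order optimality condition of the projection combined with the three-point identity, exactly as in the proof of Lemma~\ref{lem:one_step_mirror}. Because $w_\gamma^\star\in\Delta_\gamma$, this gives $-D_{\mathrm{KL}}(w_\gamma^\star\|\bar{\mathsf w}_{t+1})\leqslant -\mathsf D_{t+1}$.

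\textbf{Step 4 (sum).} Summing over $t=1,\dots,T$ and taking expectations yields \eqref{eq:regret_decomp_finiteT}. The $\eta_t$ are deterministic, so they pass through the expectations.
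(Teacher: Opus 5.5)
Your proposal is correct and follows essentially the same route as the paper: concavity at $\mathsf w_t$, the decomposition of Lemma~\ref{lem:IF-decomposition}, Lemma~\ref{lem:one_step_mirror} with $u=w_\gamma^\star$, and the projection inequality $D_{\mathrm{KL}}(w_\gamma^\star\|\bar{\mathsf w}_{t+1})\geqslant \mathsf D_{t+1}$, which the paper merely asserts and you justify via the generalized Pythagorean inequality for Bregman projections. The only difference is cosmetic. You remove the noise term with the tower property applied to unconditional expectations, whereas the paper first conditions on $\mathcal F^{\,\mathrm{rew}}_{t-1}$ and then takes total expectations; both versions implicitly need the expectations in the bound to be well defined (for instance via Assumption~\ref{ass:A4}).
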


\begin{proof}
	Fix $t\geqslant 1$. Assumption~\ref{ass:A2} gives
	\(
	U_\gamma^\star-U(\mathsf w_t)
	\leqslant
	\langle \mathbf g_C(\mathsf w_t),\,w_\gamma^\star-\mathsf w_t\rangle
	\).
	By Lemma \ref{lem:IF-decomposition}  we have
	\[
	\left\langle \mathbf g_C(\mathsf w_t), w_\gamma^\star-\mathsf w_t\right\rangle
	=
	\mathbb E\left[\langle \widehat{\boldsymbol{\mathsf G}}_t, w_\gamma^\star-\mathsf w_t\rangle\mid\mathcal F^{\,\mathrm{rew}}_{t-1}\right]
	-
	\left\langle \boldsymbol{\mathsf B}_t, w_\gamma^\star-\mathsf w_t\right\rangle.
	\]
	Since $\mathsf w_{t+1}=\Pi_{\Delta_\gamma}^{\mathrm{KL}}(\bar{\mathsf w}_{t+1})$, we have
	\(
	D_{\mathrm{KL}}(w_\gamma^\star \| \bar{\mathsf w}_{t+1})
	\geqslant
	D_{\mathrm{KL}}(w_\gamma^\star \| \mathsf w_{t+1})
	=
	\mathsf D_{t+1}
	\).
	Then Lemma \ref{lem:one_step_mirror} yields
	\[
	\langle \widehat{\boldsymbol{\mathsf G}}_t,w_\gamma^\star-\mathsf w_t\rangle
	\leqslant
	\frac{1}{\eta_t}\left(\mathsf D_t-D_{\mathrm{KL}}(w_\gamma^\star \| \bar{\mathsf w}_{t+1})\right)
	+\frac{\eta_t}{2}\|\widehat{\boldsymbol{\mathsf G}}_t\|_\infty^2
	\leqslant
	\frac{1}{\eta_t}(\mathsf D_t-\mathsf D_{t+1})
	+\frac{\eta_t}{2}\|\widehat{\boldsymbol{\mathsf G}}_t\|_\infty^2.
	\]
	Taking the conditional expectations given $\mathcal F^{\,\mathrm{rew}}_{t-1}$, then the total expectations and summing
	from $t=1$ to $T$ gives \eqref{eq:regret_decomp_finiteT}.
\end{proof}

For the results of this section we will use the following quantitative estimate of the KL-diameter of $\Delta_\gamma$. The finiteness is trivial by compactness and continuity, but we emphasize the explicit bound.

\begin{lemma}\label{lem:KL_diameter}
	For $\gamma\in(0,1/K)$,
	\(
	D_{\max}=\sup_{u,w\in\Delta_\gamma}D_{\mathrm{KL}}(u\|w)
	\leqslant \log\big(1/\gamma- (K-1)\big)<\infty.
	\)
\end{lemma}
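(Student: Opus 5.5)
The plan is to bound each summand of the Kullback--Leibler divergence by a pointwise ratio bound: on $\Delta_\gamma$ every coordinate is bounded below by $\gamma$ and above by $1-(K-1)\gamma$. No convexity or vertex enumeration is needed.

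First, I will establish the coordinatewise bounds on $\Delta_\gamma$. For $w\in\Delta_\gamma$ we have $w_k\geqslant\gamma$ by definition. Also
\[
w_k=1-\sum_{j\neq k}w_j\leqslant 1-(K-1)\gamma=:a
\]
for every $k$. Since $\gamma<1/K$, we have $a>\gamma>0$, so $a/\gamma=1/\gamma-(K-1)>1$ and $\log(a/\gamma)>0$.

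Second, I will bound the divergence term by term. For $u,w\in\Delta_\gamma$, every coordinate is strictly positive, so $D_{\mathrm{KL}}(u\|w)=\sum_k u_k\log(u_k/w_k)$ is well defined. Each ratio satisfies $u_k/w_k\leqslant a/\gamma$, and since $u_k\geqslant 0$ and $\log$ is increasing,
\[
u_k\log\frac{u_k}{w_k}\leqslant u_k\log\frac{a}{\gamma}.
\]
Summing over $k$ and using $\sum_k u_k=1$ gives
\[
D_{\mathrm{KL}}(u\|w)\leqslant\log\frac{a}{\gamma}=\log\bigl(1/\gamma-(K-1)\bigr).
\]
Taking the supremum over $u,w\in\Delta_\gamma$ yields the claimed bound on $D_{\max}$. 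Finiteness follows because $\gamma>0$.

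There is no real obstacle here. The only points to check are that $1/\gamma-(K-1)>1$, so the logarithm is finite and positive, and that all coordinates are positive, so no $0\log 0$ conventions arise. As a sanity check, the bound is essentially attained. Taking $u$ and $w$ to be distinct vertices of $\Delta_\gamma$, with $a$ placed on different coordinates, gives $D_{\mathrm{KL}}(u\|w)=(a-\gamma)\log(a/\gamma)$, which matches the bound up to the factor $a-\gamma\leqslant 1$.
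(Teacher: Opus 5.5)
Your proposal is correct and is essentially the paper's argument. The paper splits the divergence as $\sum_k u_k\log u_k-\sum_k u_k\log w_k$ and bounds the first sum by $\log(1-(K-1)\gamma)$ and the second by $\log\gamma$; your termwise bound $u_k/w_k\leqslant (1-(K-1)\gamma)/\gamma$ uses the same two coordinatewise facts, and your vertex-pair sanity check correctly shows that the bound is tight up to the factor $1-K\gamma$.
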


\begin{proof}
	For any $u,w\in\Delta_\gamma$,
	\(
	D_{\mathrm{KL}}(u\|w)
	\leqslant \sum_{k=1}^K u_k\log u_k - \log \gamma.
	\)
	Since $\log u_k \leqslant \log(\max_j u_j)$ for every $k$, we have
	\[
	\sum_{k=1}^K u_k\log u_k
	\leqslant \sum_{k=1}^K u_k \log\Big(\max_j u_j\Big)
	= \log\Big(\max_j u_j\Big).
	\]
	In $\Delta_\gamma$ we have $\max_j u_j\leqslant 1-(K-1)\gamma$, hence
	\(
	D_{\mathrm{KL}}(u\|w)\leqslant 
	\log\big(1/\gamma- (K-1)\big)\).
\end{proof}

For the following results we assume the standard variance-control requirement on the estimated influence-function signal.

\begin{assumption}\label{ass:A4}
	There exists $V<\infty$ such that
	$\mathbb E\left[\widehat{\mathsf{IF}}_{t-1}(\mathsf R_t)^2\mid\mathcal F^{\,\mathrm{rew}}_{t-1}\right]
	\leqslant V$ a.s.\ for all $t\geqslant 1$.
\end{assumption}

The first result establishes a finite time average regret bound with biased influence function estimates. It gives a $\varepsilon$-guarantee of average regret provided the bias and the learning rate are sufficiently small and the horizon is large enough.

\begin{theorem}\label{thm:avg_regret_eps_bias}
	Suppose Assumptions~\ref{ass:A1} to \ref{ass:A4} hold. Run Algorithm~\ref{alg:if-ascent-truncated} with a constant step size $\eta_t\equiv \eta>0$.
	Define
	\begin{equation*}\label{eq:average_bias}
		\beta_T
		=
		\frac{1}{T}\sum_{t=1}^T \mathbb E\left[\|\boldsymbol{\mathsf B}_t\|_\infty\right].
	\end{equation*}
	Given $\varepsilon>0$, if
	\(
	\beta_T<\varepsilon/4
	\),
	\(
	\eta < (\gamma^2/2V)\varepsilon
	\),
	and
	\(
	T > 4 D_{\max}/\eta\varepsilon,
	\)
	then
	\(
	\mathbb E[\mathsf{Reg}_\gamma(T)]/T
	<
	\varepsilon.
	\)
\end{theorem}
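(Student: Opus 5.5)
Start from the decomposition of Proposition~\ref{prop:regret_decomp_concave} with constant step $\eta_t\equiv\eta$ and bound its three sums one at a time.

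\textbf{Mirror term.} With $\eta$ constant, the first sum telescopes:
\[
\frac1\eta\,\mathbb E[\mathsf D_1-\mathsf D_{T+1}]\leqslant \frac{1}{\eta}\,\mathbb E[\mathsf D_1]\leqslant \frac{D_{\max}}{\eta}.
\]
Here we use $\mathsf D_{T+1}\geqslant 0$, and $\mathsf D_1\leqslant D_{\max}$ by Lemma~\ref{lem:KL_diameter}, since both $w_\gamma^\star$ and $\mathsf w_1$ lie in $\Delta_\gamma$.

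\textbf{Noise term.} From \eqref{eq:simplex_estimator}, $|\widehat{\mathsf G}_{t,k}|=|\mathds 1_{\{\mathsf A_t=k\}}/\mathsf w_{t,k}-1|\,|\widehat{\mathsf{IF}}_{t-1}(\mathsf R_t)|$. Because $\mathsf w_{t,k}\geqslant\gamma$, the prefactor is at most $\max(1/\gamma-1,1)\leqslant 1/\gamma$, using $\gamma<1/K\leqslant 1/2$. Hence $\|\widehat{\boldsymbol{\mathsf G}}_t\|_\infty^2\leqslant \gamma^{-2}\widehat{\mathsf{IF}}_{t-1}(\mathsf R_t)^2$. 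Conditioning on $\mathcal F^{\,\mathrm{rew}}_{t-1}$ and applying Assumption~\ref{ass:A4} gives $\mathbb E\|\widehat{\boldsymbol{\mathsf G}}_t\|_\infty^2\leqslant V/\gamma^2$. The second sum is therefore at most $\eta TV/(2\gamma^2)$. This crude bound is enough, so no sharper importance-weighting analysis is needed.

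\textbf{Bias term.} By H\"older, $\langle\boldsymbol{\mathsf B}_t,\mathsf w_t-w^\star_\gamma\rangle\leqslant\|\boldsymbol{\mathsf B}_t\|_\infty\|\mathsf w_t-w_\gamma^\star\|_1\leqslant 2\|\boldsymbol{\mathsf B}_t\|_\infty$. Summing, the third sum is at most $2T\beta_T$.

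\textbf{Combining.} Dividing by $T$,
\[
\frac{\mathbb E[\mathsf{Reg}_\gamma(T)]}{T}\leqslant \frac{D_{\max}}{\eta T}+\frac{\eta V}{2\gamma^2}+2\beta_T.
\]
Each hypothesis controls one term:
\begin{itemize}
\item $T>4D_{\max}/(\eta\varepsilon)$ gives $D_{\max}/(\eta T)<\varepsilon/4$;
\item $\eta<\gamma^2\varepsilon/(2V)$ gives $\eta V/(2\gamma^2)<\varepsilon/4$;
\item $\beta_T<\varepsilon/4$ gives $2\beta_T<\varepsilon/2$.
\end{itemize}
The total is strictly below $\varepsilon$. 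Every inequality above just needs to be non-strict, with strictness supplied by these hypotheses.

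\textbf{Main obstacle.} The only delicate points are technical. First, Assumption~\ref{ass:A2} is stated for interior points, and $w_\gamma^\star$ and $\mathsf w_t$ are interior because they lie in $\Delta_\gamma$ with $\gamma>0$. Second, Lemma~\ref{lem:IF-decomposition} needs $\mathsf w_t\in\mathring\Delta$, which the projection onto $\Delta_\gamma$ guarantees. Third, the expectations must be finite, which follows from Assumption~\ref{ass:A4} and the integrability of $\widehat{\mathsf{IF}}_{t-1}$ under each $P^k$.
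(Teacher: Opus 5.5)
Your proposal is correct and follows the paper's proof essentially line by line. The paper likewise applies Proposition~\ref{prop:regret_decomp_concave} with constant step size, bounds the three sums by $D_{\max}/\eta$, $\eta VT/(2\gamma^2)$ and $2T\beta_T$ to obtain \eqref{eq:regret_bound_finiteT}, and then allots $\varepsilon/4+\varepsilon/4+\varepsilon/2$ to the three terms. One minor aside: your prefactor bound $\max(1/\gamma-1,1)\leqslant 1/\gamma$ only needs $\gamma\leqslant 1$, not $\gamma<1/2$.
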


\begin{proof}
	We first show that for all $T\geqslant 1$
	\begin{equation}\label{eq:regret_bound_finiteT}
		\frac{\mathbb E[\mathsf{Reg}_\gamma(T)]}{T}
		\leqslant
		\frac{D_{\max}}{\eta T}
		+
		\frac{\eta}{2}\frac{V}{\gamma^2}
		+
		2\beta_T.
	\end{equation}
	By definition \eqref{eq:simplex_estimator} and the fact that $\mathsf w_t\in\Delta_\gamma$, we have for every $k$
	\[
	|\widehat{\mathsf G}_{t,k}|
	=
	\left|\left(\frac{\mathds{1}_{\{\mathsf A_t=k\}}}{\mathsf w_{t,k}}-1\right)\widehat{\mathsf{IF}}_{t-1}(\mathsf R_t)\right|
	\leqslant
	\frac{1}{\gamma}\,|\widehat{\mathsf{IF}}_{t-1}(\mathsf R_t)|.
	\]
	Hence
	$\|\widehat{\boldsymbol{\mathsf G}}_t\|_\infty^2
	\leqslant (\widehat{\mathsf{IF}}_{t-1}(\mathsf R_t)/\gamma)^2$. 
	Taking conditional expectations and using Assumption~\ref{ass:A4} yields
	\begin{equation}\label{eq:cotahatG}
		\mathbb E\left[\|\widehat{\boldsymbol{\mathsf G}}_t\|_\infty^2\mid\mathcal F^{\,\mathrm{rew}}_{t-1}\right]
		\leqslant
		\frac{V}{\gamma^2}\quad \text{a.s.}\qquad (t\geqslant 1)
	\end{equation}
	and therefore $\mathbb E[\|\widehat{\boldsymbol{\mathsf G}}_t\|_\infty^2]\leqslant V/\gamma^2$. Apply Proposition~\ref{prop:regret_decomp_concave} with the constant step size $\eta_t\equiv \eta$:
	\[
	\mathbb E[\mathsf{Reg}_\gamma(T)]
	\leqslant
	\sum_{t=1}^T \frac{1}{\eta}\,\mathbb E[\mathsf D_t-\mathsf D_{t+1}]
	+\frac12\sum_{t=1}^T \eta\,\mathbb E[\|\widehat{\boldsymbol{\mathsf G}}_t\|_\infty^2]
	+\sum_{t=1}^T \mathbb E[\langle \boldsymbol{\mathsf B}_t,\,\mathsf w_t-w_\gamma^\star\rangle].
	\]
	By \eqref{eq:cotahatG} we have
	\( (1/2)\sum_{t=1}^T \eta\,\mathbb E[\|\widehat{\boldsymbol{\mathsf G}}_t\|_\infty^2]
	\leqslant \eta VT/(2\gamma^2).
	\) Using 
	\(\langle \boldsymbol{\mathsf B}_t,\mathsf w_t-w_\gamma^\star\rangle
	\leqslant
	2\|\boldsymbol{\mathsf B}_t\|_\infty\),  
	taking expectations and summing 
	\begin{equation}\label{eq:bound_beta}
		\sum_{t=1}^T \mathbb E[\langle \boldsymbol{\mathsf B}_t,\mathsf w_t-w_\gamma^\star\rangle]
		\leqslant
		2\sum_{t=1}^T \mathbb E[\|\boldsymbol{\mathsf B}_t\|_\infty]
		=
		2T\beta_T.
	\end{equation}
	Finally since $\mathsf D_{T+1}\geqslant 0$
	\[
	\sum_{t=1}^T \frac{1}{\eta}\mathbb E[\mathsf D_t-\mathsf D_{t+1}]
	=
	\frac{1}{\eta}\mathbb E[\mathsf D_1-\mathsf D_{T+1}]
	\leqslant
	\frac{D_{\max}}{\eta}.
	\]
	Combining these bounds yields \eqref{eq:regret_bound_finiteT}. Plugging in the inequalities for $\beta_T$, $\eta$ and $T$ in terms of $\varepsilon$ gives the desired result.
\end{proof}

The second result gives a horizon-free regret bound when the step sizes are chosen as $\eta_t=\eta_0/\sqrt{t}$. The expected regret is controlled by the usual $\sqrt{T}$ term together with a cumulative bias contribution. In particular, when the bias contribution is sublinear, we obtain sublinear regret rate. See Section \ref{sec:bias_control}.

\begin{theorem}
	\label{thm:regret_concave_anytime_Dmax}
	Suppose Assumptions~\ref{ass:A1} to \ref{ass:A4} hold. Let $\eta_t=\eta_0/\sqrt{t}$.
	Then for all $T\geqslant 1$
	\begin{equation}\label{eq:regret_bound_anytime_Dmax_simplified}
		\mathbb E[\mathsf{Reg}_\gamma(T)]
		\leqslant
		\left(\frac{D_{\max}}{\eta_0}+\frac{\eta_0 V}{\gamma^2}\right)\sqrt{T}
		+
		2\sum_{t=1}^T \mathbb E\left[\|\boldsymbol{\mathsf B}_t\|_\infty\right].
	\end{equation}
\end{theorem}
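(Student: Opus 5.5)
We start from Proposition~\ref{prop:regret_decomp_concave} with $\eta_t=\eta_0/\sqrt t$ and bound its three sums separately. The bias sum is handled exactly as in the proof of Theorem~\ref{thm:avg_regret_eps_bias}: since $\mathsf w_t,w_\gamma^\star\in\Delta$, we have $\|\mathsf w_t-w_\gamma^\star\|_1\leqslant 2$. Hence $\langle \boldsymbol{\mathsf B}_t,\mathsf w_t-w_\gamma^\star\rangle\leqslant 2\|\boldsymbol{\mathsf B}_t\|_\infty$, which yields the term $2\sum_t\mathbb E\|\boldsymbol{\mathsf B}_t\|_\infty$ in \eqref{eq:regret_bound_anytime_Dmax_simplified}.

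For the variance sum we use \eqref{eq:cotahatG}, which relies only on $\mathsf w_t\in\Delta_\gamma$ and Assumption~\ref{ass:A4}, so $\mathbb E\|\widehat{\boldsymbol{\mathsf G}}_t\|_\infty^2\leqslant V/\gamma^2$. Therefore
\[
\frac12\sum_{t=1}^T\eta_t\,\mathbb E\|\widehat{\boldsymbol{\mathsf G}}_t\|_\infty^2\leqslant \frac{\eta_0V}{2\gamma^2}\sum_{t=1}^T t^{-1/2}\leqslant \frac{\eta_0V}{2\gamma^2}\cdot 2\sqrt T=\frac{\eta_0 V}{\gamma^2}\sqrt T,
\]
using the integral comparison $\sum_{t\leqslant T}t^{-1/2}\leqslant 2\sqrt T$. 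This matches the constant in the statement exactly.

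The main step is the telescoping term with time-varying step sizes, which we treat by Abel summation:
\[
\sum_{t=1}^T\frac{1}{\eta_t}(\mathsf D_t-\mathsf D_{t+1})=\frac{\mathsf D_1}{\eta_1}+\sum_{t=2}^{T}\Big(\frac1{\eta_t}-\frac1{\eta_{t-1}}\Big)\mathsf D_t-\frac{\mathsf D_{T+1}}{\eta_T}.
\]
The last term is nonpositive. The step sizes are nonincreasing, so $1/\eta_t-1/\eta_{t-1}\geqslant 0$. Every iterate lies in $\Delta_\gamma$, and so does $w_\gamma^\star$, so Lemma~\ref{lem:KL_diameter} gives $\mathsf D_t\leqslant D_{\max}$; this includes $t=1$, since $\mathsf w_1$ is projected. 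Bounding each $\mathsf D_t$ by $D_{\max}$ makes the sum telescope to $D_{\max}/\eta_T=D_{\max}\sqrt T/\eta_0$. Taking expectations preserves this bound because it holds pathwise.

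Adding the three contributions gives \eqref{eq:regret_bound_anytime_Dmax_simplified}. The only delicate point is the Abel summation: the monotonicity of $1/\eta_t$ is what allows the uniform diameter bound to be used in place of a genuine telescoping.
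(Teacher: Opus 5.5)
Your proposal is correct and follows essentially the same route as the paper. Starting from Proposition~\ref{prop:regret_decomp_concave}, you handle the KL term by summation by parts, using that $1/\eta_t$ is non-decreasing and that $\mathsf D_t\leqslant D_{\max}$. You bound the variance term via \eqref{eq:cotahatG} together with $\sum_{t\leqslant T}t^{-1/2}\leqslant 2\sqrt T$, and the bias term by H\"older's inequality with $\|\mathsf w_t-w_\gamma^\star\|_1\leqslant 2$. The only cosmetic difference is that the paper first records the general bound $D_{\max}/\eta_T+\frac{V}{2\gamma^2}\sum_{t}\eta_t+2\sum_{t}\mathbb E\|\boldsymbol{\mathsf B}_t\|_\infty$ for arbitrary non-increasing step sizes, and only then specializes to $\eta_t=\eta_0/\sqrt t$.
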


\begin{proof}
	We first show the following general inequality for non-increasing $\eta_t$:
	\begin{equation}\label{eq:regret_bound_anytime_Dmax_general}
		\mathbb E[\mathsf{Reg}_\gamma(T)]
		\leqslant
		\frac{D_{\max}}{\eta_T}
		+
		\frac{V}{2\gamma^2}\sum_{t=1}^T \eta_t
		+
		2\sum_{t=1}^T \mathbb E\left[\|\boldsymbol{\mathsf B}_t\|_\infty\right].
	\end{equation}
	Since $a_t=1/\eta_t$ is non-decreasing the discrete summation by parts identity gives
	\[
	\sum_{t=1}^T a_t(\mathsf{D}_t-\mathsf{D}_{t+1})
	=
	a_1\mathsf{D}_1+\sum_{t=2}^T (a_t-a_{t-1})\mathsf{D}_t-a_T\mathsf{D}_{T+1}.
	\]
	Using $0\leqslant \mathsf{D}_t\leqslant D_{\max}$ and $a_t-a_{t-1}\ge0$ yields
	\[
	\sum_{t=1}^T a_t\left(\mathsf{D}_t-\mathsf{D}_{t+1}\right)
	\leqslant
	a_1 D_{\max}+\sum_{t=2}^T (a_t-a_{t-1})D_{\max}
	=
	a_T D_{\max}
	=
	\frac{D_{\max}}{\eta_T}.
	\]
	Therefore, we can bound the first term of the RHS of \eqref{eq:regret_decomp_finiteT} by
	\(D_{\max}/\eta_T.\)
	By \eqref{eq:cotahatG} the second term of the RHS of \eqref{eq:regret_decomp_finiteT} can be bounded by  
	\(V/(2\gamma^2)\sum_{t=1}^T \eta_t.
	\)
	For the third term of the RHS of \eqref{eq:regret_decomp_finiteT} we use the same bound as in \eqref{eq:bound_beta}. Combining these three bounds gives \eqref{eq:regret_bound_anytime_Dmax_general}.
	
	Finally \eqref{eq:regret_bound_anytime_Dmax_simplified} follows by taking $\eta_t=\eta_0/\sqrt{t}$, using $\eta_T=\eta_0/\sqrt{T}$ and $\sum_{t=1}^T t^{-1/2}\leqslant 2\sqrt{T}$.
\end{proof}

\begin{corollary}\label{cor:optima_eta_0}
	Optimizing \eqref{eq:regret_bound_anytime_Dmax_simplified} over $\eta_0$ yields $\eta_0=\sqrt{D_{\max}\gamma^2/V}$ and hence
	\[
	\mathbb E[\mathsf{Reg}_\gamma(T)]
	\leqslant
	2\left(\frac{V}{\gamma^2}\,D_{\max}\right)^{1/2} \sqrt{T}
	+
	2\sum_{t=1}^T \mathbb E\left[\|\boldsymbol{\mathsf B}_t\|_\infty\right].
	\]
\end{corollary}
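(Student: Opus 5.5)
The corollary follows from Theorem~\ref{thm:regret_concave_anytime_Dmax} by elementary scalar minimization. The bias term $2\sum_{t=1}^T \mathbb E[\|\boldsymbol{\mathsf B}_t\|_\infty]$ in \eqref{eq:regret_bound_anytime_Dmax_simplified} does not depend on $\eta_0$. This needs a brief justification, since the iterates $\mathsf w_t$, and hence $\boldsymbol{\mathsf B}_t$, depend on the step sizes. I will therefore read the corollary as optimizing the explicit right-hand side of \eqref{eq:regret_bound_anytime_Dmax_simplified} as a function of $\eta_0$, with the bias sum treated as a separate additive term, exactly as the statement displays it. Under this reading, only the coefficient of $\sqrt T$ needs to be minimized:
\[
\phi(\eta_0)=\frac{D_{\max}}{\eta_0}+\frac{\eta_0 V}{\gamma^2},\qquad \eta_0>0.
\]

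The function $\phi$ is strictly convex on $(0,\infty)$ and tends to $+\infty$ at both ends, so its unique minimizer is found from $\phi'(\eta_0)=-D_{\max}/\eta_0^2+V/\gamma^2=0$. This gives $\eta_0=\sqrt{D_{\max}\gamma^2/V}$. The same minimizer also follows directly from AM--GM, which gives $\phi(\eta_0)\geqslant 2\sqrt{D_{\max}V/\gamma^2}$, with equality exactly when the two summands coincide.

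Substituting this $\eta_0$ makes each summand equal to $\sqrt{VD_{\max}/\gamma^2}$. Hence $\phi(\eta_0)=2\left(\frac{V}{\gamma^2}D_{\max}\right)^{1/2}$. Plugging this value into \eqref{eq:regret_bound_anytime_Dmax_simplified}, and keeping the bias term unchanged, gives the stated bound.

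We also check that the choice is admissible. By Lemma~\ref{lem:KL_diameter}, $D_{\max}<\infty$. If $D_{\max}>0$ and $V>0$, then $\eta_0>0$ is well defined. In the degenerate case $V=0$, $D_{\max}=0$ or both, the bound $2\sqrt{VD_{\max}/\gamma^2}=0$ is still attained in the limit, or by any choice of $\eta_0$ when both vanish. There is no substantive obstacle here; the only subtlety is the interpretation of the bias term noted above.
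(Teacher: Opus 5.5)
Your proposal is correct and matches the paper's argument, which the paper leaves implicit: minimize the coefficient $D_{\max}/\eta_0+\eta_0V/\gamma^2$ of $\sqrt T$ to get $\eta_0=\sqrt{D_{\max}\gamma^2/V}$ and the value $2(VD_{\max}/\gamma^2)^{1/2}$, carrying the bias sum along unchanged. Your caveat that $\boldsymbol{\mathsf B}_t$ depends on $\eta_0$ through the iterates is apt but harmless: Theorem~\ref{thm:regret_concave_anytime_Dmax} holds for every fixed $\eta_0$, so you run the algorithm with this particular $\eta_0$, and the bias sum in the bound is the one from that run.
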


\begin{corollary}
	\label{cor:avg_iterate_regret}
	Let $\bar{\mathsf w}_T=\frac{1}{T}\sum_{t=1}^T \mathsf w_t$. Then
	\[
	\mathbb E \left[U_\gamma^\star-U(\bar{\mathsf w}_T)\right]
	\leqslant
	2\left(\frac{V}{\gamma^2}\,D_{\max}\right)^{1/2} \sqrt{T}
	+
	2\sum_{t=1}^T \mathbb E\left[\|\boldsymbol{\mathsf B}_t\|_\infty\right].
	\]
\end{corollary}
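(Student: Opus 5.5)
The plan is to reduce the statement to Corollary~\ref{cor:optima_eta_0} by a Jensen-type argument based on the concavity in Assumption~\ref{ass:A2}. We run Algorithm~\ref{alg:if-ascent-truncated} with $\eta_t=\eta_0/\sqrt t$ and $\eta_0=\sqrt{D_{\max}\gamma^2/V}$, as in Corollary~\ref{cor:optima_eta_0}. Every iterate $\mathsf w_t$ lies in $\Delta_\gamma$ because it is the output of $\Pi^{\mathrm{KL}}_{\Delta_\gamma}$. Since $\Delta_\gamma$ is convex, the average $\bar{\mathsf w}_T=\frac1T\sum_{t=1}^T\mathsf w_t$ also lies in $\Delta_\gamma\subset\Delta$, so $U(\bar{\mathsf w}_T)$ is well defined and is compared to the same benchmark $U_\gamma^\star$.

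First I would use that $U$ is concave on $\Delta$ (Assumption~\ref{ass:A2}). This is a pathwise statement, so finite Jensen gives, for every $\omega$,
\[
U(\bar{\mathsf w}_T)\geqslant \frac1T\sum_{t=1}^T U(\mathsf w_t),
\qquad\text{hence}\qquad
U_\gamma^\star-U(\bar{\mathsf w}_T)\leqslant \frac1T\sum_{t=1}^T\big(U_\gamma^\star-U(\mathsf w_t)\big)=\frac{\mathsf{Reg}_\gamma(T)}{T}.
\]
Next I take expectations. Integrability is not an issue: $U$ is continuous on the compact set $\Delta$, hence bounded there. This gives $\mathbb E[U_\gamma^\star-U(\bar{\mathsf w}_T)]\leqslant \mathbb E[\mathsf{Reg}_\gamma(T)]/T$. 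Applying Corollary~\ref{cor:optima_eta_0} then bounds the left-hand side by $\frac1T$ times the right-hand side of the claimed inequality, which is the sharper $O(T^{-1/2})$ plus averaged-bias form.

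Finally, to obtain the statement exactly as written, I note that the right-hand side $C_T:=2(VD_{\max}/\gamma^2)^{1/2}\sqrt T+2\sum_{t=1}^T\mathbb E[\|\boldsymbol{\mathsf B}_t\|_\infty]$ is nonnegative. Since $T\geqslant1$, this gives $C_T/T\leqslant C_T$, and the claim follows. There is no real obstacle here. The only points that need care are that $\bar{\mathsf w}_T$ stays in the domain where concavity is assumed, which holds by convexity of $\Delta_\gamma$, and that the version with the factor $1/T$ is the natural one and implies the stated inequality.
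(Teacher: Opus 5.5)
Your proposal is correct and follows the paper's own argument: concavity of $U$ (Jensen applied pathwise to the average iterate, which stays in $\Delta_\gamma$) gives $U_\gamma^\star-U(\bar{\mathsf w}_T)\leqslant \mathsf{Reg}_\gamma(T)/T$, and Corollary~\ref{cor:optima_eta_0} finishes. You are also right that this yields the stated right-hand side divided by $T$, a factor the displayed statement appears to have dropped; since that right-hand side is nonnegative and $T\geqslant 1$, the inequality as written follows.
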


\begin{proof}
	Follows from Corollary \ref{cor:optima_eta_0} and concavity of $U$.
\end{proof}

\section{Bias control for plug-in influence functions}\label{sec:bias_control}

This section provides a compact reformulation and convenient upper bounds for the bias term
\(
\sum_{t=1}^T \mathbb E\left[\|\boldsymbol{\mathsf B}_t\|_\infty\right]
\)
appearing in Theorems~\ref{thm:avg_regret_eps_bias} and \ref{thm:regret_concave_anytime_Dmax}, and Corollaries \ref{cor:optima_eta_0} and \ref{cor:avg_iterate_regret}.

Define the error in the estimation of influence-function at time $t$ by
$\mathsf E_t(r)
=
\widehat{\mathsf{IF}}_{t-1}(r)-\IF\left[P^{\mathsf w_t}\right](r),$ $r\in\mathbb R.$

\begin{lemma}\label{lem:bias_centered_and_bounds}
	Suppose Assumptions~\ref{ass:A1} to \ref{ass:A4} hold. If in addition $\mathsf E_t\in L^2(P^k)$ for every $k\in\{1,\ldots,K\}$ then 
	\begin{equation}\label{eq:bias_bound_L2}
		\|\boldsymbol{\mathsf B}_t\|_\infty
		\leqslant
		2\max_{1\leqslant k\leqslant K}
		\left\|\mathsf E_t(\mathsf R)\right\|_{L^2(P^k)}.
	\end{equation}
\end{lemma}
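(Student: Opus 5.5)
Our approach is to rewrite the bias vector in terms of the error $\mathsf E_t$ alone, exploiting the fact that the true influence function produces exactly the centered gradient. By \eqref{eq:ic_gradient_def} and Lemma~\ref{lem:centered-gradient}, for $\mathsf w_t\in\Delta_\gamma\subset\mathring\Delta$ we have
\[
(\mathbf g_C(\mathsf w_t))_k=\mathbb E_{P^k}\left[\IF[P^{\mathsf w_t}](\mathsf R)\right],
\qquad
\mathbb E_{P^{\mathsf w_t}}\left[\IF[P^{\mathsf w_t}](\mathsf R)\right]=0 .
\]
We subtract these from the definition \eqref{eq:bias}, working $\omega$-wise with $\mathsf w_t$ and $\widehat{\mathsf{IF}}_{t-1}$ frozen, which is legitimate since both are $\mathcal F^{\,\mathrm{rew}}_{t-1}$-measurable and the expectations are integrals against $P^k(dr)$. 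This gives the centered representation
\[
\mathsf B_{t,k}=\mathbb E_{P^k}\left[\mathsf E_t(\mathsf R)\right]-\mathbb E_{P^{\mathsf w_t}}\left[\mathsf E_t(\mathsf R)\right].
\]
Splitting the expectation is justified because $\widehat{\mathsf{IF}}_{t-1}$ is $P^k$-integrable by standing assumption, and $\mathsf E_t\in L^2(P^k)\subset L^1(P^k)$. It follows that $\IF[P^{\mathsf w_t}]$ is also $P^k$-integrable, since it is the difference of two integrable functions.

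Next we bound each term. For the first, the triangle inequality for integrals and Cauchy--Schwarz (equivalently Jensen) give $|\mathbb E_{P^k}[\mathsf E_t]|\le\|\mathsf E_t\|_{L^1(P^k)}\le\|\mathsf E_t\|_{L^2(P^k)}$. For the second we use the mixture structure $P^{\mathsf w_t}=\sum_j\mathsf w_{t,j}P^j$:
\[
|\mathbb E_{P^{\mathsf w_t}}[\mathsf E_t]|\le\sum_j\mathsf w_{t,j}\,|\mathbb E_{P^j}[\mathsf E_t]|\le\max_j\|\mathsf E_t\|_{L^2(P^j)},
\]
where the last step holds because the weights $\mathsf w_{t,j}$ sum to one.

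Adding the two bounds gives $|\mathsf B_{t,k}|\le 2\max_j\|\mathsf E_t(\mathsf R)\|_{L^2(P^j)}$ for every $k$. Taking the maximum over $k$ yields \eqref{eq:bias_bound_L2}.

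There is no real obstacle here. The only point that needs care is measurability and integrability, namely making sure the identity for $\mathbf g_C(\mathsf w_t)$ is applied at the random point $\mathsf w_t$ pointwise in $\omega$, which requires Assumption~\ref{ass:A1} to hold at each $\mathsf w_t\in\Delta_\gamma$. Assumption~\ref{ass:A4} is not actually needed for this inequality.
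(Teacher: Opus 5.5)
Your proposal is correct and follows essentially the paper's argument: you rewrite $\mathsf B_{t,k}=\mathbb E_{P^k}[\mathsf E_t(\mathsf R)]-\mathbb E_{P^{\mathsf w_t}}[\mathsf E_t(\mathsf R)]$ using the centered-gradient identity and the mean-zero property of the influence function, then bound the mixture term by a maximum over arms and finish with Cauchy--Schwarz. The only difference is that the paper takes the maximum of $|\mathbb E_{P^j}[\mathsf E_t(\mathsf R)]|$ before applying Cauchy--Schwarz, whereas you apply it first, which is immaterial.
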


\begin{proof}
	By definition of $\mathsf{E}_t$ we have $\mathsf B_{t,k}
	=
	\mathbb E_{P^k}\left[\mathsf E_t(\mathsf R)\right]-\mathbb E_{P^{\mathsf w_t}}\left[\mathsf E_t(\mathsf R)\right]$. Also
	\(
	\left|\mathbb E_{P^{\mathsf w_t}}[\mathsf E_t(\mathsf R)]\right|
	\leqslant
	\max_j \left|\mathbb E_{P^j}[\mathsf E_t(\mathsf R)]\right|.
	\)
	Therefore $|\mathsf B_{t,k}|
	\leqslant
	2\max_{1\leqslant j\leqslant K}\left|\mathbb E_{P^j}\left[\mathsf E_t(\mathsf R)\right]\right|$ 
	which by Cauchy-Schwarz inequality implies \eqref{eq:bias_bound_L2}. 
\end{proof}

Lemma~\ref{lem:bias_centered_and_bounds} reduces the control of
\(
\sum_{t=1}^T \mathbb E\left[\|\boldsymbol{\mathsf B}_t\|_\infty\right]
\)
to bounding the arm-wise $L^2$ error of the influence-function estimator. We now give a simple set of sufficient conditions together with a concrete
plug-in construction, under which the cumulative bias grows at most on the order of $\sqrt{T}$.

For each arm $k$ let $\mathsf P^{k}_{t-1}$ be as in \eqref{eq:arm_empirical}. Given the current sampling distribution $\mathsf w_t\in\Delta_\gamma$ define the empirical mixture
$\widehat{\mathsf P}^{\mathsf w_t}
=
\sum_{k=1}^K \mathsf w_{t,k}\mathsf P^{k}_{t-1}.$ 
Consider the plug-in estimator of the influence function
\begin{equation}\label{eq:plugin_IF}
	\widehat{\mathsf{IF}}_{t-1}=
	\IF\left[\widehat{\mathsf P}^{\mathsf w_t}\right].
\end{equation}

We will need the following additional assumptions:

\begin{assumption}\label{ass:A5}
	There exists $p\in[1,\infty)$ such that $d(\cdot,\cdot)^p$ is jointly convex.
\end{assumption}

\begin{assumption}\label{ass:A6}
	There exists a constant $L<\infty$ such that for every $t\geqslant 1$ and arm $k$
	\begin{equation*}\label{eq:IF_stability_new}
		\left\|\mathsf E_t(\mathsf R)
		\right\|_{L^2(P^k)}
		\leqslant
		Ld\left(\widehat{\mathsf P}^{\mathsf w_t},P^{\mathsf w_t}\right)
		\quad \text{a.s.}
	\end{equation*}
\end{assumption}

\begin{assumption}\label{ass:A7}
	There exists a constant $C_d<\infty$ such that for every arm $k$ and every $n\geqslant 1$, if $\mathsf Q_{k,n}$ is the empirical distribution of $n$ iid samples from $P^k$ then
	\begin{equation*}\label{eq:d_empirical_rate}
		\mathbb E\left[d\left(\mathsf Q_{k,n},P^k\right)\right]\leqslant \frac{C_d}{\sqrt{n}}.
	\end{equation*}    
\end{assumption}

\begin{proposition}\label{prop:bias_sqrtT}
	Suppose Assumptions~\ref{ass:A1} to \ref{ass:A7} hold. Run Algorithm~\ref{alg:if-ascent-truncated} and let $\widehat{\mathsf{IF}}_{t-1}$ be as in \eqref{eq:plugin_IF}. Then for all $t\geqslant 2$,
	\begin{equation}\label{eq:Bt_pointwise_bound_new}
		\mathbb E\left[\|\boldsymbol{\mathsf B}_t\|_\infty\right]
		\leqslant
		\frac{4LC_d}{\sqrt{\gamma (t-1)}}
		+
		2LC_d\exp\left(-\frac{\gamma^2}{8}(t-1)\right)
		\leqslant
		\frac{8LC_d}{\sqrt{\gamma}}\frac{1}{\sqrt{t}}.
	\end{equation}
\end{proposition}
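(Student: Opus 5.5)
Combining Lemma~\ref{lem:bias_centered_and_bounds} with Assumption~\ref{ass:A6} gives, almost surely,
\[
\|\boldsymbol{\mathsf B}_t\|_\infty\leqslant 2L\,d\big(\widehat{\mathsf P}^{\mathsf w_t},P^{\mathsf w_t}\big).
\]
So we only need to bound $\mathbb E[d(\widehat{\mathsf P}^{\mathsf w_t},P^{\mathsf w_t})]$. Both measures are mixtures with the same weights $\mathsf w_t$, so joint convexity of $d^p$ (Assumption~\ref{ass:A5}, extended to $K$-fold mixtures by induction) gives
\[
d\big(\widehat{\mathsf P}^{\mathsf w_t},P^{\mathsf w_t}\big)\leqslant \Big(\sum_k \mathsf w_{t,k}\, d(\mathsf P^k_{t-1},P^k)^p\Big)^{1/p}\leqslant \max_k d(\mathsf P^k_{t-1},P^k)\leqslant \sum_k d(\mathsf P^k_{t-1},P^k).
\]
To keep the constant free of $K$, I would rather work with the weighted average. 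Where it can be used with $p=1$ (or via Jensen), it gives $\le \sum_k \mathsf w_{t,k} d(\mathsf P^k_{t-1},P^k)$. I expect the stated constants to come from a bound on each arm term on a good event where $\mathsf N_{t-1,k}$ is large, plus a crude bound off that event.

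The good event is $\mathcal G_{t,k}=\{\mathsf N_{t-1,k}\geqslant \gamma(t-1)/2\}$. Since $\mathsf w_{s,k}\geqslant\gamma$ for every $s$, the increments $\mathds 1_{\{\mathsf A_s=k\}}-\mathsf w_{s,k}$ are bounded martingale differences. Hence $\mathsf N_{t-1,k}\geqslant\sum_{s<t}\mathsf w_{s,k}-(\text{martingale})$, and Azuma--Hoeffding gives $\mathbb P(\mathcal G_{t,k}^c)\leqslant \exp(-2(\gamma(t-1)/2)^2/(t-1))=\exp(-\gamma^2(t-1)/2)$. The stated $\exp(-\gamma^2(t-1)/8)$ is weaker than this, so there is slack.

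On $\mathcal G_{t,k}$, write $\mathsf P^k_{t-1}=\mathsf Q_{k,\mathsf N_{t-1,k}}$ using the pull-indexed iid sequence from the proof of Proposition~\ref{prop:optimal_stationary}. The key subtlety is that $\mathsf N_{t-1,k}$ is random and depends on the rewards through the adaptive policy. I would handle this by a maximal-type bound over $n\geqslant \gamma(t-1)/2$, or more simply by noting that the rewards of arm $k$ form an iid sequence $(\mathsf X_{k,m})$ independent of which index $n$ is reached. A clean route is
\[
\mathbb E\big[d(\mathsf Q_{k,\mathsf N},P^k)\mathds 1_{\{\mathsf N\geqslant n_0\}}\big]\leqslant \sup_{n\geqslant n_0}(\ldots),
\]
which is only justified if $\mathsf N$ is independent of the $\mathsf X$'s. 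That is not true, so this is the main obstacle. What I would try first is to take the $p$-th power and sum over all $n$, or to use a stopping-time argument in which $d(\mathsf Q_{k,n},P^k)\sqrt n$ is controlled uniformly in expectation. Failing that, I would accept whatever the paper's intended reading of Assumption~\ref{ass:A7} is, applied at $n=\mathsf N_{t-1,k}\geqslant\gamma(t-1)/2$. Either way, the good-event contribution is $C_d\sqrt{2/(\gamma(t-1))}$. Multiplied by $2L$ this gives at most $4LC_d/\sqrt{\gamma(t-1)}$ (since $2\sqrt2\le4$).

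On $\mathcal G^c$, I need an a.s. or expectation bound on $d$ that is of order $C_d$. With at least one sample, Assumption~\ref{ass:A7} at $n\geqslant1$ gives $\mathbb E d\leqslant C_d$. Multiplying by the tail probability (again modulo the dependence issue, e.g.\ via Cauchy--Schwarz with a second-moment version) gives the term $2LC_d\exp(-\gamma^2(t-1)/8)$; the weaker exponent absorbs the Cauchy--Schwarz square root. For the final simplification I would use $t-1\geqslant t/2$ for $t\geqslant2$, which turns the first term into $4\sqrt2LC_d/\sqrt{\gamma t}$. For the second term I would use $e^{-x}\leqslant 1/\sqrt{2ex}$, so that $\exp(-\gamma^2(t-1)/8)\lesssim 1/(\gamma\sqrt{t})$. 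This needs care to land under $8/\sqrt{\gamma t}$, perhaps via $\gamma<1$ and $\gamma^2\leqslant\gamma$ together with bounding the exponential by $\sqrt{\gamma}$-type factors. I expect this last inequality to be a routine but fiddly check.
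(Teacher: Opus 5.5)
Your outline follows the paper's proof step for step: Lemma~\ref{lem:bias_centered_and_bounds} with Assumption~\ref{ass:A6}, joint convexity to reach the arms, an Azuma good event at level $\gamma(t-1)/2$, Assumption~\ref{ass:A7} at the random count, and a crude bound off the good event. Your exponent $\gamma^2(t-1)/2$ is a correct sharpening. But the steps you leave open are exactly the ones that fail. Deferring to ``the paper's intended reading'' does not close them, because the paper makes the same moves without justification.

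\textbf{The random count.} The paper conditions on $\{\mathsf N_{t-1,k}=n\}$ and declares the $n$ arm-$k$ rewards iid $P^k$. Under adaptive weights this is false: given $\mathsf N_{2,k}=2$, the law of $\mathsf R_1$ is $P^k$ reweighted by $\mathsf w_{2,k}$, which depends on $\mathsf R_1$. The missing ingredient is a \emph{maximal} form of Assumption~\ref{ass:A7}. Write $d_k=d(\mathsf P^k_{t-1},P^k)$ and $n_0=\lceil\gamma(t-1)/2\rceil$. The pull-indexed rewards are iid $P^k$ and $\mathsf P^k_{t-1}=\mathsf Q_{k,\mathsf N_{t-1,k}}$, so on $\mathcal G_{t,k}$ you have $d_k\leqslant\sup_{n\geqslant n_0}d(\mathsf Q_{k,n},P^k)$. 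You therefore need $\mathbb E[\sup_{n\geqslant n_0}d(\mathsf Q_{k,n},P^k)]\leqslant C_d'/\sqrt{n_0}$. This does not follow from the fixed-$n$ bound in Assumption~\ref{ass:A7}. It does hold for $W_1$ on $[-r_{\max},r_{\max}]$ with $C_d'=2r_{\max}$: $n\mapsto\mathsf Q_{k,n}(x)-P^k(x)$ is a reverse martingale, and Doob's $L^2$ inequality gives $\mathbb E[\sup_{n\geqslant n_0}|\mathsf Q_{k,n}(x)-P^k(x)|]\leqslant1/\sqrt{n_0}$.

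\textbf{Off the good event.} Your Cauchy--Schwarz step there has the same random-index problem. It also needs $\mathsf N_{t-1,k}\geqslant1$, which fails at $t=2$ for every unpulled arm, since $\mathsf P^k_1$ is then an arbitrary default. A uniform bound on $d$ is needed there, such as the diameter under the bounded support of Section~\ref{sec:applications}.

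\textbf{The weighted-average step.} Your move is backwards. For $p\geqslant1$, $(\sum_k\mathsf w_{t,k}d_k^p)^{1/p}\geqslant\sum_k\mathsf w_{t,k}d_k$. Even for $p=1$, the weights are random and correlated with the $d_k$, so expectation-only bounds give $\mathbb E[\max_kd_k]\leqslant\sum_k\mathbb E[d_k]$, which costs a factor $K$. The paper's $K$-free constant comes from replacing $\mathbb E[\max_k\,\cdot\,]$ by $\max_k\mathbb E[\,\cdot\,]$, which is not justified either.

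\textbf{The final simplification.} This is not a fiddly check: the second inequality in the statement is false for small $\gamma$. Near $t=4/\gamma^2$ the quantity $\sqrt{\gamma t}\,e^{-\gamma^2(t-1)/8}$ is of order $\gamma^{-1/2}$, so the exponential term is not $O(1/\sqrt{\gamma t})$ uniformly in $\gamma$. Concretely, take $\gamma=0.1$ (admissible for $K\leqslant9$) and $t=400$. The middle expression is about $1.85\,LC_d$, while $8LC_d/\sqrt{\gamma t}\approx1.26\,LC_d$. Your estimate $e^{-x}\leqslant(2ex)^{-1/2}$ correctly produces order $1/(\gamma\sqrt t)$, and $\gamma^2\leqslant\gamma$ points the wrong way. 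No rearrangement recovers $1/\sqrt{\gamma t}$; the paper simply asserts this step. What the middle expression does imply, for $t\geqslant2$, is $\mathbb E[\|\boldsymbol{\mathsf B}_t\|_\infty]\leqslant10LC_d/(\gamma\sqrt t)$. That still gives an $O(\sqrt T)$ cumulative bias, with $1/\gamma$ in place of $1/\sqrt\gamma$.
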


\begin{proof}
	Fix $t\geqslant 2$. By \eqref{eq:bias_bound_L2} and Assumption~\ref{ass:A6} we have
	\(
	\|\boldsymbol{\mathsf B}_t\|_\infty
	\leqslant
	2 L d\left(\widehat{\mathsf P}^{\mathsf w_t},P^{\mathsf w_t}\right)
	\)
	a.s..
	Taking expectations yields
	\begin{equation}\label{eq:Bt_expectation_reduction}
		\mathbb E\left[\|\boldsymbol{\mathsf B}_t\|_\infty\right]
		\leqslant
		2L\mathbb E\left[d\left(\widehat{\mathsf P}^{\mathsf w_t},P^{\mathsf w_t}\right)\right].
	\end{equation}
	
	We now bound
	$\mathbb E\left[d\left(\widehat{\mathsf P}^{\mathsf w_t},P^{\mathsf w_t}\right)\right]$.
	By Assumption \ref{ass:A5} we have
	\begin{equation*}
		d\left(\widehat{\mathsf P}^{\mathsf w_t},P^{\mathsf w_t}\right)^p
		=
		d\left(\sum_{k=1}^K \mathsf w_{t,k}\mathsf P^{k}_{t-1}, \sum_{k=1}^K \mathsf w_{t,k}P^k\right)^p
		\leqslant
		\sum_{k=1}^K \mathsf w_{t,k} d\left(\mathsf P^{k}_{t-1},P^k\right)^p.
	\end{equation*}
	Using $\sum_k \mathsf w_{t,k}=1$ and taking expectations yields
	\begin{equation}\label{eq:mix_expect_reduce}
		\mathbb E\left[d\left(\widehat{\mathsf P}^{\mathsf w_t},P^{\mathsf w_t}\right)\right]
		\leqslant
		\max_{1\leqslant k\leqslant K}\mathbb E\left[d\left(\mathsf P^{k}_{t-1},P^k\right)\right].
	\end{equation}
	
	Fix $k$. Given $\mathsf N_{t-1,k}=n$, the distribution $\mathsf P^{k}_{t-1}$ is the empirical law of $n$ iid samples from $P^k$, hence by Assumption~\ref{ass:A7} we have \(\mathbb E[d(\mathsf P^{k}_{t-1},P^k)|\mathsf N_{t-1,k}=n]\leqslant C_d/\sqrt{n}.\)
	Therefore
	\begin{equation}\label{eq:reduce_to_inverse_sqrt_N}
		\mathbb E\left[d\left(\mathsf P^{k}_{t-1},P^k\right)\right]
		\leqslant
		C_d\mathbb E\left[\frac{1}{\sqrt{\mathsf N_{t-1,k}}}\right].
	\end{equation}
	
	Since the algorithm runs on $\Delta_\gamma$ we have $\mathbb E[\mathds 1_{\{\mathsf A_s=k\}}\mid\mathcal F^{\,\mathrm{rew}}_{s-1}]=\mathsf w_{s,k}\geqslant\gamma$ for all $s$. Let $\mathsf Y_s=\mathds 1_{\{\mathsf A_s=k\}}-\mathsf w_{s,k}$ so that $\left\{\sum_{s=1}^n \mathsf Y_s\right\}_{n\geqslant 1}$ is a martingale with increments in $[-1,1]$ w.r.t $\{\mathcal F^{\,\mathrm{rew}}_{n}\}_{n\geqslant 1}$. Since $\mathsf N_{t-1,k}=\sum_{s=1}^{t-1}\mathsf w_{s,k}+\sum_{s=1}^{t-1}\mathsf Y_s\geqslant \gamma(t-1)+\sum_{s=1}^{t-1}\mathsf Y_s$, Azuma's inequality yields
	\[
	\mathbb P\left(\mathsf N_{t-1,k}\leqslant \frac{\gamma}{2}(t-1)\right)
	\leqslant
	\mathbb P\left(\sum_{s=1}^{t-1}\mathsf Y_s\leqslant -\frac{\gamma}{2}(t-1)\right)
	\leqslant
	\exp\left(-\frac{\gamma^2}{8}(t-1)\right).
	\]
	Consequently
	\[
	\mathbb E\left[\frac{1}{\sqrt{\mathsf N_{t-1,k}}}\right]
	\leqslant
	\frac{1}{\sqrt{\frac{\gamma}{2}(t-1)}} +\mathbb P\left(\mathsf N_{t-1,k}\leqslant \frac{\gamma}{2}(t-1)\right)
	\leqslant
	\sqrt{\frac{2}{\gamma(t-1)}}+\exp\left(-\frac{\gamma^2}{8}(t-1)\right).
	\]
	Plugging this into \eqref{eq:reduce_to_inverse_sqrt_N} and then into \eqref{eq:mix_expect_reduce} gives
	\begin{equation}\label{eq:mixture_rate_derived}
		\mathbb E\left[d\left(\widehat{\mathsf P}^{\mathsf w_t},P^{\mathsf w_t}\right)\right]
		\leqslant
		\frac{\sqrt{2}C_d}{\sqrt{\gamma (t-1)}} + C_d\exp\left(-\frac{\gamma^2}{8}(t-1)\right).
	\end{equation}
	Combining \eqref{eq:Bt_expectation_reduction} with \eqref{eq:mixture_rate_derived} yields the first bound in \eqref{eq:Bt_pointwise_bound_new} from which the second one follows (for $t\geqslant 2$).
\end{proof}

\begin{corollary}
	For every $T\geqslant 2$ we have
	\(
	\sum_{t=1}^T \mathbb E\left[\|\boldsymbol{\mathsf B}_t\|_\infty\right]
	\leqslant 16LC_d\sqrt{T}/\sqrt{\gamma}.
	\)
\end{corollary}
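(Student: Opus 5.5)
The corollary should follow by summing the pointwise bound of Proposition~\ref{prop:bias_sqrtT} over $t$. The only complication is the $t=1$ term, which the proposition does not cover. The plan is to split the sum as
\[
\sum_{t=1}^T \mathbb E\left[\|\boldsymbol{\mathsf B}_t\|_\infty\right]
=\mathbb E\left[\|\boldsymbol{\mathsf B}_1\|_\infty\right]+\sum_{t=2}^T \mathbb E\left[\|\boldsymbol{\mathsf B}_t\|_\infty\right].
\]
For $t\geqslant 2$ we apply the second inequality in \eqref{eq:Bt_pointwise_bound_new}, which gives the summand bound $8LC_d/(\sqrt{\gamma}\sqrt t)$. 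We then use the standard integral comparison $\sum_{t=2}^T t^{-1/2}\leqslant \int_1^T x^{-1/2}\,dx = 2\sqrt T-2$, so the tail sum is at most $\frac{8LC_d}{\sqrt\gamma}(2\sqrt T-2)=\frac{16LC_d}{\sqrt\gamma}\sqrt T-\frac{16LC_d}{\sqrt\gamma}$.

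The main obstacle is the $t=1$ term. At $t=1$ no arm has been pulled, so $\mathsf P^k_0$ is an arbitrary probability distribution, and Assumption~\ref{ass:A7} gives no rate for it. The slack $16LC_d/\sqrt\gamma$ left over from the tail sum must therefore absorb $\mathbb E\|\boldsymbol{\mathsf B}_1\|_\infty$. By Lemma~\ref{lem:bias_centered_and_bounds} and Assumption~\ref{ass:A6}, $\|\boldsymbol{\mathsf B}_1\|_\infty\leqslant 2L\,d(\widehat{\mathsf P}^{\mathsf w_1},P^{\mathsf w_1})$. By joint convexity (Assumption~\ref{ass:A5}) this is at most $2L\max_k d(\mathsf P^k_0,P^k)$.

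The natural move is to fix the arbitrary initial choice $\mathsf P^k_0$ so that this quantity is controlled. For instance, one can take $\mathsf P^k_0$ to be the empirical law of a single draw, or adopt the convention that the initialization satisfies $\mathbb E\, d(\mathsf P^k_0,P^k)\leqslant C_d$, in the spirit of Assumption~\ref{ass:A7} with $n=1$. Under that convention $\mathbb E\|\boldsymbol{\mathsf B}_1\|_\infty\leqslant 2LC_d\leqslant 16LC_d/\sqrt\gamma$, since $\gamma<1$. Adding the two pieces then gives the total bound $16LC_d\sqrt T/\sqrt\gamma$.

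If no such convention can be imposed, the bound holds only for $\sum_{t=2}^T$. In that case the claim should be read as controlling the sum from $t=2$, with the $t=1$ contribution treated as an additive constant that does not affect the $\sqrt T$ order. I would state this caveat explicitly.
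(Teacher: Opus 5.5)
\paragraph{Same route as the paper.} The paper gives no separate proof. The constant $16=2\cdot 8$ shows that it sums the second bound in \eqref{eq:Bt_pointwise_bound_new} using $\sum_{t=1}^T t^{-1/2}\leqslant 2\sqrt T$, so it silently applies the $t\geqslant 2$ estimate at $t=1$. You are right that Assumptions~\ref{ass:A1}--\ref{ass:A7} say nothing about $\mathsf P^k_0$, and splitting off $t=1$ with $\sum_{t=2}^T t^{-1/2}\leqslant 2\sqrt T-2$ is a clean patch. Two refinements are needed.
\begin{itemize}
\item State the convention almost surely, not in expectation. Your argument needs $\mathbb E\max_k d(\mathsf P^k_0,P^k)$, and this is not bounded by $\max_k\mathbb E\, d(\mathsf P^k_0,P^k)$. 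This matters for your single-draw suggestion, and the same slip occurs in \eqref{eq:mix_expect_reduce}, where $\mathsf w_t$ is random.
\item The problem is not special to $t=1$. For every $t$ the event $\{\mathsf N_{t-1,k}=0\}$ has positive probability. The proposition's step $\mathbb E[\mathsf N_{t-1,k}^{-1/2}]\leqslant\sqrt{2/(\gamma(t-1))}+\mathbb P(\mathsf N_{t-1,k}\leqslant\gamma(t-1)/2)$ uses $\mathsf N_{t-1,k}^{-1/2}\leqslant 1$ on the bad event, and this fails where $\mathsf N_{t-1,k}=0$.
\end{itemize}
What is really needed is $d(\mathsf P^k_{t-1},P^k)\leqslant C_d$ whenever $\mathsf N_{t-1,k}=0$, at every $t$. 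With that convention, $\mathbb E\|\boldsymbol{\mathsf B}_1\|_\infty\leqslant 2LC_d\leqslant 8LC_d/\sqrt\gamma$, so even the paper's summation from $t=1$ is justified. In both applications the convention is harmless. In the variance case, use $\delta_0$ for unpulled arms, since $W_1(\delta_0,P^k)\leqslant r_{\max}=C_d$. In the Wasserstein case, the $W_2$-diameter $2r_{\max}$ is already below $C_d=2/m$.

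\paragraph{The genuine gap, shared with the paper.} The second inequality in \eqref{eq:Bt_pointwise_bound_new} is false for small $\gamma$.
\begin{itemize}
\item Take $t-1=\lceil 8/\gamma^2\rceil$. The Azuma term alone is $2LC_d e^{-\gamma^2(t-1)/8}>0.7\,LC_d$.
\item At the same $t$, $8LC_d/\sqrt{\gamma t}<2\sqrt{2\gamma}\,LC_d$, which is smaller than $0.7\,LC_d$ once $\gamma<0.06$. The experimental value $\gamma=0.03$ is in this range.
\item Summing the first inequality instead does not rescue the corollary. At $T\approx 8/\gamma^2$ the Azuma terms add up to order $LC_d/\gamma^2$, while $16LC_d\sqrt{T/\gamma}$ is only of order $LC_d/\gamma^{3/2}$.
\end{itemize}
So neither your argument nor the paper's establishes the corollary for small $\gamma$.

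\paragraph{Repair.} The fix belongs in the concentration step.
\begin{itemize}
\item Since $\mathbb P(\mathsf A_s=k\mid\mathcal F^{\,\mathrm{rew}}_{s-1})\geqslant\gamma$, for $\lambda>0$ one has $\mathbb E[e^{-\lambda\mathsf N_{t-1,k}}]\leqslant(1-\gamma+\gamma e^{-\lambda})^{t-1}$.
\item The binomial Chernoff lower-tail bound then gives $\mathbb P(\mathsf N_{t-1,k}\leqslant\gamma(t-1)/2)\leqslant e^{-\gamma(t-1)/8}$, replacing the Azuma factor $e^{-\gamma^2(t-1)/8}$.
\item Granting the rest of the proposition's argument (its conditioning on $\mathsf N_{t-1,k}=n$ also needs care under adaptive sampling), sum the first inequality directly. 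This gives $\sum_{t=2}^T 4LC_d/\sqrt{\gamma(t-1)}\leqslant 8LC_d\sqrt{T/\gamma}$.
\item The exponential terms together with $\mathbb E\|\boldsymbol{\mathsf B}_1\|_\infty\leqslant 2LC_d$ total at most $2LC_d\min(T,1+8/\gamma)$, which is $\leqslant 8LC_d\sqrt{T/\gamma}$.
\end{itemize}
Adding the two pieces recovers exactly $16LC_d\sqrt T/\sqrt\gamma$.
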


\section{Applications}\label{sec:applications}

Throughout this section we assume that rewards are uniformly bounded: there exists a constant \(r_{\max}<\infty\) such that \(P^k([-r_{\max},r_{\max}])=1\) for all \(k\in\{1,\dots,K\}\). We restrict the space of probability distributions to \(\mathcal P([-r_{\max},r_{\max}])\). We view the bounded support assumption not as a restriction on the underlying raw outcome, but rather on the effective reward signal optimized by the learner. In many applications this signal is naturally bounded by design (e.g.\ it is a score, a normalized metric, or a clipped utility). This allows us to develop a unified analysis for several distributional utilities without introducing technical assumptions on tails that are secondary to the main ideas of the paper.

\subsection{Variance utility}\label{sec:variance_utility}

A case of interest is $\mathfrak U(P)=\mathbb V(\mathsf R)$, where $\mathsf R\sim P$. For this utility we take \(d = W_1\), the Wasserstein-1 distance on \(\mathcal P([-r_{\max},r_{\max}])\). The distance $W_1(P,Q)$ can be computed as the $L^1$ distance on $\mathbb R$ of the CDF's of $P$ and $Q$:
\begin{equation}\label{eq:L1normW1}
	W_1(P,Q)=\int_{-r_{\max}}^{r_{\max}} |P(r)-Q(r)|dr.
\end{equation}
It induces the weak topology on \(\mathcal P([-r_{\max},r_{\max}])\).

\begin{proposition}\label{prop:metric_continuity_variance}
	The distance \(d\) satisfies \eqref{eq:dist_continuity_1}, the hypothesis of Proposition \ref{prop:mixture_continuity} (in particular, it satisfies \eqref{eq:dist_continuity_2} and Assumption \ref{ass:A5}), and \(\mathfrak U\) is Lipschitz continuous with respect to \(d\).
\end{proposition}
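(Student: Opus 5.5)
We verify the three claims separately, working throughout on \(\mathcal P([-r_{\max},r_{\max}])\) with \(d=W_1\) represented through the CDF formula \eqref{eq:L1normW1}.

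\emph{Empirical convergence \eqref{eq:dist_continuity_1}.} By the Glivenko--Cantelli theorem, \(\sup_r|\mathsf P_n(r)-P(r)|\to0\) a.s. Since all CDFs involved differ only on \([-r_{\max},r_{\max}]\), \eqref{eq:L1normW1} gives
\[
W_1(\mathsf P_n,P)\leqslant 2r_{\max}\sup_r|\mathsf P_n(r)-P(r)|\to0 \quad\text{a.s.}
\]

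\emph{Hypotheses of Proposition~\ref{prop:mixture_continuity} with \(p=1\).} For joint convexity of \(W_1\), note that CDFs are linear in the measure. Hence
\[
W_1\bigl(\alpha P+(1-\alpha)Q,\alpha P'+(1-\alpha)Q'\bigr)=\int|\alpha(P-P')+(1-\alpha)(Q-Q')|\,dr,
\]
and the triangle inequality for the integrand bounds this by \(\alpha W_1(P,P')+(1-\alpha)W_1(Q,Q')\). This also gives Assumption~\ref{ass:A5}. For the modulus of continuity, the difference of CDFs is \((\alpha-\alpha')(P-Q)\), so the mixture distance equals exactly \(|\alpha-\alpha'|\,W_1(P,Q)\). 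We may therefore take \(\eta(x,\delta)=\delta x\), for which \(\sup_{x\leqslant M}\eta(x,\delta)=M\delta\to0\). This is consistent with Remark~\ref{rem:wasserstein_mixtures} at \(p=1\).

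\emph{Lipschitz continuity of \(\mathfrak U\).} This is the only step requiring a small argument. Write
\[
\mathbb V(P)=\int r^2\,dP-\Bigl(\int r\,dP\Bigr)^2 .
\]
For a function \(f\) that is \(L_f\)-Lipschitz on \([-r_{\max},r_{\max}]\), the Kantorovich--Rubinstein duality gives \(|\int f\,dP-\int f\,dQ|\leqslant L_f W_1(P,Q)\). Alternatively, integrating by parts against the CDFs gives
\[
\int f\,d(P-Q)=-\int f'(r)\,(P-Q)(r)\,dr
\]
for \(C^1\) functions \(f\), since the boundary terms vanish. We apply this bound to two functions:

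1. For \(f(r)=r^2\), \(L_f=2r_{\max}\).
2. For \(f(r)=r\), \(L_f=1\).

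For the squared means, factor the difference as
\[
m_P^2-m_Q^2=(m_P+m_Q)(m_P-m_Q),
\]
where \(|m_P+m_Q|\leqslant 2r_{\max}\). Combining the two bounds,
\[
|\mathbb V(P)-\mathbb V(Q)|\leqslant 4r_{\max}\,W_1(P,Q).
\]

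The only care needed throughout is to ensure all measures live on the compact interval. This is what makes the CDF representation, the boundary terms and the Lipschitz constants valid.
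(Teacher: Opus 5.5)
Your proof is correct and has the same structure as the paper's. In particular, the Lipschitz step is the paper's own Kantorovich--Rubinstein argument applied to $r$ and $r^2$; it yields the constant $4r_{\max}$ that the paper later uses in Proposition~\ref{prop:A5_variance}. The only difference is that you derive the first two claims directly from the CDF formula \eqref{eq:L1normW1}, using Glivenko--Cantelli with $W_1\leqslant 2r_{\max}\|\cdot\|_\infty$, and linearity of CDFs, which even gives the exact identity $|\alpha-\alpha'|\,W_1(P,Q)$. The paper instead cites the equivalence of weak and $W_1$ convergence on a compact interval, together with Remark~\ref{rem:wasserstein_mixtures}.
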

\begin{proof}
	Since all measures are supported on the compact interval \([-r_{\max},r_{\max}]\), weak convergence is equivalent to convergence in \(W_1\); see for example \cite[Theorem 5.10]{santambrogio2015}. Therefore, from \(\mathsf P_n \Rightarrow P\) a.s.\ we obtain \(W_1(\mathsf P_n,P)\to 0\) a.s.. This gives \eqref{eq:dist_continuity_1}. Remark \ref{rem:wasserstein_mixtures} shows that $d$ satisfies the hypothesis of Proposition \ref{prop:mixture_continuity}.
	
	Finally, by Kantorovich-Rubinstein (see Theorem 5.10 in \cite{Villani2009}) duality on \([-r_{\max},r_{\max}]\) we have
	for any \(P,Q\)
	\[
	\left|\mathbb{E}_P[\mathsf R]-\mathbb{E}_Q[\mathsf R]\right|\leqslant W_1(P,Q)\text{ and } \left|\mathbb{E}_P\left[\mathsf R^2\right]-\mathbb{E}_Q\left[\mathsf R^2\right]\right|\leqslant 2r_{\max} W_1(P,Q).
	\]
	Since \(\mathbb{V}_P(\mathsf R)=\mathbb{E}_P[\mathsf R^2]-\mathbb{E}_P[\mathsf R]^2\) and \(|\mathbb{E}_P[\mathsf R]|\leqslant r_{\max}\) we have that \(\mathfrak U\) is Lipschitz continuous w.r.t.\ \(d=W_1\).
\end{proof}

\begin{proposition}\label{prop:IF_variance}
	For \(P\in\mathcal P([-r_{\max},r_{\max}])\) set $\mu_P=\mathbb{E}_{P}[\mathsf R]$ and
	$\sigma_P^2=\mathbb{V}_{P}(\mathsf R).$ 
	Then Assumption~\ref{ass:A1} holds with the influence function
	\(
	\IF[P](r)
	=
	(r-\mu_P)^2-\sigma_P^2
	\).
\end{proposition}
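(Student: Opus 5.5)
We verify the three requirements of Assumption~\ref{ass:A1} directly, for a general $P$ supported on $[-r_{\max},r_{\max}]$ and then specialised to $P=P^w$ with $w\in\mathring\Delta$.

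\textbf{Integrability and centering.} Since $|r|\leqslant r_{\max}$ on the support, $\IF[P]$ is a bounded measurable function, so $\mathbb E_P[|\IF[P](\mathsf R)|]<\infty$. Moreover $\mathbb E_P[(\mathsf R-\mu_P)^2]-\sigma_P^2=0$.

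\textbf{Directional derivative.} Fix $\mathbf v\in T_w\Delta$ and put $H=\sum_i v_iP^i$. Then $H$ is a signed measure with total mass $\int dH=\sum_i v_i=0$, and $P^w+\epsilon H=P^{w+\epsilon\mathbf v}$ is a genuine probability measure on $[-r_{\max},r_{\max}]$ for $|\epsilon|$ small. Write
\[
m_1=\int r\,dH(r),\qquad m_2=\int r^2\,dH(r),
\]
both finite by bounded support. Linearity of the integral gives
\[
\mu_\epsilon=\mu_P+\epsilon m_1,\qquad \mathbb E_\epsilon[\mathsf R^2]=\mathbb E_P[\mathsf R^2]+\epsilon m_2.
\]
Therefore
\[
\mathfrak U(P+\epsilon H)=\mathbb E_P[\mathsf R^2]+\epsilon m_2-(\mu_P+\epsilon m_1)^2.
\]
This is a polynomial in $\epsilon$, so
\[
\lim_{\epsilon\to0}\frac{\mathfrak U(P+\epsilon H)-\mathfrak U(P)}{\epsilon}=m_2-2\mu_P m_1 .
\]

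\textbf{Matching with the influence function.} It remains to show
\[
\int \big((r-\mu_P)^2-\sigma_P^2\big)\,dH(r)=m_2-2\mu_P m_1 .
\]
Expand $(r-\mu_P)^2-\sigma_P^2=r^2-2\mu_P r+(\mu_P^2-\sigma_P^2)$ and integrate against $H$. The constant term vanishes because $H$ has zero total mass, which leaves exactly $m_2-2\mu_P m_1$.

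\textbf{Main subtlety.} The only delicate point is the zero-mass property of $H$, which comes from $\mathbf v\in T_w\Delta$. Without it, the constant part of $\IF$ would contribute and the identity would fail; this is precisely why the centered form of the influence function is the right one here.
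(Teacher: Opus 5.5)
Your verification is correct, but it takes a different route from the paper. The paper gives no computation and simply defers to Hampel (1974) for the classical influence function of the variance, whereas you check Assumption~\ref{ass:A1} from scratch.

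The citation is shorter, but it leaves a step implicit. Hampel's influence function is defined as a Gateaux derivative along point-mass contaminations, i.e.\ in the directions $\delta_x-P$. Assumption~\ref{ass:A1} instead asks for the derivative along the specific zero-mass directions $H=\sum_i v_iP^i$. Going from the former to the latter needs an additional linearity/integration step, or a stronger notion of differentiability.

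Your argument avoids this entirely. Since $\mathfrak U(P)=\mathbb E_P[\mathsf R^2]-\mathbb E_P[\mathsf R]^2$ is a polynomial in two moments that are linear in the measure, $\epsilon\mapsto\mathfrak U(P^w+\epsilon H)$ is an explicit quadratic. So you verify exactly the limit that Assumption~\ref{ass:A1} requires, and the same computation works verbatim for any base law on $[-r_{\max},r_{\max}]$. The price is that it exploits the special polynomial structure of the variance. It does not transfer to, say, the Wasserstein utility, where the paper again relies on a cited result.

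One small correction to your closing remark. The zero total mass of $H$ makes the derivative identity blind to additive constants, so $(r-\mu_P)^2+c$ satisfies it for every $c$. What singles out $c=-\sigma_P^2$ is the separate centering clause $\mathbb E_{P^w}[\IF[P^w](\mathsf R)]=0$ in Assumption~\ref{ass:A1}, not the zero-mass property.
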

\begin{proof}
	This is well known, see for example \cite[Section 2.1 Example (ii)]{hampel1974}.
\end{proof}

\begin{proposition}\label{prop:concavity_variance}
	The function \(U(w)=\mathfrak U\left(P^w\right)\) satisfies Assumption~\ref{ass:A2}.
\end{proposition}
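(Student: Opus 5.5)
We prove that $U(w)=\mathbb V(P^w)$ is concave on $\Delta$ and that the supergradient inequality of Assumption~\ref{ass:A2} holds with $\mathbf g_C(w)$ from \eqref{eq:ic_gradient_def}. The plan is to make $U$ explicit as a quadratic function of $w$, read off concavity, and then use Lemma~\ref{lem:centered-gradient} to identify $\mathbf g_C(w)$ as a valid gradient representative. The supergradient inequality then follows from the first-order characterization of concave differentiable functions, restricted to directions tangent to the simplex.

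\textbf{Step 1 (explicit form).} Write $\mu_k,\sigma_k^2$ for the mean and variance of $P^k$, and set $m_k=\mathbb E_{P^k}[\mathsf R^2]=\sigma_k^2+\mu_k^2$. Linearity of the mixture gives
\[
U(w)=\sum_k w_k m_k-\Big(\sum_k w_k\mu_k\Big)^2=\langle w,\mathbf m\rangle-\langle w,\boldsymbol\mu\rangle^2 .
\]
This expression extends to all of $\mathbb R^K$ as a $C^\infty$ function. Its Hessian is $-2\boldsymbol\mu\boldsymbol\mu^\top\preceq 0$, so $U$ is concave on $\mathbb R^K$ and in particular on the convex set $\Delta$. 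This is the same decomposition as in Section~\ref{sec:ex_variance_utility}.

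\textbf{Step 2 (supergradient inequality).} Fix $u,w\in\mathring\Delta$ and set $\mathbf v=u-w\in T_w\Delta$. By concavity of the extended function, $\phi(\epsilon)=U(w+\epsilon\mathbf v)$ is concave on $[0,1]$, so
\[
U(u)=\phi(1)\le\phi(0)+\phi'(0)=U(w)+d_wU[\mathbf v].
\]
By Proposition~\ref{prop:IF_variance}, Assumption~\ref{ass:A1} holds for the variance functional. Lemma~\ref{lem:centered-gradient} then gives $d_wU[\mathbf v]=\langle\mathbf g_C(w),\mathbf v\rangle$, which is exactly the required inequality.

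\textbf{Optional consistency check.} One can verify directly that $(\mathbf g_C(w))_k=\mathbb E_{P^k}[(\mathsf R-\mu_w)^2]-\sigma_w^2=m_k-2\mu_k\mu_w+\mu_w^2-\sigma_w^2$. This differs from the Euclidean gradient $m_k-2\mu_k\mu_w$ of the quadratic only by the constant $\mu_w^2-\sigma_w^2$, that is, by a multiple of $\mathbf 1$, which vanishes on $T_w\Delta$.

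The only delicate point is Step~2: $\mathbf g_C$ is defined only on $\mathring\Delta$ and only up to multiples of $\mathbf 1$. Both issues are resolved because the inequality is stated for interior points and only tangent directions $u-w$ enter the pairing.
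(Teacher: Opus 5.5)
Your proof is correct and takes the same route as the paper, which writes $U(w)=\sum_k w_k(\sigma_k^2+\mu_k^2)-(\sum_k w_k\mu_k)^2$ and declares $U$ concave and differentiable. Your Step~2 spells out what the paper compresses into ``Hence Assumption~\ref{ass:A2} holds'': via Lemma~\ref{lem:centered-gradient}, with Assumption~\ref{ass:A1} supplied by Proposition~\ref{prop:IF_variance}, the directional derivative equals $\langle \mathbf g_C(w),u-w\rangle$.
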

\begin{proof}
	Let \(\mu_k=\mathbb{E}_{P^k}[\mathsf R]\) and \(\sigma_k^2=\mathbb{V}_{P^k}(\mathsf R)\).
	Since
	\[
	U(w)=\sum_{k=1}^K w_k\left(\sigma_k^2+\mu_k^2\right) - \left(\sum_{k=1}^K w_k\mu_k\right)^2,
	\]
	it is clear that $U$ is concave and differentiable. Hence Assumption~\ref{ass:A2} holds.
\end{proof}

\begin{proposition}\label{prop:A4_variance}
	Assumption~\ref{ass:A4} holds with constant $V=16r_{\max}^4$.
\end{proposition}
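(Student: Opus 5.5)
Since Assumption~\ref{ass:A4} asks for a bound on the conditional second moment of $\widehat{\mathsf{IF}}_{t-1}(\mathsf R_t)$, it suffices to obtain an almost-sure bound on $|\widehat{\mathsf{IF}}_{t-1}(r)|$ that holds uniformly for $r\in[-r_{\max},r_{\max}]$. The conditional second moment is then at most the square of that bound. We take the plug-in estimator \eqref{eq:plugin_IF}, so $\widehat{\mathsf{IF}}_{t-1}=\IF[\widehat{\mathsf P}^{\mathsf w_t}]$, and Proposition~\ref{prop:IF_variance} gives its explicit form:
\[
\widehat{\mathsf{IF}}_{t-1}(r)=(r-\hat\mu)^2-\hat\sigma^2,
\]
where $\hat\mu$ and $\hat\sigma^2$ are the mean and variance of $\widehat{\mathsf P}^{\mathsf w_t}$. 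The same bound also covers the oracle choice $\IF[P^{\mathsf w_t}]$.

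\textbf{Step 1 (supports).} Every empirical arm law $\mathsf P^k_{t-1}$ is built from rewards lying in $[-r_{\max},r_{\max}]$. For an arm that has never been pulled, we fix its arbitrary default distribution to be supported in $[-r_{\max},r_{\max}]$ as well. Hence the mixture $\widehat{\mathsf P}^{\mathsf w_t}$ is supported in that interval too, which gives $|\hat\mu|\leqslant r_{\max}$ and $0\leqslant\hat\sigma^2\leqslant r_{\max}^2$.

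\textbf{Step 2 (pointwise bound).} Fix $r\in[-r_{\max},r_{\max}]$. Then $0\leqslant (r-\hat\mu)^2\leqslant (2r_{\max})^2=4r_{\max}^2$. The influence function is a difference of two nonnegative numbers, so its absolute value is at most the larger of them:
\[
|\widehat{\mathsf{IF}}_{t-1}(r)|\leqslant \max\{4r_{\max}^2,\hat\sigma^2\}\leqslant 4r_{\max}^2 .
\]
A cruder estimate would also be enough: the triangle inequality gives $4r_{\max}^2+r_{\max}^2=5r_{\max}^2$, whose square $25r_{\max}^4$ is still a finite constant but is worse than the stated $V=16r_{\max}^4$.

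\textbf{Step 3 (conclusion).} Since $\mathsf R_t\in[-r_{\max},r_{\max}]$ almost surely, we get $\widehat{\mathsf{IF}}_{t-1}(\mathsf R_t)^2\leqslant 16r_{\max}^4$ almost surely. Taking the conditional expectation given $\mathcal F^{\,\mathrm{rew}}_{t-1}$ preserves this bound, which is Assumption~\ref{ass:A4} with $V=16r_{\max}^4$.

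\textbf{Main obstacle.} The only delicate point is the convention for unpulled arms, which is handled by the support choice in Step 1. Otherwise everything hinges on using the $\max\{a,b\}$ bound for $|a-b|$ with $a,b\geqslant 0$, rather than the triangle inequality, to reach the constant $16$.
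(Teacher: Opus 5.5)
Your proof is correct and follows the paper's argument. You bound the plug-in mean and variance by \(r_{\max}\) and \(r_{\max}^2\) using the support of \(\widehat{\mathsf P}^{\mathsf w_t}\), deduce \(|(r-\hat\mu)^2-\hat\sigma^2|\leqslant 4r_{\max}^2\) on \([-r_{\max},r_{\max}]\), and square. You also make explicit two points the paper leaves implicit: the \(|a-b|\leqslant\max\{a,b\}\) step for \(a,b\geqslant 0\) that yields the constant \(16\), and the requirement that the default law for unpulled arms be supported in \([-r_{\max},r_{\max}]\).
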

\begin{proof}
	For any probability \(P\) supported on \([-r_{\max},r_{\max}]\) we have that its mean and variance satisfy \(|\mu|\leqslant r_{\max}\) and \(\sigma^2\leqslant r_{\max}^2\) respectively.
	Hence for all \(r\in[-r_{\max},r_{\max}]\) we have $\big|\IF[P](r)\big|=\big|(r-\mu)^2-\sigma^2\big|\leqslant 4 r_{\max}^2$
	so \(\widehat{\mathsf{IF}}_{t-1}(\mathsf R_t)^2\leqslant 16r_{\max}^4\) a.s..
\end{proof}

\begin{proposition}\label{prop:A5_variance}
	Assumptions \ref{ass:A6} and \ref{ass:A7} hold with the constants $L=8r_{\max}$ and $C_d=r_{\max}$.
\end{proposition}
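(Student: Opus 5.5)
We verify the two assumptions separately, beginning with Assumption~\ref{ass:A6}. Write $\widehat P=\widehat{\mathsf P}^{\mathsf w_t}$ and $P=P^{\mathsf w_t}$, both supported on $[-r_{\max},r_{\max}]$. By Proposition~\ref{prop:IF_variance},
\[
\mathsf E_t(r)=(r-\mu_{\widehat P})^2-(r-\mu_P)^2-(\sigma^2_{\widehat P}-\sigma^2_P).
\]
The first difference factors as
\[
(\mu_P-\mu_{\widehat P})(2r-\mu_P-\mu_{\widehat P}).
\]
Since $|r|,|\mu_P|,|\mu_{\widehat P}|\leqslant r_{\max}$, the second factor is at most $4r_{\max}$ in absolute value. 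By Kantorovich--Rubinstein (as in the proof of Proposition~\ref{prop:metric_continuity_variance}), $|\mu_P-\mu_{\widehat P}|\leqslant W_1(\widehat P,P)$, so the first difference is bounded by $4r_{\max}W_1$.

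For the variance difference, write $\sigma^2=\mathbb E[\mathsf R^2]-\mu^2$. The second moments differ by at most $2r_{\max}W_1$, since $x\mapsto x^2$ is $2r_{\max}$-Lipschitz on the interval. The squared means differ by $|\mu_P-\mu_{\widehat P}||\mu_P+\mu_{\widehat P}|\leqslant 2r_{\max}W_1$. Hence $|\sigma^2_{\widehat P}-\sigma^2_P|\leqslant 4r_{\max}W_1$.

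This gives the pointwise bound $\sup_{|r|\leqslant r_{\max}}|\mathsf E_t(r)|\leqslant 8r_{\max}W_1(\widehat P,P)$. Since every $P^k$ lives on the interval, the $L^2(P^k)$ norm is bounded by the sup norm there, which yields Assumption~\ref{ass:A6} with $L=8r_{\max}$ almost surely. The estimate is deterministic given $\widehat P$ and $P$, so no conditioning issues arise.

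For Assumption~\ref{ass:A7}, I would use the CDF representation \eqref{eq:L1normW1}. Let $\mathsf Q_{k,n}$ be the empirical CDF of $n$ i.i.d.\ samples from $P^k$. By Fubini and then Jensen,
\[
\mathbb E[W_1(\mathsf Q_{k,n},P^k)]=\int_{-r_{\max}}^{r_{\max}}\mathbb E|\mathsf Q_{k,n}(r)-P^k(r)|\,dr\leqslant \int_{-r_{\max}}^{r_{\max}}\sqrt{\frac{P^k(r)(1-P^k(r))}{n}}\,dr .
\]
Using $P^k(r)(1-P^k(r))\leqslant 1/4$, the integrand is at most $1/(2\sqrt n)$. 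Integrating over an interval of length $2r_{\max}$ gives $r_{\max}/\sqrt n$, so $C_d=r_{\max}$.

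The main thing to get right is the constant bookkeeping in the first step. I need to check that the mean term and the variance term each contribute exactly $4r_{\max}$ to reach $L=8r_{\max}$, and in particular to use the bound $|2r-\mu_P-\mu_{\widehat P}|\leqslant 4r_{\max}$ rather than something cruder. The second step is routine.
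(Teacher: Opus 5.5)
Your proposal is correct and follows the paper's proof essentially step for step. It uses the same factorization $(\mu_P-\mu_{\widehat P})(2r-\mu_P-\mu_{\widehat P})$ with the $4r_{\max}$ bound, the same $4r_{\max}W_1$ control of the variance difference via Kantorovich--Rubinstein, the same domination of the $L^2(P^k)$ norm by the sup norm, and the same CDF-integral argument giving $C_d=r_{\max}$. The only addition is that you justify $\mathbb E|\mathsf Q_{k,n}(r)-P^k(r)|\leqslant\sqrt{P^k(r)(1-P^k(r))/n}\leqslant 1/(2\sqrt n)$ via Jensen and the Bernoulli variance, a bound the paper states without proof.
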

\begin{proof}
	Let \(P,Q\) be supported on \([-r_{\max},r_{\max}]\) and write \(\mu_P,\sigma_P^2\) and \(\mu_Q,\sigma_Q^2\) for their mean and variance respectively.
	Using \(|r|\leqslant r_{\max}\), \(|\mu_P|,|\mu_Q|\leqslant r_{\max}\), we obtain
	\[
	\big|(r-\mu_P)^2-(r-\mu_Q)^2\big|
	=
	|(\mu_Q-\mu_P)(2r-\mu_P-\mu_Q)|
	\leqslant 4r_{\max}\,|\mu_P-\mu_Q|.
	\]
	As in Proposition \ref{prop:metric_continuity_variance} we have $|\mu_P-\mu_Q|\leqslant W_1(P,Q)$
	and \(|\sigma_P^2-\sigma_Q^2|\leqslant 4r_{\max} W_1(P,Q)\). Combining yields \(|\IF[P](r)-\IF[Q](r)|\leqslant 8r_{\max}W_1(P,Q)\) for all $r\in[-r_{\max},r_{\max}]$. Hence for every arm \(k\)
	\[
	\left\|\IF[P](\mathsf R)-\IF[Q](\mathsf R)\right\|_{L^2(P^k)}
	\leqslant \left\|\IF[P]-\IF[Q]\right\|_\infty
	\leqslant 8r_{\max} d(P,Q).
	\]
	This gives Assumption~\ref{ass:A6} with \(L=8r_{\max}\).
	
	For Assumption~\ref{ass:A7}, let \(\mathsf Q_{k,n}\) be the empirical law of \(n\) iid samples from \(P^k\). Since for each $r$ we have \(\mathbb{E}\left|\mathsf Q_{k,n}(r)-P(r)\right|\leqslant 1/(2\sqrt n)\), from \eqref{eq:L1normW1} we get
	\[
	\mathbb{E}\left[W_1\left(\mathsf Q_{k,n},P^k\right)\right]
	\leqslant \int_{-r_{\max}}^{r_{\max}} \frac{1}{2\sqrt n}dx
	= \frac{r_{\max}}{\sqrt n}.
	\]
	Then Assumption~\ref{ass:A7} holds with \(C_d=r_{\max}\).
\end{proof}

\subsection{Structure of the maximizers of the variance utility}

Let \(m_{2,k} = \mathbb E_{P^k}\left[\mathsf R^2\right]\) be the second moment of the arm distribution. We can rewrite the utility as $U(w)=m_2^\top w-(\mu^\top w)^2.$ The gradient and Hessian are $\nabla U(w)=m_2-2(\mu^\top w)\mu,$ $\nabla^2 U(w)=-2\mu\mu^\top .$ Therefore, Hessian has rank $1$ and $U$ is not strictly concave in general. In particular,
for any direction $v$ such that $\mu^\top v=0$ we have \(U(w+tv)=U(w)+t\,m_2^\top v,\)
so $U$ is affine in directions orthogonal to $\mu$.

Consider first the maximization of $U$ across the entire simplex $\Delta$. Introduce the multipliers
\[
\lambda\text{ for }\mathbf 1^\top w-1=0, \text{ and } \nu_k\geqslant 0\text{ for }w_k\geqslant 0,\quad k=1,\ldots,K
\]
and the Lagrangian function
\(
\mathcal L(w,\lambda,\nu)=U(w)-\lambda(\mathbf 1^\top w-1) - \nu^\top w
\). Stationarity gives
\[m_{2,k}-2(\mu^\top w^\star)\mu_k=\lambda+\nu_k,\quad k=1,\ldots,K\]
with the complementary conditions $\nu_k w_k^\star=0$ for all $k=1,\ldots,K$.
So, if $w_k^\star>0$ we have
\(
m_{2,k}-2(\mu^\top w^\star)\mu_k=\lambda
\).
In particular, if $w^\star\in\mathring{\Delta}$ then for some scalar $\lambda$ we have
\begin{equation}\label{eq:interior_condition}
	m_2=\lambda \mathbf 1 + 2 (\mu^\top w^\star) \mu .
\end{equation}
Thus, an interior maximizer exists only if the vector of second moments lies in the
2-dimensional affine subspace spanned by $\mathbf 1$ and $\mu$. This condition is restrictive and generically false. Consequently, for generic moment vectors $(\mu,m_2)$ the maximizer of $U$ lies on the boundary of the simplex.

More generally, taking the vector whose entries are the positive entries of $w^\star$ the same conclusion as in \eqref{eq:interior_condition} holds. In particular, all active indices $k$ must satisfy \(m_{2,k}=\lambda +2(\mu^\top w^\star)\mu_k\)
which means that the points $(\mu_k,m_{2,k})\in\mathbb R^2$ corresponding to positive coordinates lie on the same affine line. Hence, unless several arms share this relation, at most two coordinates of $w^\star$ can be positive.

The same analysis can be done with the constraint $w\in \Delta_\gamma$. Thus, for generic reward distributions the maximizer of $U$ is typically unique with only two active arms, with the remaining weights equal to the lower bound $\gamma$.

\subsection{Wasserstein utility} \label{sec:wasserstein_utility}

Let $Q\in \mathcal P([-r_{\max},r_{\max}])$ be a fixed reference distribution. We assume that \(Q\) admits a density \(q\) on \([-r_{\max},r_{\max}]\) such that $\inf_{r\in[-r_{\max},r_{\max}]} q(r) \geqslant m_Q$
for some positive constant $m_Q$, and that each arm distribution \(P^k\) admits a density \(p_k\) on \([-r_{\max},r_{\max}]\) such that $\inf_{r\in[-r_{\max},r_{\max}]} p_k(r) \geqslant m$ for some positive constant \(m\). Let \(d=W_2\) be the Wasserstein-\(2\) metric and consider the utility functional
\(
\mathfrak U(P)=-W_2^2(P,Q)
\).
As before, by compactness the induced topology on \(\mathcal P([-r_{\max},r_{\max}])\) coincides with the weak convergence of distributions.

\begin{proposition}\label{prop:metric_continuity_wasserstein}
	The distance \(d\) satisfies \eqref{eq:dist_continuity_1}, the hypothesis of Proposition \ref{prop:mixture_continuity} (in particular, it satisfies \eqref{eq:dist_continuity_2} and Assumption \ref{ass:A5}), and \(\mathfrak U\) is Lipschitz continuous with respect to \(d\).
\end{proposition}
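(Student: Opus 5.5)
The proof will follow the one for Proposition~\ref{prop:metric_continuity_variance}, with $W_1$ replaced by $W_2$. There are three steps.

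\textbf{Step 1: condition \eqref{eq:dist_continuity_1}.} All measures live on the compact interval $[-r_{\max},r_{\max}]$. There, convergence in $W_2$ is equivalent to weak convergence, by \cite[Theorem 5.10]{santambrogio2015} or the corresponding statement in \cite{Villani2009}. Alternatively, one can use the elementary bound
\[
W_2^2\leqslant 2r_{\max}W_1 ,
\]
which holds because the optimal $W_1$ coupling satisfies $|x-y|^2\leqslant 2r_{\max}|x-y|$. By Glivenko--Cantelli (or Varadarajan's theorem), the empirical measures $\mathsf P_n$ converge weakly to $P$ almost surely. Hence $W_2(\mathsf P_n,P)\to0$ a.s.

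\textbf{Step 2: hypotheses of Proposition~\ref{prop:mixture_continuity}.} I invoke Remark~\ref{rem:wasserstein_mixtures} with $p=2$. First, $W_2^2$ is jointly convex: a mixture of optimal couplings is a coupling of the mixtures. Second, take $\eta(x,\delta)=\delta^{1/2}x$. This satisfies $\sup_{0\le x\le M}\eta(x,\delta)=M\delta^{1/2}\to0$ as $\delta\to0$. So the hypotheses hold, and they give \eqref{eq:dist_continuity_2} and Assumption~\ref{ass:A5}.

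\textbf{Step 3: Lipschitz continuity of $\mathfrak U$.} Write
\[
W_2^2(P,Q)-W_2^2(P',Q)=\big(W_2(P,Q)-W_2(P',Q)\big)\big(W_2(P,Q)+W_2(P',Q)\big).
\]
By the triangle inequality, the first factor is at most $W_2(P,P')$ in absolute value. Every $W_2$ distance between laws on $[-r_{\max},r_{\max}]$ is at most $2r_{\max}$, so the second factor is at most $4r_{\max}$. Therefore
\[
|\mathfrak U(P)-\mathfrak U(P')|\leqslant 4r_{\max}\,W_2(P,P').
\]

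The only step that needs care is Step~2, specifically the mixing-coefficient modulus. If I wanted to prove the bound in Remark~\ref{rem:wasserstein_mixtures} directly rather than cite it, I would proceed as follows. Suppose $\alpha\geqslant\alpha'$. Write the first mixture as
\[
\alpha' P+(\alpha-\alpha')P+(1-\alpha)Q
\]
and the second as
\[
\alpha' P+(\alpha-\alpha')Q+(1-\alpha)Q .
\]
Couple the common parts identically, and couple the remaining mass $(\alpha-\alpha')P$ with $(\alpha-\alpha')Q$ through an optimal coupling of $P$ and $Q$. The cost of this coupling is $(\alpha-\alpha')W_2^2(P,Q)$, which gives the stated bound with exponent $1/2$. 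The density assumptions on $Q$ and the $P^k$ are not needed for this proposition.
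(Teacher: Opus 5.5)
Your proposal is correct and follows the paper's proof. The paper likewise reduces the first two properties, exactly as in the variance case, to weak convergence on the compact interval and to Remark~\ref{rem:wasserstein_mixtures} with $p=2$, and gets the Lipschitz bound from the same factorization $|W_2^2(P,Q)-W_2^2(P',Q)|\leqslant (W_2(P,Q)+W_2(P',Q))\,W_2(P,P')\leqslant 4r_{\max}W_2(P,P')$; your explicit coupling arguments for joint convexity and for the $|\alpha-\alpha'|^{1/2}$ modulus fill in what the paper cites from \cite{Villani2009}.
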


\begin{proof}
	Assumption \eqref{eq:dist_continuity_1} and the hypothesis of Proposition \ref{prop:mixture_continuity} follow exactly as in the proof of Proposition \ref{prop:metric_continuity_variance}.
	
	Finally \(\mathfrak U(P)=-W_2^2(P,Q)\) is Lipschitz continuous w.r.t.\ \(W_2\) since
	\[
	\left|W_2^2(P,Q)-W_2^2(P',Q)\right|
	\leqslant \left(W_2(P,Q)+W_2(P',Q)\right)W_2(P,P')
	\leqslant 4r_{\max} W_2(P,P'),
	\]
	because \(W_2(\cdot,\cdot)\leqslant 2r_{\max}\) on \([-r_{\max},r_{\max}]\).
\end{proof}

For \(P\in \mathcal P([-r_{\max},r_{\max}])\) consider the monotone optimal transport map
\begin{equation}\label{eq:Tw_def}
	T(r)=Q^{-1}(P(r)),\qquad r\in[-r_{\max},r_{\max}].
\end{equation}
A (unique up to additive constants) Kantorovich potential for the optimal transport problem from $P$ to $Q$ is then given by 
\begin{equation}\label{eq:phiw_def}
	\phi(r)=\int_0^r \left[s-T(s)\right]ds.
\end{equation}

\begin{proposition}\label{prop:A1_wasserstein}
	Assumption~\ref{ass:A1} holds with the influence function
	\begin{equation}\label{eq:IF_w2}
		\IF[P](r)
		=
		-\phi(r)+\mathbb E_{P}[\phi(\mathsf R)].
	\end{equation}    
\end{proposition}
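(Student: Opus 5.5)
We work with the one-dimensional quantile representation of the Wasserstein-2 distance. Write
\[
W_2^2(P,Q)=\int_0^1 \big(P^{-1}(u)-Q^{-1}(u)\big)^2du .
\]
The plan is to compute the directional derivative of $\epsilon\mapsto W_2^2(P^w+\epsilon H,Q)$ at $\epsilon=0$. Here $H=\sum_i v_iP^i$ with $\mathbf v\in T_w\Delta$, so $P_\epsilon=P^w+\epsilon H$ is the mixture $P^{w+\epsilon\mathbf v}$ and is a genuine probability measure for small $|\epsilon|$. We then identify the derivative with $\int \phi\,dH$, where $\phi$ is the potential in \eqref{eq:phiw_def} built from $P=P^w$. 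Once we show
\[
\frac{d}{d\epsilon}W_2^2(P_\epsilon,Q)\big|_{\epsilon=0}=2\int\phi\,dH ,
\]
the claim follows up to normalisation. Recentering by $\mathbb E_{P}[\phi(\mathsf R)]$ is harmless because $H(\mathbb R)=\sum_i v_i=0$. The recentering also makes $\mathbb E_{P}[\IF[P]]=0$. Integrability is automatic since $\phi$ is bounded on $[-r_{\max},r_{\max}]$. The factor $2$ versus $1$ must be checked against the normalisation in \eqref{eq:phiw_def}. If the constant does not match, $\phi$ should be read as the potential of $\tfrac12|x-y|^2$; I would make this explicit.

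The derivative computation goes in three steps.
\begin{enumerate}
\item The densities $p_k\geqslant m>0$ make the cdf $P_\epsilon$ strictly increasing and $C^1$ on $[-r_{\max},r_{\max}]$, with density $p_\epsilon\geqslant m$ uniformly for small $\epsilon$.
\item Differentiate the quantile function via the implicit function theorem: $\partial_\epsilon P_\epsilon^{-1}(u)=-H(x)/p_\epsilon(x)$ at $x=P_\epsilon^{-1}(u)$, where $H(x)$ also denotes the cdf of the signed measure.
\item Differentiate under the integral sign. This is justified because the integrand is uniformly bounded, since quantiles lie in $[-r_{\max},r_{\max}]$ and $p_\epsilon\geqslant m$. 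We get
\[
\frac{d}{d\epsilon}W_2^2
= -2\int_0^1 \big(P^{-1}(u)-Q^{-1}(u)\big)\frac{H(P^{-1}(u))}{p(P^{-1}(u))}du .
\]
\end{enumerate}
Substituting $u=P(x)$, $du=p(x)dx$, and using $Q^{-1}(P(x))=T(x)$, this becomes
\[
-2\int_{-r_{\max}}^{r_{\max}}(x-T(x))H(x)\,dx=-2\int \phi'(x)H(x)\,dx .
\]

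The last step is integration by parts. $H$ has zero total mass and compact support, so $H(r_{\max})=0$ and $H(-r_{\max}^-)=0$. Hence
\[
-\int\phi' H\,dx=\int\phi\,dH .
\]
Therefore $\frac{d}{d\epsilon}\mathfrak U(P_\epsilon)=-2\int\phi\,dH$ (up to the normalisation noted above), which is $\int \IF[P](r)\,dH(r)$ with the stated influence function, since the added constant integrates to zero against $H$.

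The main obstacle is rigorously justifying the differentiation under the integral and the quantile derivative uniformly in $u$. This includes $u$ near $0$ and $1$, where the support endpoints sit, and points where $Q^{-1}$ might be irregular. The lower bounds $m$ and $m_Q$ are exactly what control this: $m_Q$ makes $Q^{-1}$ Lipschitz, so $T$ is continuous and $\phi$ is $C^1$. An alternative route I would keep as a fallback is Kantorovich duality. That argument gives $\liminf$ and $\limsup$ bounds on the difference quotient, using $\phi$ as a dual candidate for the lower bound. The upper bound then uses uniqueness of potentials to pass to the limit of the optimal potentials $\phi_\epsilon\to\phi$ uniformly, which again follows from the density lower bounds.
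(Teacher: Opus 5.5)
Your one-dimensional quantile computation is correct, and it takes a genuinely different route from the paper. The paper proves nothing directly: it cites the general first-variation formula for optimal transport costs (\cite[Proposition 7.17]{santambrogio2015}, \cite[Lemma 21]{manole2024plugin}), which is essentially the duality-plus-uniqueness-of-potentials argument you keep as a fallback. Your route is more elementary and makes the constant explicit, and that is exactly where the proposal fails. You correctly obtain $\frac{d}{d\epsilon}\mathfrak U(P_\epsilon)|_{\epsilon=0}=-2\int\phi\,dH$. You then assert that this equals $\int\IF[P]\,dH$ for the stated influence function, but the stated influence function gives $-\int\phi\,dH$.

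Reading $\phi$ as the potential for $\tfrac12|x-y|^2$ does not help. By \eqref{eq:phiw_def}, $\phi'(r)=r-T(r)$, which already is the Kantorovich potential for that cost. The mismatch persists because $\mathfrak U$ uses $W_2^2$, not $\tfrac12W_2^2$. The factor $2$ is genuine. For example, if $P$ is $Q$ translated by $a$, then $T(r)=r-a$ and $\phi(r)=ar$. Translating $P$ by a further $\epsilon$ gives $W_2^2=(a+\epsilon)^2$, with derivative $2a$. Meanwhile $\int\phi\,dH=a$ for the corresponding $H$, which has zero mass and unit first moment. So your argument establishes Assumption~\ref{ass:A1} with
\[
\IF[P](r)=-2\phi(r)+2\,\mathbb E_P[\phi(\mathsf R)].
\]
The statement as printed is therefore off by a factor $2$. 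It holds for $\mathfrak U=-\tfrac12W_2^2$, or with $\phi$ taken as the potential for the cost $|x-y|^2$, i.e.\ $\phi'=2(r-T(r))$. The first-variation theorem pairs $\mathcal T_c(\cdot,Q)$ with the potential for the \emph{same} cost $c$; pairing the $\tfrac12|x-y|^2$ potential with $W_2^2$ is where the factor is lost. You should state the corrected constant rather than hedge.

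A smaller point: the hypotheses give densities $p_k\geqslant m$, not continuous ones. So $P_\epsilon$ need not be $C^1$, and the implicit function theorem does not apply as stated. You can argue directly instead.
\begin{itemize}
\item With $x_\epsilon=P_\epsilon^{-1}(u)$ and $x_0=P^{-1}(u)$, one has $P(x_\epsilon)-P(x_0)=-\epsilon H(x_\epsilon)$, hence $|x_\epsilon-x_0|\leqslant|\epsilon|\,\|H\|_\infty/m$.
\item Since $H$ is continuous, $(x_\epsilon-x_0)/\epsilon\to-H(x_0)/p(x_0)$ whenever $P$ is differentiable at $x_0$. This holds for a.e.\ $u$, because the absolutely continuous $P$ maps null sets to null sets.
\item Dominated convergence, with the bound $4r_{\max}\|H\|_\infty/m$ on the difference quotients, then gives your formula.
\end{itemize}
The substitution $u=P(x)$ and the integration by parts need only absolute continuity of $P$, $\phi$ and $H$. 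So $m_Q$ plays no role in this proposition; it is needed only for Assumption~\ref{ass:A6}.
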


\begin{proof}
	This is shown in \cite[Lemma 21]{manole2024plugin}. See also \cite[Proposition 7.17]{santambrogio2015} or \cite[Theorem 3.1.14]{panaretos2020invitation}.
\end{proof}
\begin{proposition}\label{prop:A2_wasserstein}
	The function \(U\) satisfies Assumption \ref{ass:A2}.
\end{proposition}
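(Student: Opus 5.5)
Assumption~\ref{ass:A2} asks for concavity of $U(w)=-W_2^2(P^w,Q)$ on $\Delta$, together with the supergradient inequality $U(u)\leqslant U(w)+\langle \mathbf g_C(w),u-w\rangle$ for $u,w\in\mathring\Delta$. I will derive both from one-dimensional optimal transport via quantile functions.

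\emph{Step 1: concavity.} On the real line, $W_2^2(P,Q)=\int_0^1 |P^{-1}(s)-Q^{-1}(s)|^2\,ds$. Alternatively, the Kantorovich dual writes $W_2^2(P,Q)=\sup_{(\varphi,\psi)}\{\int\varphi\,dP+\int\psi\,dQ\}$, a supremum of functionals that are affine in $P$. Hence $P\mapsto W_2^2(P,Q)$ is convex on $\mathcal P([-r_{\max},r_{\max}])$. Because $w\mapsto P^w$ is affine, $w\mapsto W_2^2(P^w,Q)$ is convex and $U$ is concave on $\Delta$. This is the remark already made in Section~\ref{sec:ex_wasserstein_utility}; I will only state it precisely through duality.

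\emph{Step 2: supergradient inequality.} Fix $w\in\mathring\Delta$ and $u\in\Delta$, and set $g(\epsilon)=U(w+\epsilon(u-w))$ for $\epsilon\in[0,1]$. By Step 1, $g$ is concave, so $g(1)\leqslant g(0)+g'(0^+)$. Since $u-w\in T_w\Delta$ and $w$ is interior, Proposition~\ref{prop:A1_wasserstein} gives Assumption~\ref{ass:A1} at $w$. Proposition~\ref{prop:directional-derivative} and Lemma~\ref{lem:centered-gradient} then give $g'(0)=\langle\mathbf g_C(w),u-w\rangle$, which yields the inequality. I could also argue directly from duality: take the optimal potential pair for $(P^w,Q)$, namely $\phi$ from \eqref{eq:phiw_def} in the form $|r|^2/2-\phi$, together with its conjugate. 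It is feasible for $(P^u,Q)$, so
\[
W_2^2(P^u,Q)\geqslant W_2^2(P^w,Q)+\int \bigl(\text{potential}\bigr)\,d(P^u-P^w).
\]
The potential equals $-\IF[P^w]$ up to constants and scaling, and constants integrate to zero against the signed measure $P^u-P^w$. Rearranging gives exactly the claimed inequality.

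\emph{Main obstacle.} The delicate point is matching normalizations: the factor $2$ between $W_2^2$ and the potential convention in \eqref{eq:phiw_def}, and the sign of $\IF$. I will sidestep this by relying on the directional-derivative route in Step 2, which needs only the validity of Assumption~\ref{ass:A1} (Proposition~\ref{prop:A1_wasserstein}), not any explicit constant. If one-sided differentiability needs justification, the two-sided limit in Assumption~\ref{ass:A1} already provides it, because $w$ is interior and $\epsilon$ can take either sign.
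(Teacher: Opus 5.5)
Your proof is correct and is essentially the paper's argument: the paper simply invokes convexity of $P\mapsto W_2^2(P,Q)$, which gives concavity of $U$ because $w\mapsto P^w$ is affine, and your Step~2 only makes explicit the supergradient inequality that the paper treats as immediate, using the directional derivative supplied by Assumption~\ref{ass:A1}. Your caution about normalizations is well placed: with cost $|x-y|^2$ the first variation of $W_2^2(\cdot,Q)$ is $2\phi$ rather than $\phi$ (a quick sanity check is $Q=\delta_0$, where $W_2^2(P,\delta_0)=\mathbb E_P[\mathsf R^2]$ and $\phi(r)=r^2/2$), so \eqref{eq:IF_w2} appears to be off by a factor $2$. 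Your directional-derivative route is unaffected by this, since it only needs some influence function satisfying Assumption~\ref{ass:A1} to exist.
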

\begin{proof}
	Follows directly from the convexity of \(P\mapsto W_2^2(P,Q)\).
\end{proof}
\begin{proposition} \label{prop:A5_wasserstein}
	Assumptions~\ref{ass:A6} and \ref{ass:A7} hold with $L=2(r_{\max}+1/m_Q)$ and $C_d=2/m$. 
\end{proposition}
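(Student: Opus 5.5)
We prove the two assumptions separately, beginning with Assumption~\ref{ass:A7} because it is a one-dimensional quantile computation. On the real line, $W_2^2(P,P')=\int_0^1 |P^{-1}(u)-P'^{-1}(u)|^2du$. The arm density satisfies $p_k\geqslant m$, so the quantile function $(P^k)^{-1}$ is $1/m$-Lipschitz. The plan is therefore to compare $W_2$ with $W_1$ in a way adapted to this lower density bound. Write $\mathsf Q_{k,n}^{-1}(u)-(P^k)^{-1}(u)$ and change variables $u=P^k(x)$. Since $(P^k)^{-1}$ has slope at most $1/m$, the horizontal distance between quantiles is at most $(1/m)$ times the vertical distance between cdfs, up to a boundary correction. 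This gives a bound of the form
\[
W_2^2(\mathsf Q_{k,n},P^k)\leqslant \frac{C}{m}\int |\mathsf Q_{k,n}(x)-P^k(x)|^2\,dx \quad\text{or}\quad \leqslant \frac{C}{m^2}\sup_x|\mathsf Q_{k,n}(x)-P^k(x)|^2 .
\]
We then take expectations and apply Jensen. In the first version, $\mathbb E|\mathsf Q_{k,n}(x)-P^k(x)|^2\leqslant 1/(4n)$ pointwise. In the second version, the DKW inequality gives $\mathbb E\sup_x|\mathsf Q_{k,n}(x)-P^k(x)|^2\lesssim 1/n$. The target is $\mathbb E W_2\leqslant 2/(m\sqrt n)$, and the DKW route with $\mathbb E\|\mathsf Q_{k,n}-P^k\|_\infty^2\leqslant \pi^2/(6n)\cdot$const is the one most likely to reproduce the constant $2/m$. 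The precise constant depends on this step, so we will tune it there.

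For Assumption~\ref{ass:A6} we use \eqref{eq:IF_w2} together with the decomposition $\mathsf E_t=\IF[\widehat{\mathsf P}]-\IF[P]$. With $\phi_P(r)=\int_0^r(s-T_P(s))ds$ and $T_P=Q^{-1}\circ P$, we have
\[
\phi_{\widehat P}(r)-\phi_P(r)=\int_0^r\bigl(Q^{-1}(P(s))-Q^{-1}(\widehat P(s))\bigr)ds .
\]
Since $q\geqslant m_Q$, the function $Q^{-1}$ is $1/m_Q$-Lipschitz. Hence
\[
\|\phi_{\widehat P}-\phi_P\|_\infty\leqslant \frac1{m_Q}\int_{-r_{\max}}^{r_{\max}}|\widehat P(s)-P(s)|\,ds=\frac{W_1(\widehat P,P)}{m_Q}\leqslant \frac{W_2(\widehat P,P)}{m_Q}.
\]
The influence function also contains the centering term $\mathbb E_{\widehat P}\phi_{\widehat P}-\mathbb E_P\phi_P$. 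We split it as
\[
\mathbb E_{\widehat P}[\phi_{\widehat P}-\phi_P]+\bigl(\mathbb E_{\widehat P}\phi_P-\mathbb E_P\phi_P\bigr).
\]
The first piece is at most $W_2/m_Q$ by the estimate above. For the second piece, $\phi_P$ is Lipschitz with constant $\sup|s-T(s)|\leqslant 2r_{\max}$, so Kantorovich--Rubinstein bounds it by $2r_{\max}W_1\leqslant 2r_{\max}W_2$. Adding the three contributions gives $\|\mathsf E_t\|_\infty\leqslant (2/m_Q+2r_{\max})W_2$. This controls the $L^2(P^k)$ norm for every $k$ and yields $L=2(r_{\max}+1/m_Q)$.

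The main obstacle is the quantile comparison in Assumption~\ref{ass:A7}. The difficulty is that the empirical quantile function is not controlled by the density of $P^k$. It is cleaner to bound the difference via $|(P^k)^{-1}(u)-(P^k)^{-1}(\mathsf Q_{k,n}(\mathsf Q_{k,n}^{-1}(u)))|$, evaluated at the empirical quantile, and then to use the $1/m$-Lipschitz property of $(P^k)^{-1}$. This step is where we will verify that the resulting constant is $2/m$.
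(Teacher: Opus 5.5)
Your argument for Assumption~\ref{ass:A6} is the paper's. You use the same bound $\|\phi_{\widehat P}-\phi_P\|_\infty\leqslant W_1(\widehat P,P)/m_Q$, coming from the $1/m_Q$-Lipschitz quantile $Q^{-1}$, and the same three-term split of the centering constant. You add and subtract $\mathbb E_{\widehat P}[\phi_P]$ where the paper uses $\mathbb E_P[\phi']$; this is immaterial, since both potentials are $2r_{\max}$-Lipschitz.

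The gap is in Assumption~\ref{ass:A7}. There you never commit to an inequality or a constant, and the one concrete mechanism you offer fails. The whole content of that part is the uniform comparison
\[
\sup_{u\in(0,1)}\bigl|\mathsf Q_{k,n}^{-1}(u)-(P^k)^{-1}(u)\bigr|\leqslant \frac1m\,\sup_{x}\bigl|\mathsf Q_{k,n}(x)-P^k(x)\bigr| ,
\]
which the paper states and then combines with $W_2\leqslant\|\mathsf Q_{k,n}^{-1}-(P^k)^{-1}\|_{L^\infty(0,1)}$ and DKW. Your route through $(P^k)^{-1}\bigl(\mathsf Q_{k,n}(\mathsf Q_{k,n}^{-1}(u))\bigr)$ does not reach this. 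Almost surely $\mathsf Q_{k,n}(\mathsf Q_{k,n}^{-1}(u))=\lceil nu\rceil/n$, so the Lipschitz property of $(P^k)^{-1}$ only says that this point lies within $1/(mn)$ of $(P^k)^{-1}(u)$. That point is a true quantile, not the empirical quantile, so nothing is learned about $\mathsf Q_{k,n}^{-1}(u)$.

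What is needed is the \emph{true} cdf evaluated at the empirical quantile. Set $x=\mathsf Q_{k,n}^{-1}(u)\in[-r_{\max},r_{\max}]$ and $y=(P^k)^{-1}(u)$. The density bound gives $|x-y|\leqslant\frac1m|P^k(x)-u|$. The sandwich $\mathsf Q_{k,n}(x^-)\leqslant u\leqslant\mathsf Q_{k,n}(x)$, together with continuity of $P^k$, gives $|P^k(x)-u|\leqslant\|\mathsf Q_{k,n}-P^k\|_\infty$. No boundary correction arises, because all quantiles lie in $[-r_{\max},r_{\max}]$, where $p_k\geqslant m$. Massart's form of DKW then gives $\mathbb E\|\mathsf Q_{k,n}-P^k\|_\infty\leqslant\int_0^\infty 2e^{-2n\varepsilon^2}\,d\varepsilon=\sqrt{\pi/(2n)}\leqslant 2/\sqrt n$. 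This is $C_d=2/m$, with nothing to tune.

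Incidentally, the first version you set aside also works, with constant $C=1$ and a sharper result. Take $x$ strictly between $y_u=(P^k)^{-1}(u)$ and $\mathsf Q_{k,n}^{-1}(u)$. Then the level $u$ lies between $P^k(x)$ and $\mathsf Q_{k,n}(x)$, and $|x-y_u|\leqslant\frac1m|P^k(x)-u|$. Write the squared horizontal gap as $2\int|x-y_u|\,dx$ over that interval and apply Fubini; this gives
\[
W_2^2(\mathsf Q_{k,n},P^k)\leqslant\frac1m\int_{-r_{\max}}^{r_{\max}}\bigl(\mathsf Q_{k,n}(x)-P^k(x)\bigr)^2dx .
\]
Combine this with $\mathbb E|\mathsf Q_{k,n}(x)-P^k(x)|^2\leqslant 1/(4n)$, Jensen, and $2mr_{\max}\leqslant 1$ (since $p_k\geqslant m$ integrates to one on $[-r_{\max},r_{\max}]$). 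The result is $\mathbb E\,W_2(\mathsf Q_{k,n},P^k)\leqslant\sqrt{r_{\max}/(2mn)}\leqslant 1/(2m\sqrt n)$, with no DKW at all. Either route closes Assumption~\ref{ass:A7}, but as written your proposal carries out neither.
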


\begin{proof}
	We prove Assumption~\ref{ass:A6}.
	Let $P$, $P'$ and $Q$ be supported on $[-r_{\max},r_{\max}]$ with densities $p(r),p'(r)\geqslant m$ and $q(r)\geqslant m_Q$.
	Consider the monotone optimal transport maps $T(r)=Q^{-1}(P(r)),$ $T'(r)=Q^{-1}(P'(r))$, $r\in[-r_{\max},r_{\max}]$ as in \eqref{eq:Tw_def}. Let $\phi,\phi'$ be the corresponding Kantorovich potentials defined by \eqref{eq:phiw_def}. Then 
	\begin{equation}\label{eq:phi_sup_bound_W1}
		\|\phi'-\phi\|_\infty
		\leqslant
		\int_{-r_{\max}}^{r_{\max}}|T(s)-T'(s)|ds.
	\end{equation}
	Since $q(r)\geqslant m_Q$ the quantile map $Q^{-1}$ is $1/m_Q$-Lipschitz and thus
	\[
	|T(s)-T'(s)|
	\leqslant
	\frac{1}{m_Q}|P(s)-P'(s)|.
	\]
	Integrating over $[-r_{\max},r_{\max}]$ and using \eqref{eq:phi_sup_bound_W1} and \eqref{eq:L1normW1} gives
	\begin{equation}\label{eq:phi_sup_bound_W2}
		\|\phi'-\phi\|_\infty
		\leqslant
		\frac{1}{m_Q}W_1(P,P')
		\leqslant
		\frac{1}{m_Q}W_2(P,P').
	\end{equation}
	Analogously to \eqref{eq:IF_w2}, consider the error function
	\[
	E(r)
	=
	-\left(\phi'(r)-\phi(r)\right) + \left(\mathbb E_{P'}\left[\phi'(\mathsf R)\right]-\mathbb E_{P}\left[\phi\left(\mathsf R\right)\right]\right).
	\]
	Adding and subtracting \(\mathbb E_{P}\left[\phi'\left(\mathsf R\right)\right]\) yields
	\[
	|E(r)|
	\leqslant
	\left|\phi'(r)-\phi(r)\right|
	+
	\left|\mathbb E_{P}\left[\phi'\left(\mathsf R\right)-\phi(\mathsf R)\right]\right|
	+
	\left|\mathbb E_{P'}\left[\phi'\left(\mathsf R\right)\right]-\mathbb E_{P}\left[\phi'\left(\mathsf R\right)\right]\right|.
	\]
	The first two terms are each bounded by $\|\phi'-\phi\|_\infty$.
	
	For the last term, since $\phi'$ is $2r_{\max}$-Lipschitz on $[-r_{\max},r_{\max}]$, using the
	Kantorovich-Rubinstein duality we get
	\[
	\left|\mathbb E_{P'}\left[\phi'\left(\mathsf R\right)\right]-\mathbb E_{P}\left[\phi'\left(\mathsf R\right)\right]\right|
	\leqslant
	2r_{\max} W_1(P,P')
	\leqslant
	2r_{\max} W_2(P,P').
	\]
	Combining this with \eqref{eq:phi_sup_bound_W2} gives
	$$\|E\|_\infty \leqslant 2\|\phi'-\phi\|_\infty + 2r_{\max} W_2(P,P')\leqslant 2\left(r_{\max}+\frac{1}{m_Q}\right) W_2(P,P').$$
	Therefore, for every $k$ we have
	\(
	\|E(\mathsf R)\|_{L^2(P^k)}
	\leqslant
	\|E\|_\infty
	\leqslant
	L W_2(P,P')
	\)
	with \(L=2\left(r_{\max}+\frac{1}{m_Q}\right)\), which proves Assumption~\ref{ass:A6}.
	
	We now prove Assumption~\ref{ass:A7}.
	Let $\mathsf Q_{k,n}$ be the empirical law of $n$ iid samples from $P^k$.
	Since $p_k(r)\geqslant m$ the quantile map is $1/m$-Lipschitz and therefore
	\(
	\|Q_{k,n}^{-1}-(P^k)^{-1}\|_\infty
	\leqslant (1/m) \|Q_{k,n}-P^k\|_\infty
	\).
	Hence
	\(
	\mathbb E\left[W_2(\mathsf Q_{k,n},P^k)\right]
	\leqslant (1/m) \mathbb E\left[\|Q_{k,n}-P^k\|_\infty\right]
	\).
	As a consequence of Dvoretzky-Kiefer-Wolfowitz inequality  (see \cite{DvoretzkyKieferWolfowitz1956,massart1990tight}) we have
	\(\mathbb E\left[W_2(\mathsf Q_{k,n},P^k)\right]\leqslant (2/m) n^{-1/2}\) so Assumption~\ref{ass:A7} holds with $C_d=2/m$.
\end{proof}

\begin{remark}
	The lower bound assumption \(p_k\ge m>0\) is used only to verify Assumption~\ref{ass:A7} with the rate $n^{-1/2}$ for
	\(
	\mathbb E\left[W_2(\mathsf Q_{k,n},P^k)\right]
	\).
	If this assumption is removed, one still obtains a weaker version of Assumption~\ref{ass:A7} on bounded support by comparing \(W_2\) and \(W_1\). Indeed, since the rewards are supported on \([-r_{\max},r_{\max}]\), we have
	\(
	W_2\le (2r_{\max})^{1/2}W_1^{1/2}
	\).
	Combining this with the one-dimensional estimate $\mathbb E\big[W_1(\mathsf Q_{k,n},P^k)\big]\le r_{\max}n^{-1/2}$
	yields
	$\mathbb E\big[W_2(\mathsf Q_{k,n},P^k)\big]\le \sqrt{2}\,r_{\max}\,n^{-1/4}.$ Thus the analysis remains valid with a slower empirical rate. In particular, the term in the regret bound coming from the plug-in error becomes of order
	\(
	O(T^{3/4})
	\)
	instead of \(O(\sqrt T)\). Therefore, the lower density bound is not essential for the method itself, but it yields the sharper square-root order obtained in Proposition~\ref{prop:A5_wasserstein}.
\end{remark}

\section{Experiments}\label{sec:experiments}

We evaluate the proposed method on synthetic bandit instances designed to illustrate both the behavior of our procedure and the structure of the optimal mixtures induced by the utility functional. The experiments pursue three goals: (i) to assess how well the iterates approach the simplex maximizer, (ii) to quantify the bias introduced by the plug-in influence-function estimator, and (iii) to compare the unbiased (Exact IF) and plug-in (Estimated IF) implementations of the method. Here, the Exact IF implementation denotes the analogue of our Estimated IF Algorithm~\ref{alg:if-ascent-truncated} in which the score used in the update rule is the exact influence function $\IF\left[P^{\mathsf w_t}\right]$, rather than the plug-in estimator $\widehat{\mathsf{IF}}_{t-1}=\IF\left[\widehat{\mathsf P}^{\mathsf w_t}\right]$. The Exact IF update serves as a benchmark, as it assumes knowledge of the arm distributions.

\subsection{Experimental setup}

We consider two utility functionals, namely the Variance utility and the Wasserstein utility introduced in Sections \ref{sec:ex_variance_utility} and \ref{sec:ex_wasserstein_utility} respectively.
For both utilities we study the following four scenarios:
\begin{enumerate}
	\item \textbf{Scenario 1:} $K=2$ arms with Beta reward distributions supported on $[0,1]$ and parameters $(2,2)$ and $(4,2)$.  
	\item \textbf{Scenario 2:} $K=4$ arms with Beta reward distributions supported on $[0,1]$
	and parameters equal to $(2,8)$, $(8,2)$, $(2,2)$, and $(20,20)$.
	\item \textbf{Scenario 3:} $K=8$ arms with Beta reward distributions supported on $[0,1]$
	and parameters $(a,b)=(2,1+3k)$ for $k=0,\ldots,7$. 
	\item \textbf{Scenario 4:} $K=30$ arms with Gaussian reward distributions whose means are
	chosen at random in $[-0.5,1.5]$ and whose variances are chosen at random in $[0.08,0.35]$. 
\end{enumerate}
All reported curves represent the average over 500 independent replications (or episodes), together with their standard errors. Each method is run for $T=2000$ iterations with the truncation parameter $\gamma=0.03$. For the Wasserstein utility, in all scenarios we use the uniform distribution on $[0,1]$ as a reference distribution $Q$. The initial step size was set to $\eta_0=0.5$.\footnote{The code is available here \url{https://github.com/carrascoORT/distributional-bandits}.}

For each instance we compute in advance the offline optimizer $w_\gamma^\star$ and use it as the reference solution throughout. Our main performance diagnostic is the utility gap
\(
U(w_\gamma^\star)-U(\bar{\mathsf w}_t),
\)
where $\bar{\mathsf w}_t$ denotes the mixture averaged up to time $t$.
We also report the bias diagnostic
\(
\|\boldsymbol{\mathsf B}_t\|_\infty,
\)
which measures the error induced by replacing the exact influence function with its empirical plug-in counterpart. We estimate it by means of Monte Carlo simulation with 1000 samples. More precisely, conditionally on the current pre-update state, we repeatedly sample an action-reward pair according to the current sampling distribution and the true bandit mixture, and we compare the mean plug-in stochastic gradient with its unbiased counterpart to approximate:
\(
\boldsymbol{\mathsf B}_t
=
\mathbb E\left[\widehat{\boldsymbol{\mathsf G}}_t \mid \mathcal F_{t-1}\right]
-
\mathbb E\left[\boldsymbol{\mathsf G}_t \mid \mathcal F_{t-1}\right].
\)

\begin{figure}[t]
	\centering
	\begin{subfigure}{0.49\textwidth}
		\centering
		\includegraphics[width=\linewidth]{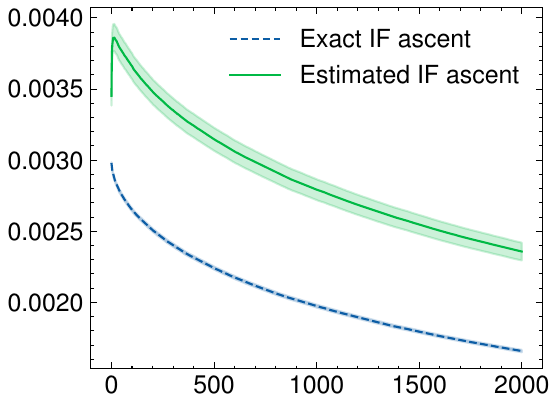}
		\caption{Scenario 1}
	\end{subfigure}
	\begin{subfigure}{0.49\textwidth}
		\centering
		\includegraphics[width=\linewidth]{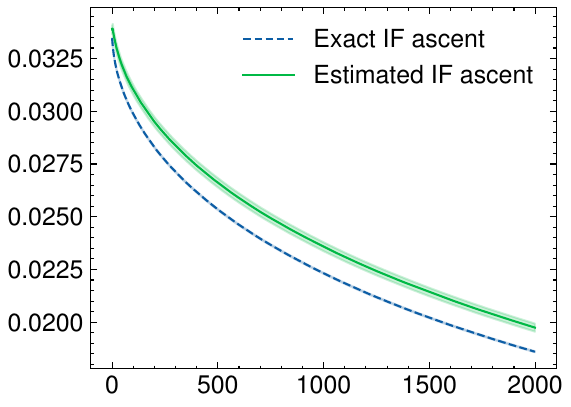}
		\caption{Scenario 2}
	\end{subfigure}
	\begin{subfigure}{0.49\textwidth}
		\centering
		\includegraphics[width=\linewidth]{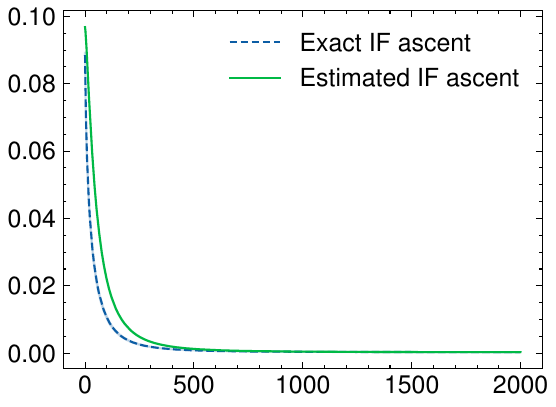}
		\caption{Scenario 3}
	\end{subfigure}
	\begin{subfigure}{0.49\textwidth}
		\centering
		\includegraphics[width=\linewidth]{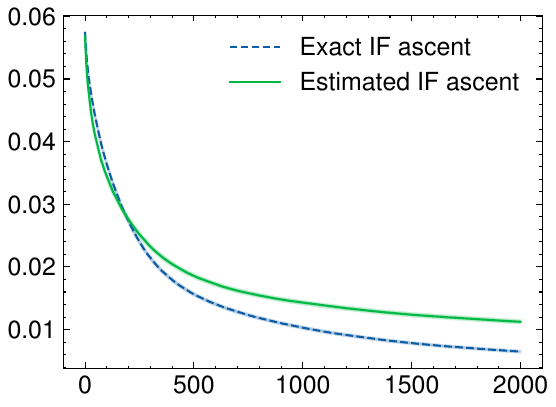}
		\caption{Scenario 4}
	\end{subfigure}
	\caption{Utility gap for the variance utility across Scenarios 1--4. Each panel reports
		the evolution of $U(w_\gamma^\star)-U(\bar w_t)$ as a function of $t$ for exact IF ascent
		and plug-in IF ascent.}
	\label{fig:var-gap}
\end{figure}

For the Estimated IF implementation, we regularize the plug-in score by means of a count-prior of size $\alpha_0>0$. In the variance case, this yields the shrunk moment estimators
\[
\widehat{\mathsf \mu}_{t,k}=\frac{\mathsf S_{t,k}+\alpha_0 m_0}{\mathsf N_{t,k}+\alpha_0},
\qquad
\widehat{\mathsf m}^{(2)}_{t,k}=\frac{\mathsf S^{(2)}_{t,k}+\alpha_0 s_0}{\mathsf N_{t,k}+\alpha_0},
\]
where $m_0$ and $s_0$ are the prior mean and prior second moment. In the Wasserstein case we use the regularized estimator
\[
\mathsf F^k_t=\frac{\mathsf N_{t,k}\mathsf P_t^k+\alpha_0 Q}{\mathsf N_{t,k}+\alpha_0}.
\]
This regularization stabilizes the plug-in score when some arms have only been sampled a few times.

\subsection{Results and discusion}

Figure~\ref{fig:var-gap} reports the evolution of the utility gap for the Variance utility in Scenarios~1--4 for both the Exact IF and the Estimated IF implementations. Figure~\ref{fig:var-bias} reports the corresponding bias diagnostic for the Estimated IF implementation. Figure~\ref{fig:wass-gap} reports the corresponding utility gap for the Wasserstein utility across Scenarios~1--4; and Figure~\ref{fig:wass-bias} presents the associated bias diagnostic. In the Wasserstein case, to complement these averaged time-evolution diagnostics, Figure~\ref{fig:wass-diagnostic} reports representative end-of-run distributional diagnostics comparing the exact optimum mixture $P_{w_\gamma^\star}$, the learned mixture $P_{\bar{\mathsf w}_T}$, and the learned mixture of arm-wise empirical distributions $\widehat{\mathsf{P}}^{\mathsf w_T}$.

\begin{figure}[t]
	\centering
	\begin{subfigure}{0.49\textwidth}
		\centering
		\includegraphics[width=\linewidth]{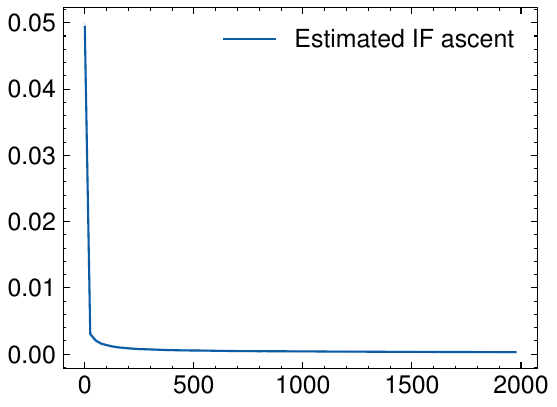}
		\caption{Scenario 1}
	\end{subfigure}
	\begin{subfigure}{0.49\textwidth}
		\centering
		\includegraphics[width=\linewidth]{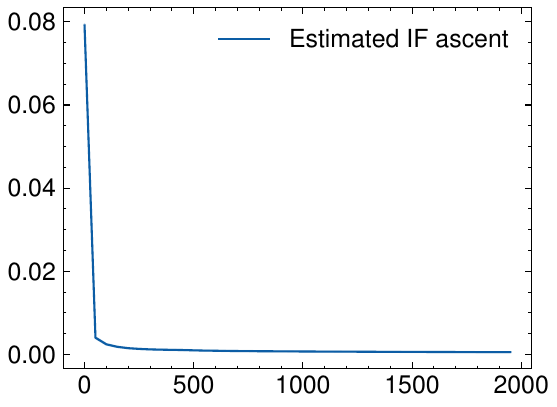}
		\caption{Scenario 2}
	\end{subfigure}
	\begin{subfigure}{0.49\textwidth}
		\centering
		\includegraphics[width=\linewidth]{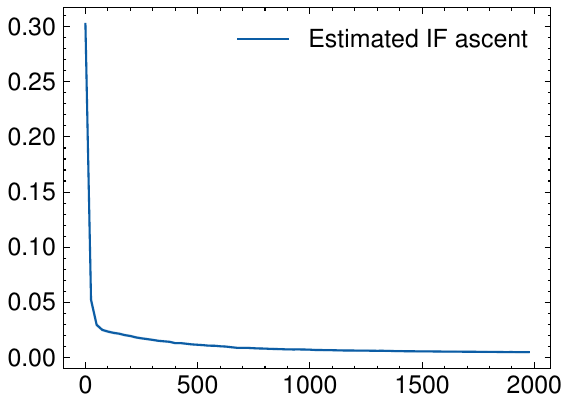}
		\caption{Scenario 3}
	\end{subfigure}
	\begin{subfigure}{0.49\textwidth}
		\centering
		\includegraphics[width=\linewidth]{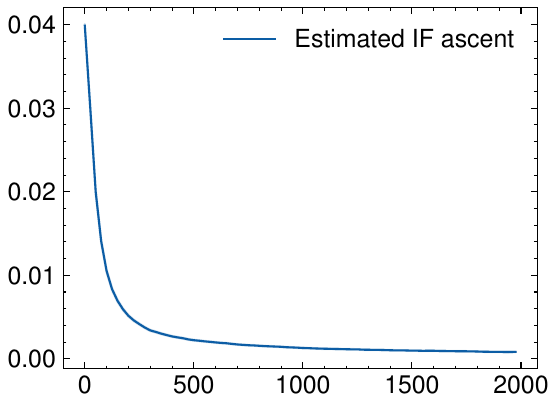}
		\caption{Scenario 4}
	\end{subfigure}
	\caption{Bias diagnostic for the variance utility across Scenarios 1--4. Each panel reports
		the evolution of $\|\boldsymbol{\mathsf B}_t\|_\infty$ for the plug-in IF estimator.}
	\label{fig:var-bias}
\end{figure}

\begin{figure}[t]
	\centering
	\begin{subfigure}{0.49\textwidth}
		\centering
		\includegraphics[width=\linewidth]{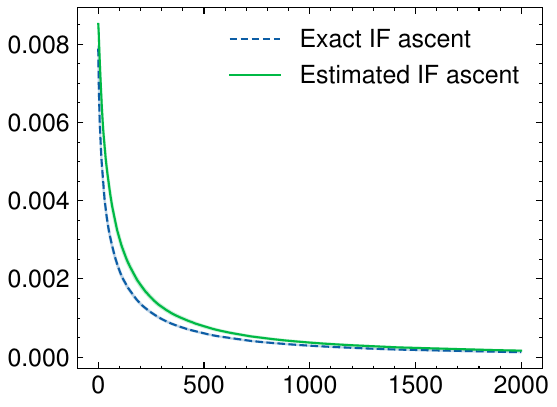}
		\caption{Scenario 1}
	\end{subfigure}
	\begin{subfigure}{0.49\textwidth}
		\centering
		\includegraphics[width=\linewidth]{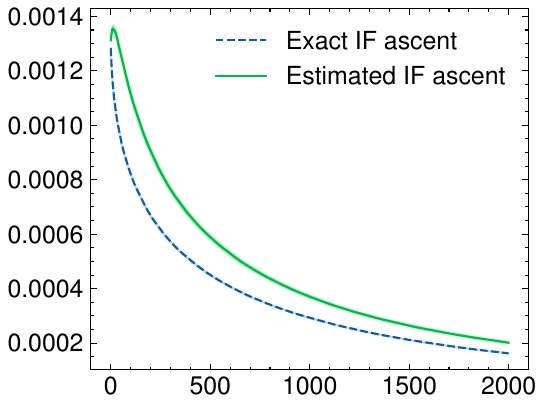}
		\caption{Scenario 2}
	\end{subfigure}
	\begin{subfigure}{0.49\textwidth}
		\centering
		\includegraphics[width=\linewidth]{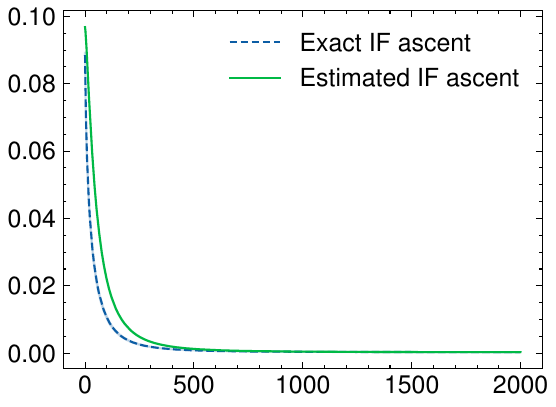}
		\caption{Scenario 3}
	\end{subfigure}
	\begin{subfigure}{0.49\textwidth}
		\centering
		\includegraphics[width=\linewidth]{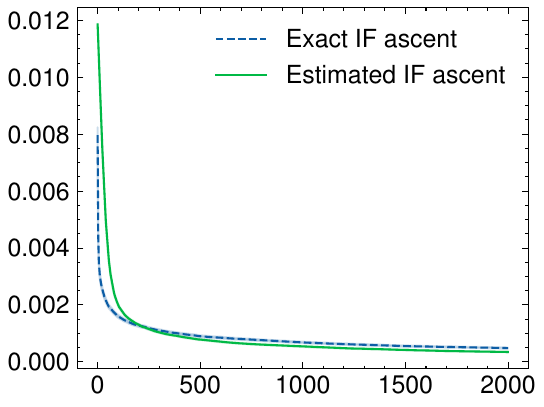}
		\caption{Scenario 4}
	\end{subfigure}
	\caption{Utility gap for the Wasserstein utility across Scenarios 1--4. Each panel reports
		the evolution of $U(w_\gamma^\star)-U(\bar w_t)$ as a function of $t$ for exact IF ascent
		and plug-in IF ascent.}
	\label{fig:wass-gap}
\end{figure}

\begin{figure}[t]
	\centering
	\begin{subfigure}{0.49\textwidth}
		\centering
		\includegraphics[width=\linewidth]{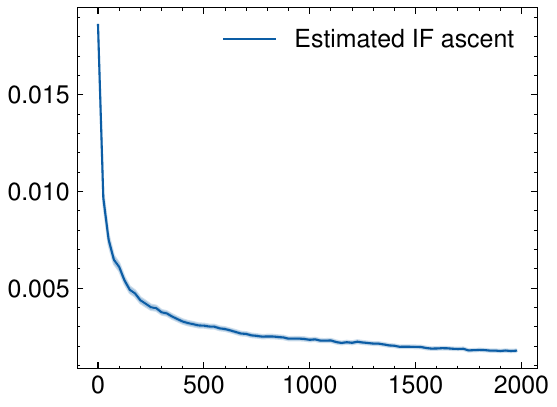}
		\caption{Scenario 1}
	\end{subfigure}
	\begin{subfigure}{0.49\textwidth}
		\centering
		\includegraphics[width=\linewidth]{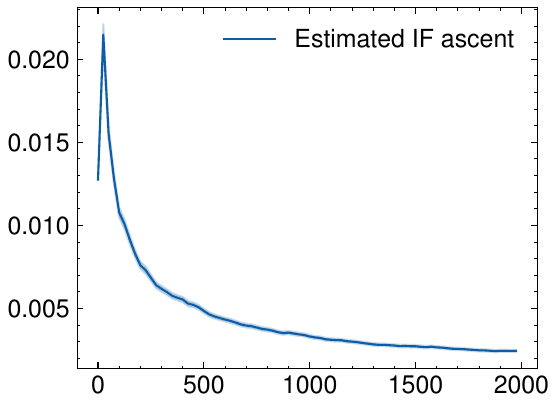}
		\caption{Scenario 2}
	\end{subfigure}
	\begin{subfigure}{0.49\textwidth}
		\centering
		\includegraphics[width=\linewidth]{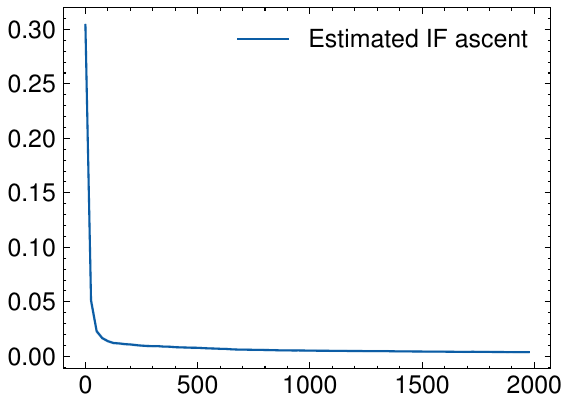}
		\caption{Scenario 3}
	\end{subfigure}
	\begin{subfigure}{0.49\textwidth}
		\centering
		\includegraphics[width=\linewidth]{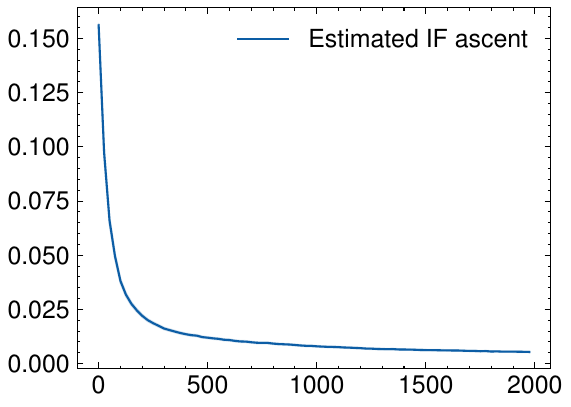}
		\caption{Scenario 4}
	\end{subfigure}
	\caption{Bias diagnostic for the Wasserstein utility across Scenarios 1--4. Each panel
		reports the evolution of $\|\boldsymbol{\mathsf B}_t\|_\infty$ for the plug-in IF estimator.}
	\label{fig:wass-bias}
\end{figure}

\begin{figure}[t]
	\centering
	\begin{subfigure}{0.49\textwidth}
		\centering
		\includegraphics[width=\linewidth]{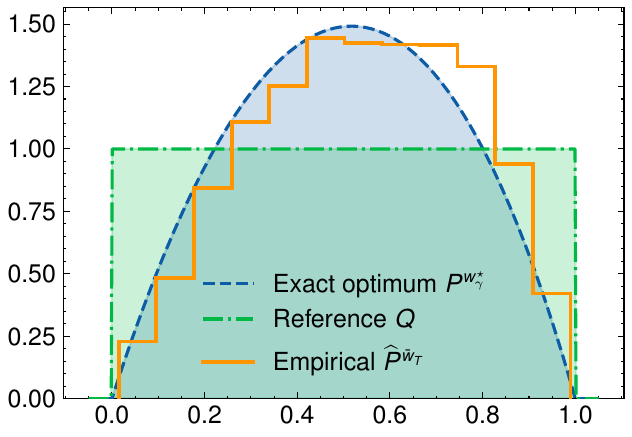}
		\caption{Scenario 1}
	\end{subfigure}
	\centering
	\begin{subfigure}{0.49\textwidth}
		\centering
		\includegraphics[width=\linewidth]{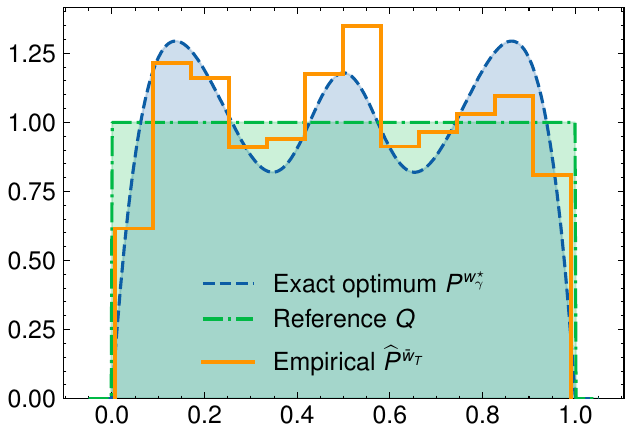}
		\caption{Scenario 2}
	\end{subfigure}
	\begin{subfigure}{0.49\textwidth}
		\centering
		\includegraphics[width=\linewidth]{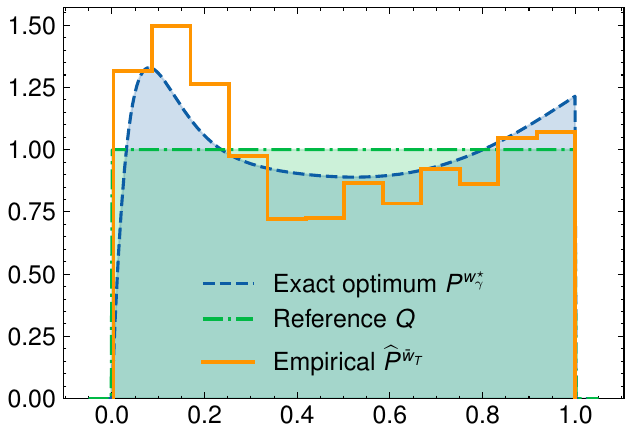}
		\caption{Scenario 3}
	\end{subfigure}
	\begin{subfigure}{0.49\textwidth}
		\centering
		\includegraphics[width=\linewidth]{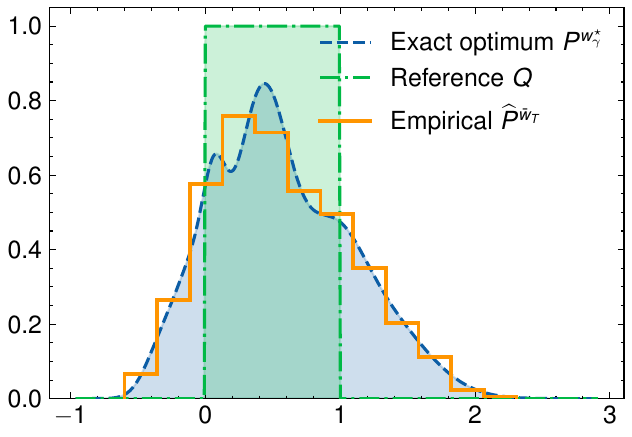}
		\caption{Scenario 4}
	\end{subfigure}
	\caption{Distributional diagnostic for the Wasserstein utility. Each panel compares the
		oracle mixture distribution $P_{w_\gamma^\star}$, the learned mixture $P_{\bar w_T}$, and
		the empirical distribution $\widehat P_T$ obtained from the observed rewards. The oracle
		and learned mixtures are shown through their densities, while $\widehat P_T$ is represented
		by a kernel density estimate.}
	\label{fig:wass-diagnostic}
\end{figure}

The experiments support three main conclusions. First, for both utilities and in all four scenarios, the utility gap decreases over time for both implementations, showing that the proposed update rule consistently drives the iterates toward the truncated optimum $U_\gamma^\star$. Second, the plug-in Estimated IF implementation based on the empirical influence function is generally competitive with the Exact IF benchmark, although it typically incurs some loss. Third, the bias diagnostic $\|\boldsymbol{\mathsf B}_t\|_\infty$ explains part of the discrepancy between the two methods, but not all of it: the practical performance of the plug-in method depends not only on the magnitude of the score-estimation error, but also on the geometry of the utility and on the structure of the underlying bandit instance.

For the variance utility, Figure~\ref{fig:var-gap} shows a stable decay of the utility gap in all four scenarios. The Exact IF benchmark performs best throughout, although the separation from the Estimated IF implementation is small in some cases. This is particularly clear in Scenario~3, where the two curves become very close after the initial transient. By contrast, Scenarios~1 and~4 exhibit a more persistent gap, with Scenario~4 providing the clearest example of a sustained performance loss for the plug-in method. Scenario~2 lies in between: both methods improve steadily, but the Exact IF implementation retains a visible advantage over the full horizon.

In all four variance scenarios, the bias decreases rapidly after a short transient and eventually becomes small. However, the utility-gap difference between Exact IF and Estimated IF does not disappear uniformly once the bias is small. This is most visible in Scenarios~1 and~4, where a non-negligible performance gap remains despite the decay of $\|\boldsymbol{\mathsf B}_t\|_\infty$. This suggests that the error induced by plug-in estimation is not captured solely by the instantaneous magnitude of the bias diagnostic, and that noise accumulation and the underlying geometry also play a significant role.

For the Wasserstein utility, Figure~\ref{fig:wass-gap} reveals the same overall convergence pattern, but with a stronger sensitivity to score-estimation error. In Scenarios~1 and~2, the Estimated IF implementation remains close to the Exact IF benchmark, but is consistently worse. Scenario~3 displays the clearest separation at early times, although the gap narrows substantially later on. Scenario~4 is more nuanced: after a visible initial discrepancy, the plug-in method catches up and eventually becomes slightly better than the Exact IF benchmark. Taken together, these experiments show that the implementable Estimated IF method remains effective for the Wasserstein utility, although its performance is more scenario-dependent than in the variance case.

Figure~\ref{fig:wass-bias} supports this reading. For the Wasserstein utility, the bias is typically decays more slowly than for the variance utility, especially in Scenarios~2 and~4. This is consistent with the greater difficulty of estimating Wasserstein-based scores from finite samples. At the same time, the relation between bias magnitude and utility performance is not monotone. In particular, a comparatively large bias does not necessarily prevent good optimization performance, as illustrated by Scenario~4. Therefore, while the bias diagnostic is clearly informative, it should be interpreted as only one component in the explanation of the observed behavior.

Finally, the distributional diagnostics in Figure~\ref{fig:wass-diagnostic} show that the learned mixtures often recover the main qualitative features and tracks well the coarse shape of the true optimal mixture. These plots should therefore be interpreted qualitatively, since they illustrate representative end-of-run distributions rather than averaged trajectories. Their main role is to show that the method does not only approach the optimal utility value, but also captures approximately the distributional shape selected by the Wasserstein criterion.

Overall, the experiments show that the proposed influence-function mirror-ascent procedure is effective across a range of synthetic bandit scenarios and for two qualitatively different utility functionals. 

\section{Concluding remarks}\label{sec:conclusion}

We have introduced a general framework for stochastic multi-armed bandits whose objective is a concave statistical utility of the long-run reward distribution. The key structural observation is that, under mild continuity assumptions, the infinite-horizon problem can be reduced to the optimization of stationary mixed policies. This transforms the original sequential problem into a concave optimization problem over the probability simplex, where each sampling distribution induces a mixture of the arm reward laws.

To optimize this objective from bandit feedback, we developed an influence-function approach to gradient estimation. The resulting estimators provide first-order information for nonlinear distributional utilities and lead naturally to an entropic mirror-ascent algorithm on a truncated simplex. Our regret analysis separates the optimization error of the mirror-ascent procedure from the statistical bias introduced by plug-in estimation of the influence function. This decomposition makes explicit how the quality of the influence-function estimator affects the overall performance of the algorithm.

The variance and Wasserstein examples illustrate how the framework applies to concrete distributional objectives beyond the classical mean-reward criterion. Several extensions remain open. One natural direction is to analyze further classes of statistical utilities, including tail-sensitive, risk-sensitive, or information-theoretic criteria. Another important direction is to weaken the compact-support assumptions used in parts of the analysis. We expect that for many utilities of interest, compact support can be replaced by suitable moment, integrability, or tail conditions, possibly combined with truncation or robustification arguments. Other future directions include sharper rates for specific utilities, adaptive choices of the truncation parameter, high-probability regret bounds, and extensions to contextual or nonstationary settings. More broadly, the influence-function perspective suggests a systematic way to design bandit algorithms for statistical objectives that depend on the full reward distribution rather than on its expectation alone.

\bibliographystyle{plainnat}

\bibliography{biblio}

\end{document}